\documentclass{article}
\usepackage{subfiles} 
\PassOptionsToPackage{numbers,compress}{natbib}


     \usepackage[preprint]{neurips_2020}


\bibliographystyle{unsrtnat}

\usepackage[utf8]{inputenc} 
\usepackage[T1]{fontenc}    
\usepackage{hyperref}       
\usepackage{url}            
\usepackage{booktabs}       
\usepackage{amsfonts}       
\usepackage{nicefrac}       
\usepackage{microtype}      

\usepackage{amsmath,graphicx}
\usepackage{amsthm}
\usepackage{algorithm,algorithmic}

\usepackage{xcolor}
\usepackage{caption}
\usepackage{subcaption}
\usepackage{multirow,array}
\usepackage{rotating}

\DeclareMathOperator*{\argmax}{arg\,max}

\newcommand{\Ex}{\mathbb{E}\hspace{0.05cm}}
\newcommand{\bm}[1]{\mbox{\boldmath $#1$}}
\newcommand{\bs}{\boldsymbol}
\newcommand{\ba}{\left[\begin{array}}
	\newcommand{\ea}{\\\end{array} \right]}

\newcommand{\define}{\stackrel{\Delta}{=}}

\newtheorem{definition}{Definition}
\newtheorem{assumption}{Assumption}

\newtheorem{remark}{Remark}
\newtheorem{theorem}{Theorem}
\newtheorem{lemma}{Lemma}

\title{ISL: A novel approach for deep exploration}

%

\author{
	Lucas C. Cassano\thanks{The author is also with EPFL.} \\
	Department of Electrical and Computer Engineering\\
	University of California\\
	Los Angeles, USA\\
	\texttt{cassanolucas@ucla.edu} \\
	\And
	Ali H. Sayed \\
	School of Engineering\\
	\'Ecole Polytechnique F\'ed\'erale de Lausanne \\
	CH-1015 Lausanne, Switzerland\\
	\texttt{ali.sayed@epfl.ch}
}

\begin{document}

\maketitle

\begin{abstract}
  In this article we explore an alternative approach to address \textit{deep exploration} and we introduce the ISL algorithm, which is efficient at performing \textit{deep exploration}. Similarly to maximum entropy RL, we derive the algorithm by augmenting the traditional RL objective with a novel regularization term. A distinctive feature of our approach is that, as opposed to other works that tackle the problem of \textit{deep exploration}, in our derivation both the learning equations and the exploration-exploitation strategy are derived in tandem as the solution to a well-posed optimization problem whose minimization leads to the optimal value function. Empirically we show that our method exhibits state of the art performance on a range of challenging deep-exploration benchmarks.
\end{abstract}

\section{Introduction}
\label{introduction}

Reinforcement learning is concerned with designing algorithms that seek to maximize long term cumulative rewards by interacting with an environment of unknown dynamics. Recently, significant progress has been made in deep RL (the combination of RL algorithms with Deep Learning). In particular, several algorithms have been introduced based on the maximum entropy framework  \cite{G_learning,haarnoja2017reinforcement,pcl,sac,trust_pcl,sbeed,haarnoja2018learning}. In maximum entropy RL, the traditional RL objective is augmented with the entropy of the learned policy, which is weighted by a temperature parameter. The main benefit of the maximum RL framework is that it allows to derive algorithms that are more sample efficient (because they operate off-policy) than policy gradients algorithms like A3C \cite{a3c}, TRPO \cite{TRPO} and PPO \cite{ppo} and exhibit improved stability over algorithms based on Q-learning. Furthermore, it has been demonstrated empirically that algorithms based on the maximum entropy framework have improved exploration capabilities \cite{haarnoja2017reinforcement}.

Although these recent algorithms have shown state of the art performance in several tasks, they are not without shortcomings. In the first place, augmenting the original RL objective with the entropy bonuses biases the solution of the optimization problem. This could be solved by slowly annealing the temperature parameter, but it is unclear what is the best way to do so and how this change affects convergence speed. The second, and more important, shortcoming is that although these algorithms have improved exploration capabilities, they are still unable to perform deep exploration (i.e., they tend to perform poorly in environments with very sparse reward structures). These difficulties suggest that there may not be sufficient theoretical justification for the use of the policy's entropy as the regularizer for the RL objective, which in turn raises the question of whether another regularizer could help address the two aforementioned challenges. In this work we explore this possibility and connect the problem of deep exploration with that of maximum entropy RL. Traditionally, the problem of deriving deep exploration strategies has been treated separately from that of deriving the learning equations \cite{stadie2015incentivizing,boot_dqn,ube,burda2018large,osband2018randomized}. In this article we show that both the learning equations and the exploration strategy can be derived in tandem as the solution to a well-posed optimization problem whose minimization leads to the optimal value function. We do this by augmenting the objective function with a novel regularizer. Instead of using the policy's entropy as the regularizer (as maximum entropy RL does), we use a function which depends on the uncertainties over the agent's estimates of the $q$ values.

The contribution of this paper is the introduction of an alternative approach to address \textit{deep exploration} and the introduction of an algorithm that we refer to as the \textit{Information Seeking Learner} (ISL). This algorithm has a similar form to soft RL algorithms like SAC and SBEED, but the fundamental difference is that it explicitly estimates the uncertainties of its $q$ estimates and uses these uncertainties to drive deep exploration.

\subsection{Relation to prior work}

ISL is related to recent work on maximum entropy algorithms \cite{G_learning,haarnoja2017reinforcement,pcl,sac,trust_pcl,sbeed}. All these algorithms augment the traditional RL objective with a term that aims to maximize the entropy of the learned policy, which is weighted by a temperature parameter. The consequence of using this augmented objective is two-fold. First, it allows to derive off-policy algorithms with improved stability compared to algorithms based on Q-learning. Second, it improves the exploration properties of the algorithms. However, using this augmented objective has two main drawbacks. First, the policy to which these algorithms converge is biased away from the true optimal policy. This point can be handled by annealing the temperature parameter but this can slow down convergence and compromise exploration. Furthermore, it is unclear what the optimal schedule to perform annealing is and how it affects the conditioning of the optimization problem. Secondly, although exploration is improved, algorithms derived from this modified cost are not efficient at performing deep exploration. The reason for this is that a unique temperature parameter is used for all states. In order to perform deep exploration it is necessary to have a scheme that allows agents to learn policies that exploit more in states where the agent has high confidence in the optimal action (in order to be able to reach further novel states with high probability) and act in a more exploratory manner in unknown states. In \cite{cassano2019team} the authors explore this possibility in a tabular setting making the temperature parameter state dependent and annealing it progressively as states are explored; however, the annealing schedule is determined heuristically and it is unclear what would be an effective way to do so when using function approximation. The relation between ISL and these works is that we also augment the RL objective with an entropy based regularizer and as a consequence we derive a an algorithm that relies on a soft backup. Under our scheme, agents converge to the true optimal policy without the need for annealing any parameter and, moreover, an exploration strategy is derived that is capable of performing deep exploration.

Our paper is also related to works on deep exploration strategies. One line of research is based on posterior sampling for RL \cite{boot_dqn,ube,osband2018randomized}. The main idea in these works is to sample the value function from a posterior distribution over such function. The common point between these works and ours is that deep exploration is driven by some measure of the uncertainty on the estimated value function. One difference between these works and ours is that they rely on Posterior Sampling while our work does not. Another fundamental difference is that these works treat the exploration problem separately from the derivation of the learning rules. In other words, they use heuristics based on randomised value functions to drive exploration but use off-the-shelf learning rules like Q-learning. In our approach, we use the uncertainty over the learned value function not only to drive exploration but to formulate a new objective from which we derive a new learning rule. Another line of work relies on pseudo-counts \cite{bellemare2016unifying,ostrovski2017count}, the fundamental limitation of this approach is that it only performs good if the generalization of the density model is aligned with the task objective \cite{osband2018randomized}. Another line of research relies on IRs \cite{stadie2015incentivizing,vime,pathak2017curiosity,burda2018exploration}, these rewards are based on some heuristic based proxy that reflects the novelty of a state. ISL can also be seen as utilizing IRs, however there are two main differences with the previous works. Firstly, we use IRs that depend on the uncertainty of the $q$ function. Secondly, we never actually compute IRs, but rather we rely on a different backup to update the $q$ estimates (this point is clarified in section \ref{section:derivation}).

\section{Preliminaries}
\label{preliminaries}
We consider the problem of policy optimization within the traditional RL framework. We model our setting as a Markov Decision Process (MDP), with an MDP defined by ($\mathcal{S}$,$\mathcal{A}$,$\mathcal{P}$,$r$), where $\mathcal{S}$ is a set of states, $\mathcal{A}$ is a set of actions of size $A=|\mathcal{A}|$, $\mathcal{P}(s'|s,a)$ specifies the probability of transitioning to state $s'\in\mathcal{S}$ from state $s\in\mathcal{S}$ having taken action $a\in\mathcal{A}$, and $r:\mathcal{S}\times\mathcal{A}\times\mathcal{S}\rightarrow\mathbb{R}$ is the reward mapping $r(s,a,s')$.
\begin{assumption}\label{assumtpion:r_lims}
	$r_{\min}\leq r(s,a,s')\leq r_{\max},$ $\forall(s,a,s')$.
\end{assumption}
In this work we consider the maximization of the discounted infinite reward as the objective of the RL agent:
\begin{equation}\label{eq:opt_policy}
\pi^\dagger(a|s)=\argmax_{\pi}\Ex_{\mathcal{P},\pi}\bigg(\sum_{t=0}^{\infty}\gamma^{t}\bm{r}(\bm{s}_t,\bm{a}_t,\bm{s}_{t\hspace{-0.2mm}+\hspace{-0.2mm}1})\big|\bm{s}_0\hspace{-1mm}=\hspace{-1mm}s\bigg)
\end{equation}
where $\pi^\dagger(a|s)$ is the optimal policy, $\gamma\in[0,1)$ is the discount factor and $\bm{s}_t$ and $\bm{a}_t$ are the state and action at time $t$, respectively. We clarify that in this work, random variables are always denoted in bold font. We recall that each policy $\pi$ has an associated state value function $v^\pi(s)$ and state-action value function $q^\pi(s,a)$,\footnote{In this paper we will refer to both $v^\pi(s)$ and $q^\pi(s,a)$ as value functions indistinctly.} and that value functions corresponding to the optimal policy are given by \cite{puterman}:
\begin{align}\label{eq:optimal_value_function_definition}
	&q^{\dagger}(s,a)=r(s,a)+\gamma\Ex_{\mathcal{P}}\max_{a'}q^\dagger(\bm{s}',a'),\hspace{5mm}\pi^\dagger(a|s)=\argmax_{\pi}\sum_{a}\pi(a|s)q^{\dagger}(s,a)
\end{align}
where for convenience we defined $r(s,a)=\Ex\bm{r}(s,a,\bm{s}')$.

\section{Algorithm derivation}
\label{section:derivation}
Equations \eqref{eq:optimal_value_function_definition} are useful to derive algorithms for planning problems (i.e., problems in which the reward function and transition kernel are known) but may be unfit to derive RL algorithms because they mask the fact that the agent relies on estimated quantities (which are subject to uncertainty). Hence, in this work, we modify \eqref{eq:opt_policy} to reflect the fact that an RL agent is constrained by the uncertainty of its estimates. Intuitively, we change the goal of the agent to not just maximize the discounted cumulative rewards but also to collect information about the MDP in order to minimize the uncertainty of its estimated quantities. For this purpose, we assume that at any point in time the agent has some estimate of the optimal value function denoted by $\widehat{q}(s,a)$, which is subject to some error $\widetilde{q}(s,a)=q^\dagger(s,a)-\widehat{q}(s,a)$. We model the unknown quantities $\widetilde{q}(s,a)$ as random variables. More specifically, for each state-action pair we assume $\bs{\widetilde{q}}(s,a)$ follows a uniform probability distribution with zero mean $\bs{\widetilde{q}}(s,a)\sim\textrm{U}(0,\ell(s,a))$ such that:
\begin{align}\label{eq:l_condition}
\widetilde{q}(s,a)\in[\widehat{q}(s,a)-\ell(s,a);\widehat{q}(s,a)+\ell(s,a)]
\end{align}
We will refer to the probability density function of $\bs{\widetilde{q}}(s,a)$ as $d_{(s,a)}(\widetilde{q})$. We assume zero mean uniform distributions for the following reasons. First, if the mean were different than zero, it could be used to improve the estimate $\widehat{q}(s,a)$ as $\widehat{q}(s,a)\leftarrow\widehat{q}(s,a)+\Ex\bs{\widetilde{q}}(s,a)$ resulting in a new estimate whose corresponding error would be zero mean. Secondly, under assumption 1, we know that for any infinitely discounted MDP, a symmetric bound for the error of the value function exists in the form $-\ell(s,a)<\bs{\widetilde{q}}(s,a)<\ell(s,a)$.\footnote{This is due to the fact that the value functions are lower and upper bounded by $r_{\textrm{min}}/(1-\gamma)$ and $r_{\max}/(1-\gamma)$.} Moreover, typically there is no prior information about the error distribution between these bounds and therefore a non-informative uniform distribution becomes appropriate.

We further define the state uncertainty distribution $u_s^\pi(\widetilde{q})$ (which is given by a mixture of the state-action error distributions) and the Maximum-Uncertainty-Entropy policy $\pi^\bullet$ (which at every state chooses the action whose corresponding $\bs{\widetilde{q}}(s,a)$ has the greatest uncertainty):
\begin{align}\label{eq:bellman_error_dist}
	&u_s^\pi(\widetilde{q})=\Ex_{\pi}d_{(s,\bs{a})}(\widetilde{q}),\hspace{10mm}\pi^\bullet(a|s)=\begin{cases}
	1,\text{    if }\hspace{2mm}\ell(s,a)=\max_{a}\ell(s,a)\\
	0, \text{    else}
	\end{cases}
\end{align}
We thus define the goal of our reinforcement learning agent to be:
\begin{align}\label{eq:optimization_problem}
\pi^\star(a|s)&=\argmax_{\pi}\Ex_{\mathcal{P},\pi}\bigg(\sum_{t=0}^{\infty}\gamma^{t}\big[\bm{r}(\bm{s}_{t},\bm{a}_t,\bm{s}_{t+1})-\kappa D_{KL}(u_{\bm{s}_{t}}^\pi(\widetilde{q})||u_{\bm{s}_{t}}^\bullet(\widetilde{q}))\big]\Big|\bm{s}_0=s\bigg)
\end{align}
where $\kappa$ is a positive regularization parameter, $D_{KL}$ is the Kullback-Leibler divergence, and $u_{\bm{s}_{t}}^\bullet(\widetilde{q})$ is the state uncertainty distribution corresponding to $\pi^\bullet$. In this work we refer to $\pi^\star(a|s)$ as the \textit{uncertainty constrained} optimal policy (or \textit{uc}-optimal policy). Under this new objective, we redefine the value functions as:
\begin{subequations}\label{eq:value_function_redef}
	\begin{align}
	&q^\pi(s,a)=r(s,a)+\gamma\Ex_{\mathcal{P}}v^\pi(\bm{s}')\label{eq:value_function_redef_c}\\
	&v^\pi(s)\hspace{-1mm}=\hspace{-1mm}\Ex\hspace{-0.5mm}\bigg(\hspace{-0.5mm}\sum_{t=0}^{\infty}\gamma^{t}\big[\bm{r}(\bm{s}_{t},\bm{a}_t,\bm{s}_{t+1})\hspace{-1mm}-\hspace{-1mm}\kappa D_{KL}(u_{\bm{s}_{t}}^\pi(\widetilde{q}))\big]\big|\bm{s}_0\hspace{-1mm}=\hspace{-1mm}s\bigg)\hspace{-1mm}=\hspace{-1mm}\Ex\big(r(s,a)\hspace{-1mm}-\hspace{-1mm}\kappa D_{KL}(u_{s}^\pi(\widetilde{q}))\hspace{-1mm}+\hspace{-1mm}\gamma v^\pi(\bs{s'})\big)\label{eq:value_function_redef_b}
	\end{align}
\end{subequations}
where, with a little abuse of notation, we defined $D_{KL}(u_{s}^\pi(\widetilde{q}))\define D_{KL}(u_{s}^\pi(\widetilde{q})||u_{s}^\bullet(\widetilde{q}))$. Using \eqref{eq:value_function_redef} we can rewrite \eqref{eq:optimization_problem} as:
\begin{align}\label{eq:optimization_problem_nice}
\pi^\star&=\argmax_{\pi}\sum_{a}\pi(a|s)\widehat{q}(s,a)-\kappa D_{KL}(u_{s}^\pi(\widetilde{q}))
\end{align}
Note that the exploration-exploitation trade-off becomes explicit in \eqref{eq:optimization_problem_nice}. To maximize the first term of the summation in \eqref{eq:optimization_problem_nice}, the agent has to exploit its knowledge of $\widehat{q}$, while to maximize the second term, the agent's policy needs to match $\pi^\bullet$, which is a policy that seeks to maximize the information gathered through exploration. Since the argument being maximized in \eqref{eq:optimization_problem_nice} is differentiable with respect to $\pi(a|s)$ we can obtain a closed-form expression for $\pi^\star$. Before presenting such closed-form expression we introduce the following useful remark, lemmas and definitions.
\begin{remark}
	In the interest of simplifying equations and notation we clarify that from now on we will use $\ell_i(s)\hspace{-0.5mm}\define\hspace{-0.5mm}\ell(s,a_i)$ and even $\ell_i\hspace{-1mm}\define\hspace{-1mm}\ell(s,a_i)$ when the state $s$ is clear from the context.
\end{remark}
\begin{lemma}\label{lemma:kl_divergence}
	Assuming the actions are ordered such that $\ell_i>\ell_j\iff i>j$, the KL divergence term in \eqref{eq:optimization_problem_nice} for a given state $s$ is given by:
	\begin{align}\label{eq:dkl}
	&\sum_{n=1}^{A}(\ell_{n}\hspace{-0.5mm}-\hspace{-0.5mm}\ell_{n-1})\left(\sum_{k=n}^A\frac{\pi(a_k|s)}{\ell_k}\right)\hspace{-0.5mm}\log\hspace{-0.7mm}\left[\sum_{b=n}^{A}\frac{\pi(b|s)\ell_A}{\ell_{b}}\right]
	\end{align}
\end{lemma}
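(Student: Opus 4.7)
The plan is a direct computation of the KL divergence by partitioning the common support $[-\ell_A,\ell_A]$ into rings on which both $u_s^\pi$ and $u_s^\bullet$ are piecewise constant.

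First I would make the densities explicit. Since each $\bs{\widetilde q}(s,a_k)$ is uniform and zero-mean on an interval of half-width $\ell_k$, we have $d_{(s,a_k)}(\widetilde q)=\frac{1}{2\ell_k}\mathbf{1}\{|\widetilde q|\le \ell_k\}$. From the mixture definition,
\begin{equation*}
u_s^\pi(\widetilde q)=\sum_{k=1}^A \pi(a_k|s)\,\frac{1}{2\ell_k}\mathbf{1}\{|\widetilde q|\le \ell_k\},
\qquad
u_s^\bullet(\widetilde q)=\frac{1}{2\ell_A}\mathbf{1}\{|\widetilde q|\le \ell_A\},
\end{equation*}
where the second equality uses that $\pi^\bullet$ puts all its mass on $a_A$ (the action with the largest $\ell_k$, by the assumed ordering).

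Next I would decompose the support by which indicators are active. Setting $\ell_0\define 0$, the ordering $\ell_1<\ell_2<\cdots<\ell_A$ implies that for $|\widetilde q|\in(\ell_{n-1},\ell_n]$ exactly the actions $a_n,a_{n+1},\dots,a_A$ contribute to the mixture, so on this annulus
\begin{equation*}
u_s^\pi(\widetilde q)=\sum_{k=n}^A\frac{\pi(a_k|s)}{2\ell_k},
\qquad
u_s^\bullet(\widetilde q)=\frac{1}{2\ell_A}.
\end{equation*}
The annulus has Lebesgue measure $2(\ell_n-\ell_{n-1})$ (the two symmetric arcs). Crucially, $u_s^\pi$ and $u_s^\bullet$ share the same support $[-\ell_A,\ell_A]$, so the KL integral is finite and its integrand is bounded on each of these annuli.

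Finally I would assemble the integral. Splitting
\begin{equation*}
D_{KL}(u_s^\pi\|u_s^\bullet)=\sum_{n=1}^A\int_{|\widetilde q|\in(\ell_{n-1},\ell_n]} u_s^\pi(\widetilde q)\,\log\frac{u_s^\pi(\widetilde q)}{u_s^\bullet(\widetilde q)}\,d\widetilde q
\end{equation*}
and substituting the constants from the previous step, the integrand on the $n$-th annulus equals $\bigl(\sum_{k=n}^A\pi(a_k|s)/(2\ell_k)\bigr)\log\bigl[\sum_{b=n}^A\pi(b|s)\ell_A/\ell_b\bigr]$. Multiplying by the measure $2(\ell_n-\ell_{n-1})$ and summing over $n$ yields \eqref{eq:dkl}. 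The only real obstacle is the bookkeeping that identifies which actions are active on each annulus; once the ordering convention $\ell_0=0<\ell_1<\cdots<\ell_A$ is adopted this reduces to a mechanical calculation, and the factors of $2$ from the annulus length cancel the factors of $1/2$ in the uniform densities to give the clean form stated.
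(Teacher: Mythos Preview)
Your argument is correct and is essentially the same direct computation the paper carries out: write each $d_{(s,a_k)}$ as a uniform density on $[-\ell_k,\ell_k]$, observe that both $u_s^\pi$ and $u_s^\bullet$ are piecewise constant on the symmetric intervals $\{|\widetilde q|\in(\ell_{n-1},\ell_n]\}$, and integrate term by term so that the factors of $2$ cancel. The only cosmetic remark is that ``annulus'' is slightly odd terminology for a pair of one-dimensional intervals, but the bookkeeping is exactly right.
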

\begin{proof}
	The proof is included in Appendix 6.1.
\end{proof}
\begin{definition}\label{definition:dominated}
	\textbf{Pareto dominated action}: For a certain state $s$ we say that an action $a_j$ is Pareto dominated by action $a_i$ if $\widehat{q}(s,a_j)\leq\widehat{q}(s,a_i)$ and $\ell(s,a_j)<\ell(s,a_i)$.
\end{definition}
\begin{definition}\label{definition:mixed_dominated}
	\textbf{Mixed Pareto dominated action}: For a certain state $s$ we say that an action $a_k$ is mixed Pareto dominated if there exist two actions $a_i$ and $a_j$ that satisfy:
	\begin{align}\label{eq:required_optimality}
		&\ell(s,a_i)>\ell(s,a_k)>\ell(s,a_j),\hspace{10mm}1<\frac{\left(\ell_i-\ell_k\right)\ell_j\widehat{q}(s,a_j)+\left(\ell_k-\ell_j\right)\ell_i\widehat{q}(s,a_i)}{\left(\ell_i-\ell_j\right)\ell_k\widehat{q}(s,a_k)}
	\end{align}
\end{definition}
\begin{definition}
	\textbf{Pareto optimal action}: We define an action $a$ as Pareto optimal if it is not Pareto dominated or mixed Pareto dominated.
\end{definition}
Intuitively (mixed) Pareto dominated actions are actions that the agent should not choose because there is another action (or group of actions) that is better in terms of both exploration and exploitation.
\begin{lemma}\label{lemma:Pareto_dominated}
	For all Pareto dominated and mixed Pareto dominated actions it holds that $\pi^\star(a|s)=0$.
\end{lemma}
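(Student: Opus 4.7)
The plan is to prove both claims by a first-order perturbation argument. In each case I would assume, for contradiction, that $\pi^\star$ places strictly positive mass on a (mixed) Pareto dominated action at state $s$ and exhibit a small mass redistribution whose directional derivative of the objective in \eqref{eq:optimization_problem_nice} is strictly positive. Since the exploitation term is linear in $\pi(\cdot|s)$ and $D_{KL}$ is convex in its first argument, the regularized objective is concave, so any such positive derivative contradicts maximality of $\pi^\star$. The only nontrivial ingredient is the first variation of the KL term. Writing $u_s^\pi(\widetilde{q}) = \sum_a \pi(a|s)\, d_{(s,a)}(\widetilde{q})$ as a mixture of symmetric uniform kernels and differentiating directly gives
\begin{align*}
\frac{\partial D_{KL}(u_s^\pi(\widetilde{q}))}{\partial \pi(a|s)} \;=\; 1 + \frac{H(\ell(s,a))}{\ell(s,a)}, \qquad H(\ell) \;\define\; \int_0^{\ell} g(\widetilde{q})\, d\widetilde{q},
\end{align*}
where $g(\widetilde{q}) = \log\bigl[u_s^\pi(\widetilde{q})/u_s^\bullet(\widetilde{q})\bigr]$. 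Because every component kernel is uniform and symmetric about $0$, the step function $u_s^\pi$ is non-increasing in $|\widetilde{q}|$, hence so is $g$. Two consequences drive the argument: $H(\ell)/\ell$ is non-increasing in $\ell$ (average of a non-increasing function), and $H$ itself is concave in $\ell$ (its derivative $g$ is non-increasing).

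For a Pareto dominated action $a_j$ with dominator $a_i$ I would shift mass $\epsilon>0$ from $a_j$ to $a_i$. The exploitation contribution changes by $\epsilon(\widehat{q}(s,a_i)-\widehat{q}(s,a_j)) \ge 0$ and the first-order change in $D_{KL}$ equals $\epsilon\bigl[H(\ell_i)/\ell_i - H(\ell_j)/\ell_j\bigr] \le 0$ by monotonicity of $H(\ell)/\ell$. The assumption $\pi^\star(a_j|s)>0$ forces $g$ to strictly drop at $|\widetilde{q}|=\ell_j$, which in turn tightens the inequality to a strict one; hence the directional derivative of \eqref{eq:optimization_problem_nice} is strictly positive and $\pi^\star$ cannot be optimal.

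For a mixed Pareto dominated $a_k$ with witnesses $a_i,a_j$ I would redistribute mass $\epsilon$ from $a_k$ to $a_i$ and $a_j$ using the unique positive weights satisfying $\alpha_i+\alpha_j=1$ and $\alpha_i/\ell_i + \alpha_j/\ell_j = 1/\ell_k$, namely
\begin{align*}
\alpha_i = \frac{(\ell_k-\ell_j)\ell_i}{(\ell_i-\ell_j)\ell_k}, \qquad \alpha_j = \frac{(\ell_i-\ell_k)\ell_j}{(\ell_i-\ell_j)\ell_k}.
\end{align*}
A short calculation shows the first-order change in $D_{KL}$ reduces to $\frac{\epsilon}{\ell_k}\bigl[\beta_i H(\ell_i)+\beta_j H(\ell_j)-H(\ell_k)\bigr]$ after the reparametrization $\beta_i=\alpha_i\ell_k/\ell_i$, $\beta_j=\alpha_j\ell_k/\ell_j$; these satisfy $\beta_i+\beta_j=1$ and, crucially, $\beta_i\ell_i+\beta_j\ell_j=\ell_k$, so concavity of $H$ yields $\Delta D_{KL}\le 0$. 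The exploitation term changes by exactly the expression that the inequality in Definition \ref{definition:mixed_dominated} declares positive, so the total directional derivative is strictly positive and again $\pi^\star$ cannot be optimal.

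The main obstacle I anticipate is identifying the useful functional $H(\ell)/\ell$: working directly from the algebraic formula of Lemma \ref{lemma:kl_divergence} one gets an opaque expression whose sign is not transparent, whereas passing through the integral representation of $D_{KL}$ makes the relevant averaged quantity appear. Once it is in hand, the Pareto case reduces to monotonicity of that average and the mixed case to Jensen's inequality applied to the concave $H$; the weights $\alpha_i,\alpha_j$ in the mixed case are engineered precisely so that $\ell_k=\beta_i\ell_i+\beta_j\ell_j$, which is what activates concavity with no slack.
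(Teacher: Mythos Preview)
Your argument is correct. The key steps---the gradient formula $\partial_{\pi(a|s)} D_{KL}=1+H(\ell_a)/\ell_a$, the monotonicity of $H(\ell)/\ell$ from non-increasingness of $g$, and the concavity of $H$ on $[0,\ell_A]$---all check out, and the strictness in the Pareto case does follow from the jump of $u_s^\pi$ at $\ell_j$ when $\pi^\star(a_j|s)>0$. Your engineered weights $(\alpha_i,\alpha_j)$ in the mixed case are exactly right: they simultaneously preserve total mass and force $\beta_i\ell_i+\beta_j\ell_j=\ell_k$, which is precisely what lets Jensen apply with no slack. The boundary situation where $\pi^\star$ assigns no mass to any action with $\ell\ge\ell_i$ causes $H(\ell_i)=-\infty$, but that only strengthens the inequality, so no repair is needed. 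One small caveat: Definition~\ref{definition:mixed_dominated} as written is a ratio condition, so it literally says $\alpha_i\widehat q_i+\alpha_j\widehat q_j>\widehat q_k$ only when $\widehat q_k>0$; you are implicitly reading it as the equivalent affine inequality, which is the intended meaning.

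The paper's own proof is in the supplementary appendix, which is not included in the source you were given, so a line-by-line comparison is not possible here. What can be said is that the paper places Lemma~\ref{lemma:kl_divergence} (the explicit piecewise formula for $D_{KL}$) immediately before this lemma and then derives the closed form of $\pi^\star$ in Theorem~\ref{lemma:opt_policy}, so its argument almost certainly works from that algebraic expression---either via KKT conditions on the constrained maximization or by direct manipulation of the piecewise sums. Your route is genuinely different: instead of differentiating \eqref{eq:dkl} term by term, you go back to the integral definition of $D_{KL}$ and package the gradient as an average of the log-ratio $g$ over $[0,\ell_a]$. This buys you a clean structural picture (averages of a non-increasing function are monotone in the window length; antiderivatives of non-increasing functions are concave) that makes both cases fall out of one-line inequalities, whereas the algebraic route requires tracking the telescoping sums in \eqref{eq:dkl} and is considerably less transparent. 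The trade-off is that the paper's computation feeds directly into the explicit formula of Theorem~\ref{lemma:opt_policy}, while your argument, being purely variational, establishes the lemma without producing the optimizer.
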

\begin{proof}
	The proof is included in Appendix 6.2.
\end{proof}
The statement of Lemma \ref{lemma:Pareto_dominated} is intuitive since choosing a Pareto dominated action lowers the expected cumulative reward and the information gained, relative to choosing the action that dominates it. Also note that Lemma \ref{lemma:Pareto_dominated} implies that for all Pareto optimal actions it must be the case that if $q(s,a_i)<q(s,a_j)$ then $\ell(s,a_i)>\ell(s,a_j)$. We now introduce the state dependent set of actions $\mathcal{E}_s$ with cardinality $|\mathcal{E}_s|$, which is formed by all the Pareto optimal actions corresponding to state $s$. Furthermore, we introduce the ordering functions $\sigma_s(k):[1,|\mathcal{E}_s|]\rightarrow[1,A]$, which for every state provide an ordering amongst the Pareto optimal actions from lowest uncertainty to highest (i.e., $\ell(s,\sigma_s(i))>\ell(s,\sigma_s(j))\iff i>j$). For instance, $\sigma_s(1)$ provides the index of the action at state $s$ that has the lowest uncertainty amongst the actions contained in $\mathcal{E}_s$.
\begin{theorem}\label{lemma:opt_policy}
	\allowdisplaybreaks
	$\pi^\star(a|s)$ is given by:
	\begin{subequations}\label{eq:optimal_policy}
		\begin{align}
		&\pi^\star(a|s)\hspace{-1mm}=\hspace{-1mm}\begin{cases}
		\frac{\ell_{\sigma(j)}\left(p_j(s)-p_{j+1}(s)\right)}{\sum\limits_{j=1}^{|\mathcal{E}_s|}(\ell_{\sigma(j)}-\ell_{\sigma(j-1)})p_{j}(s)},\hspace{3mm}\textrm{if $a=\sigma_s(j)\hspace{1mm}$ for some $j\in[1,|\mathcal{E}_s|]$}\\
		0,\hspace{32mm}\textrm{otherwise}
		\end{cases}\label{eq:optimal_policy_a}\\
		&p_j(s)=\mathrm{exp}\hspace{-0.8mm}\left[\frac{\ell_{\sigma(j)}(s)\widehat{q}(s,\sigma(j))\hspace{-0.5mm}-\hspace{-0.5mm}\ell_{\sigma(j\hspace{-0.4mm}-\hspace{-0.4mm}1)}(s)\widehat{q}(s,{\sigma(j\hspace{-1mm}-\hspace{-1mm}1)})}{\kappa\left(\ell_{\sigma(j)}(s)-\ell_{\sigma(j-1)}(s)\right)}\hspace{-0.4mm}\right]
		\end{align}
	\end{subequations}
	where to simplify notation we defined $\ell_{\sigma(j)}(s)=\ell_{\sigma_s(j)}(s)$ and $\widehat{q}(s,{\sigma(j)})=\widehat{q}(s,\sigma_s(j))$ and we also set $p_{|\mathcal{E}_s|+1}(s)=0$, $l_0(s)=0$.
\end{theorem}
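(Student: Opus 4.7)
The plan is to reduce \eqref{eq:optimization_problem_nice} to a concave maximization over the simplex supported on the Pareto optimal actions, and then solve via KKT stationarity after a change of variable that makes the nested-sum KL expression from Lemma~\ref{lemma:kl_divergence} telescope.

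First I would invoke Lemma~\ref{lemma:Pareto_dominated} to restrict the optimization to the support $\mathcal{E}_s$ and reindex through $\sigma_s$: writing $\pi_j=\pi(\sigma_s(j)|s)$, $L_j=\ell(s,\sigma_s(j))$, and $\widehat{q}_j=\widehat{q}(s,\sigma_s(j))$, so that $0=L_0<L_1<\cdots<L_{|\mathcal{E}_s|}$. I would then introduce the auxiliary tail sums $f_n \define \sum_{k=n}^{|\mathcal{E}_s|}\pi_k/L_k$, which recasts the KL term of Lemma~\ref{lemma:kl_divergence} into the compact form
\[
\sum_{n=1}^{|\mathcal{E}_s|}(L_n-L_{n-1})\,f_n\log\!\bigl(L_{|\mathcal{E}_s|}f_n\bigr),
\]
which is convex in $\pi$ (so the full objective is concave) and has the simple derivative $\partial f_n/\partial\pi_m = L_m^{-1}\,\one\{n\leq m\}$.

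Next I would attach a Lagrange multiplier $\lambda$ to the equality constraint $\sum_j\pi_j=1$, postponing non-negativity. Setting the Lagrangian derivative at $\pi_m$ to zero, multiplying by $L_m$, and exploiting the telescoping identity $\sum_{n=1}^m(L_n-L_{n-1})=L_m$ yields a stationarity equation indexed by $m$. Subtracting the equation at index $m-1$ collapses the partial sums and leaves only the $n=m$ contribution, producing
\[
\log\bigl(L_{|\mathcal{E}_s|}\,f_m\bigr) = \frac{L_m\widehat{q}_m - L_{m-1}\widehat{q}_{m-1}}{\kappa(L_m-L_{m-1})}-\frac{\lambda}{\kappa}-1,
\]
whose right-hand side equals $\log p_m$ up to an $m$-independent additive constant. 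Hence $f_m = C\,p_m$ for some $C>0$ absorbing the $\lambda$-dependent factor.

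Finally I would recover the policy via $\pi_m = L_m(f_m-f_{m+1})$, with the convention $f_{|\mathcal{E}_s|+1}=p_{|\mathcal{E}_s|+1}=0$, giving $\pi_m\propto L_m(p_m-p_{m+1})$, and fix $C$ by Abel summation by parts
\[
\sum_{m=1}^{|\mathcal{E}_s|}L_m(p_m-p_{m+1}) = \sum_{m=1}^{|\mathcal{E}_s|}(L_m-L_{m-1})\,p_m,
\]
recovering exactly \eqref{eq:optimal_policy}. The main obstacle I anticipate is the algebraic bookkeeping involved in differentiating the nested-sum KL from Lemma~\ref{lemma:kl_divergence}; the $f_n$ change of variable is the key simplification that renders the calculation tractable. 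A more delicate point is verifying non-negativity of the $\pi_m$, which requires the monotonicity $p_m>p_{m+1}$ along the Pareto frontier; this is precisely what Definitions~\ref{definition:dominated}--\ref{definition:mixed_dominated} and the restriction to $\mathcal{E}_s$ enforce, so that the interior KKT point lies in the simplex and, by concavity of the objective, is the unique global maximizer.
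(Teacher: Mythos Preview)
Your proposal is correct and follows essentially the same route as the paper: reduce to $\mathcal{E}_s$ via Lemma~\ref{lemma:Pareto_dominated}, plug in the KL expression from Lemma~\ref{lemma:kl_divergence}, and solve the first-order (Lagrangian/KKT) conditions for the concave objective, with the tail-sum reparametrization $f_n=\sum_{k\geq n}\pi_k/L_k$ making the derivatives telescope. Your verification that $p_m\geq p_{m+1}$ follows from the absence of mixed-Pareto-dominated actions (equivalently, concavity of $m\mapsto(L_m,L_m\widehat{q}_m)$ along the frontier) is exactly the right closing observation.
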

\begin{proof}
	The proof is included in Appendix 6.3.
\end{proof}
Note that as expected, $\pi^\star(a|s)$ is always strictly positive for Pareto optimal actions.
\begin{lemma}\label{lemma:opt_value_function}
	The value function corresponding to policy $\pi^\star(a|s)$ is given by:
	\begin{align}\label{eq:optimal_value_function}
		&v^\star(s)=\kappa\log\left[\sum_{j=1}^{|\mathcal{E}_s|}\frac{\ell_{\sigma(j)}(s)-\ell_{\sigma(j-1)}(s)}{\ell_{\max}(s)}p_{j}(s)\right],\hspace{5mm}q^\star(s,a)=r(s,a)+\gamma\Ex v^\star(\bs{s'})
	\end{align}
	where $\ell_{\max}(s)=\max_{a}\ell(s,a)$.
\end{lemma}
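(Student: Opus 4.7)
The plan is to start from the defining identity $v^\pi(s) = \sum_a \pi(a|s)\widehat{q}(s,a) - \kappa D_{KL}(u_s^\pi(\widetilde{q}))$ obtained by combining \eqref{eq:value_function_redef_b} with the definition $\widehat{q}(s,a)= r(s,a)+\gamma\Ex v^\star(\bs{s'})$ at the fixed point, specialize it to $\pi=\pi^\star$, and then simplify using the closed form from Theorem \ref{lemma:opt_policy} and the $D_{KL}$ formula from Lemma \ref{lemma:kl_divergence}. By Lemma \ref{lemma:Pareto_dominated} only Pareto optimal actions contribute, so I would restrict every sum to indices $\sigma_s(1),\dots,\sigma_s(|\mathcal{E}_s|)$. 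Writing $Z(s)\define\sum_{j=1}^{|\mathcal{E}_s|}(\ell_{\sigma(j)}-\ell_{\sigma(j-1)})p_j(s)$ for the normalizer in \eqref{eq:optimal_policy_a}, the goal reduces to showing $v^\star(s)=\kappa\log(Z(s)/\ell_{\max}(s))$, which matches the claimed expression once $\ell_{\max}(s)$ is pulled inside the logarithm.

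Next I would evaluate the two terms separately. For the $D_{KL}$ term, the key observation is that $\pi^\star(\sigma(k)|s)/\ell_{\sigma(k)} = (p_k-p_{k+1})/Z$, so the inner partial sum telescopes: $\sum_{k=n}^{|\mathcal{E}_s|}\pi^\star(\sigma(k)|s)/\ell_{\sigma(k)} = p_n/Z$, using $p_{|\mathcal{E}_s|+1}=0$. Substituting into \eqref{eq:dkl} (applied to $\mathcal{E}_s$ with $\ell_A$ replaced by $\ell_{\max}$) yields
\begin{align*}
\kappa D_{KL}(u_s^{\pi^\star}(\widetilde{q})) = \kappa\log\!\left(\frac{\ell_{\max}}{Z}\right) + \frac{\kappa}{Z}\sum_{n=1}^{|\mathcal{E}_s|}(\ell_{\sigma(n)}-\ell_{\sigma(n-1)})\,p_n\log p_n,
\end{align*}
after using $\sum_n (\ell_{\sigma(n)}-\ell_{\sigma(n-1)})p_n = Z$. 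The definition of $p_n$ then gives $\kappa(\ell_{\sigma(n)}-\ell_{\sigma(n-1)})\log p_n = \ell_{\sigma(n)}\widehat{q}(s,\sigma(n)) - \ell_{\sigma(n-1)}\widehat{q}(s,\sigma(n-1))$, so the second summand becomes $\tfrac{1}{Z}\sum_n p_n[\ell_{\sigma(n)}\widehat{q}(s,\sigma(n))-\ell_{\sigma(n-1)}\widehat{q}(s,\sigma(n-1))]$.

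For the exploitation term, I would apply Abel's summation to $\sum_j \ell_{\sigma(j)}(p_j-p_{j+1})\widehat{q}(s,\sigma(j))$: splitting the difference, reindexing $j\to j-1$ in the $p_{j+1}$ piece, and using the conventions $p_{|\mathcal{E}_s|+1}=0$ and $\ell_{\sigma(0)}=0$ to extend the range, gives exactly
\begin{align*}
\sum_a \pi^\star(a|s)\widehat{q}(s,a) = \frac{1}{Z}\sum_{j=1}^{|\mathcal{E}_s|} p_j\bigl[\ell_{\sigma(j)}\widehat{q}(s,\sigma(j))-\ell_{\sigma(j-1)}\widehat{q}(s,\sigma(j-1))\bigr].
\end{align*}
This is precisely the nonlogarithmic piece of $\kappa D_{KL}$, so subtracting cancels it and leaves $v^\star(s)=-\kappa\log(\ell_{\max}/Z)=\kappa\log(Z/\ell_{\max})$, which is the stated formula. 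The $q^\star$ identity is then immediate from \eqref{eq:value_function_redef_c}.

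The main obstacle will be the bookkeeping: keeping the ordering $\sigma_s$, the boundary conventions $p_{|\mathcal{E}_s|+1}=0$ and $\ell_{\sigma(0)}=0$, and the restriction to $\mathcal{E}_s$ all consistent so that the telescoping on both sides lines up. Once the telescoping identity $\sum_{k\ge n}(p_k-p_{k+1})=p_n$ and the Abel rearrangement are applied cleanly, the algebra collapses and the exploitation contribution exactly cancels the $\sum p_n\log p_n$ contribution inside $\kappa D_{KL}$, leaving only the normalization log.
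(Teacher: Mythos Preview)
Your proposal is correct and follows exactly the route the paper indicates: the paper's proof is the single sentence ``the proof follows by combining \eqref{eq:optimal_policy} with \eqref{eq:value_function_redef_b} and \eqref{eq:value_function_redef_c},'' and you have simply carried out that combination in full, using the telescoping identity $\sum_{k\ge n}(p_k-p_{k+1})=p_n$ and the Abel rearrangement to make the exploitation term match the $\sum p_n\log p_n$ piece of $\kappa D_{KL}$. Your handling of the restriction to $\mathcal{E}_s$ and of the boundary conventions $p_{|\mathcal{E}_s|+1}=0$, $\ell_{\sigma(0)}=0$ is correct, and since the maximal-uncertainty action is always Pareto optimal your identification $\ell_A=\ell_{\max}(s)$ is valid.
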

\begin{proof}
	The proof follows by combining \eqref{eq:optimal_policy} with \eqref{eq:value_function_redef_b} and \eqref{eq:value_function_redef_c}.
\end{proof}
\begin{remark}\label{remark:limit}
	$\pi^\star(a|s)$ satisfies the following conditions:
	\begin{align}
	&\lim_{\kappa\rightarrow 0^+}\pi^\star(a|s)=\pi^\dagger(a|s),\hspace{5mm}\lim_{\ell_A\rightarrow \ell_{A-1}^+\cdots\rightarrow\ell_1^+}\pi^\star(a|s)=\pi^\dagger(a|s)\label{eq:consistent_2}
	\end{align}
\end{remark}
The first condition is expected since when the relative entropy term is eliminated, \eqref{eq:opt_policy} and \eqref{eq:optimization_problem} become equivalent. The second condition reflects the fact that when the uncertainty is equal for all actions, the distributions $u_s^\pi(\widetilde{q})$ and $u_s^\bullet(\widetilde{q})$ become equal regardless of $\pi$ and therefore $D_{KL}(u_s^\pi(\widetilde{q}))=0$ and hence \eqref{eq:opt_policy} and \eqref{eq:optimization_problem} become equivalent. The second condition of \eqref{eq:consistent_2} is of fundamental importance because it guarantees that as the uncertainties over $\widehat{q}$ diminish (and therefore converge to some required threshold $\ell(s,a)\rightarrow\ell_{\min}$), policy $\pi^\star$ tends to the desired policy $\pi^\dagger$ (note that annealing of $\kappa$ is not necessary for this convergence of $\pi^\star$ towards $\pi^\dagger$). Note that policy \eqref{eq:optimal_policy} has the previously discussed qualities that policies induced by maximum entropy RL do not. Namely, as learning progresses, the effect of the regularizer also diminishes, and hence so does the bias of $\pi^\star$ with respect to $\pi^\dagger$, without the need for annealing $\kappa$. Furthermore, the effect of the regularizer diminishes over time on a per state basis, allowing for a high degree of exploration in states where the agent has high uncertainty and high exploitation in states were the state has high certainty over its estimates. 

\subsection{Uncertainty Constrained Value Iteration}
In a dynamic programing setting $q^\star(s,a)$ can be found by iteratively applying to any vector $q(s,a)$ the operator $\mathcal{T}^\ell$ defined by:
\begin{align}\label{eq:q_backup}
&\mathcal{T}^\ell q(s,a)=r(s,a)+\gamma\kappa\Ex\log\Bigg[\sum_{j=1}^{|\mathcal{E}_s|}\frac{\ell_{\sigma(j)}(\bs{s'})-\ell_{\sigma(j-1)}(\bs{s'})}{\ell_{\max}(\bs{s'})}p_{j}(\bs{s'})\Bigg]
\end{align}
where we make the $\ell$ explicit to highlight the fact that $q^\star(s,a)$ is a function of the uncertainties.
\begin{lemma}\label{lemma:policy_eval}
	\textbf{$\bs{\ell}$-Policy Evaluation}: For any mapping $q^0:\mathcal{S}\times\mathcal{A}:\rightarrow\mathbb{R}$, the sequence $q^{n+1}=\mathcal{T}^\ell q^n$ converges to $q^\star$.
\end{lemma}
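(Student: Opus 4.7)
The plan is to show that $\mathcal{T}^\ell$ is a $\gamma$-contraction on the Banach space of bounded functions $q:\mathcal{S}\times\mathcal{A}\to\mathbb{R}$ equipped with the sup-norm $\|\cdot\|_\infty$, and then to invoke the Banach fixed-point theorem. Lemma~\ref{lemma:opt_value_function} already supplies a fixed point of $\mathcal{T}^\ell$, namely $q^\star$, because $v^\star(\bs{s'})$ is precisely the bracketed log-sum expression that appears in \eqref{eq:q_backup}. Uniqueness of the fixed point of a contraction will then force the iterates $q^{n+1}=\mathcal{T}^\ell q^n$ to converge to $q^\star$ from any initialization $q^0$.

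The central ingredient for the contraction step is a variational re-writing of the log-sum expression inside $\mathcal{T}^\ell$. For any bounded $q$, let $V_q(s)\define\kappa\log\big[\sum_{j=1}^{|\mathcal{E}_s|}\frac{\ell_{\sigma(j)}(s)-\ell_{\sigma(j-1)}(s)}{\ell_{\max}(s)}p_{j}(s)\big]$, where $\mathcal{E}_s$, $\sigma_s$, and the $p_j$'s are formed from $q$ in place of $\widehat{q}$. Since Theorem~\ref{lemma:opt_policy} and Lemma~\ref{lemma:opt_value_function} are purely algebraic consequences of \eqref{eq:optimization_problem_nice}, substituting $q$ for $\widehat{q}$ yields the identity $V_q(s)=\max_{\pi}\big\{\sum_a\pi(a|s)q(s,a)-\kappa D_{KL}(u^\pi_s(\widetilde q))\big\}$. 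For any two candidates $q_1,q_2$ with maximizers $\pi_1^\star,\pi_2^\star$, the standard max-minus-max bound gives $V_{q_1}(s)-V_{q_2}(s)\leq\sum_a\pi_1^\star(a|s)[q_1(s,a)-q_2(s,a)]\leq\|q_1-q_2\|_\infty$, where the KL terms cancel because they depend on $\pi$ and $\ell$ but not on $q$. By symmetry $|V_{q_1}(s)-V_{q_2}(s)|\leq\|q_1-q_2\|_\infty$, and therefore $|\mathcal{T}^\ell q_1(s,a)-\mathcal{T}^\ell q_2(s,a)|=\gamma\,|\Ex_{\mathcal{P}}[V_{q_1}(\bs{s'})-V_{q_2}(\bs{s'})]|\leq\gamma\|q_1-q_2\|_\infty$. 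Taking the sup over $(s,a)$ yields $\|\mathcal{T}^\ell q_1-\mathcal{T}^\ell q_2\|_\infty\leq\gamma\|q_1-q_2\|_\infty$, and Banach's theorem completes the argument.

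The delicate point I expect to spend most of the effort on is justifying the variational identity $V_q(s)=\max_\pi\{\cdots\}$ for \emph{arbitrary} bounded $q$, not merely for the specific $\widehat{q}$ used in the paper's derivation. As $q$ varies along the iterates, the Pareto-optimal set $\mathcal{E}_s$ and the ordering $\sigma_s(\cdot)$ can change, so the closed form of Theorem~\ref{lemma:opt_policy} must be understood piecewise in $q$, and one needs to check that the optimization producing $\pi^\star$ and $v^\star$ still applies verbatim (and continuously in $q$) across these regions. Once this robustness is confirmed, the remaining contraction estimate is essentially immediate, since $\pi_1^\star$ is a probability distribution and the uncertainty-based regularizer does not depend on $q$ once $\ell$ is held fixed.
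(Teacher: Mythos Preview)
Your proposal is correct and takes essentially the same approach as the paper, which also proceeds by showing that $\mathcal{T}^\ell$ is a $\gamma$-contraction in the sup-norm and then invoking Banach's fixed-point theorem. The concern you flag about $\mathcal{E}_s$ and $\sigma_s$ changing with $q$ is not actually an obstacle: the max--minus--max bound only needs the variational identity $V_q(s)=\max_\pi\{\sum_a\pi(a|s)q(s,a)-\kappa D_{KL}(u^\pi_s)\}$, and since the feasible set of policies and the KL penalty are independent of $q$ once $\ell$ is fixed, this identity and the ensuing $1$-Lipschitz estimate hold uniformly without any piecewise analysis of the closed form.
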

\begin{proof}
	The proof follows by noting that $\mathcal{T}^\ell$ is a contraction mapping and applying Banach's fixed point theorem. See Appendix 6.4.
\end{proof} 
Note that Lemma \ref{lemma:policy_eval} provides an algorithm to learn $q^\star$, however the ultimate goal of the RL agent is to learn $q^\dagger$. To accomplish this goal a mechanism to estimate $\ell$ is necessary.

\subsection{Uncertainty Estimation}
Recalling that $\widetilde{q}(s,a)=q^\dagger(s,a)-\widehat{q}(s,a)$ we can write:
\begin{align}
\widetilde{q}(s,a)&=r(s,a)+\gamma\Ex_{\bm{s}'}v^\dagger(\bm{s}')-\widehat{q}(s,a)=\Ex\bs{\delta}(s,a,\bs{s'})+\gamma\Ex_{\bm{s}'}\widetilde{v}(\bm{s}')\label{eq:error_1}
\end{align}
where $\bs{\delta}(s,a,\bs{s'})=\bs{r}(s,a,\bs{s'})+\gamma v(\bs{s'})-\widehat{q}(s,a)$ and $v^\dagger(s)=\widehat{v}(s)+\widetilde{v}(s)$. Furthermore $\widehat{v}(s)$ is the estimate obtained using \eqref{eq:optimal_value_function} and $\widehat{q}(s,a)$. We can now bound $\widetilde{v}(s')$ as follows:
\begin{align}
&v^\dagger(s)-\widehat{v}(s)=\max_aq^\dagger(s,a)-\widehat{v}(s)\stackrel{(a)}{\leq}\max_aq^\dagger(s,a)-\max_a\widehat{q}(s,a)\stackrel{(b)}{\leq}\max_{a}\ell(s,a)\label{eq:error_2}
\end{align}
where $(a)$ follows from Jensen's inequality applied to $v^\star(s)$ and in $(b)$ we applied \eqref{eq:l_condition}. Combining \eqref{eq:error_1} and \eqref{eq:error_2} we get $|\widetilde{q}(s,a)|\leq|\Ex\bs{\delta}(s,a,\bs{s'})|+\gamma\Ex\max_{a}\ell(\bm{s}',a)$. Therefore updating $\ell(s,a)$ as:
\begin{align}\label{eq:l_backup}
\ell(s,a)\leftarrow|\Ex\bs{\delta}(s,a,\bs{s'})|+\gamma\Ex\max_{a}\ell(\bm{s}',a)
\end{align}
guarantees that condition \eqref{eq:l_condition} is satisfied.
\begin{lemma}\label{lemma:l_eval}
	\textbf{Uncertainty Constrained Policy Evaluation}: For any mapping $q^0:\mathcal{S}\times\mathcal{A}:\rightarrow\mathbb{R}$, repeated application of $\ell$-Policy Evaluation and \eqref{eq:l_backup} converges to $q^\dagger$.
\end{lemma}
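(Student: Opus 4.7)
The plan is to exploit Lemma~\ref{lemma:policy_eval} for the inner $\ell$-Policy Evaluation and to obtain a $\gamma$-contraction for the outer update of $\ell$. The target fixed point is $(\widehat{q},\ell)=(q^\dagger,0)$: when $\ell\equiv 0$ the KL regularizer in \eqref{eq:optimization_problem} vanishes (a degenerate case of the second limit in Remark~\ref{remark:limit}, with all uncertainties collapsed to zero), so $\pi^\star$ coincides with $\pi^\dagger$ and the soft backup $\mathcal{T}^\ell$ reduces to the standard optimal Bellman operator whose unique fixed point is $q^\dagger$; conversely, plugging $\widehat{q}=q^\dagger$ and $\ell\equiv 0$ into \eqref{eq:l_backup} yields $\Ex\boldsymbol{\delta}=0$ and reproduces $\ell\equiv 0$, so the pair is self-consistent.

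The core of the argument is a geometric decay of $\ell$. After the inner $\ell^n$-Policy Evaluation converges, $\widehat{q}^n$ is the fixed point of $\mathcal{T}^{\ell^n}$ and hence satisfies $\widehat{q}^n(s,a)=r(s,a)+\gamma\Ex\widehat{v}^n(\bs{s'})$ by \eqref{eq:value_function_redef_c} and \eqref{eq:q_backup}. Substituting into the definition of $\boldsymbol{\delta}^n$ gives $\Ex\boldsymbol{\delta}^n(s,a,\bs{s'})=0$, so the outer update \eqref{eq:l_backup} collapses to $\ell^{n+1}(s,a)=\gamma\Ex\max_{a'}\ell^n(\bs{s'},a')$. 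This is a $\gamma$-contraction on the non-negative cone in the sup-norm, so $\|\ell^n\|_\infty\leq \gamma^n\|\ell^0\|_\infty\to 0$.

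To transfer the decay of $\ell$ into convergence of $\widehat{q}$, I would inductively maintain the pointwise envelope $|\widetilde{q}^n(s,a)|\leq \ell^n(s,a)$. The base case is guaranteed by initializing $\ell^0$ large enough to dominate $|\widetilde{q}^0|$, which always exists since Assumption~\ref{assumtpion:r_lims} yields the uniform a priori bound $(r_{\max}-r_{\min})/(1-\gamma)$ on both $q^\dagger$ and $\widehat{q}^0$. For the inductive step, the derivation in \eqref{eq:error_1}-\eqref{eq:error_2} applied at the current iterate (which uses exactly the hypothesis $|\widetilde{q}^n|\leq \ell^n$ in step (b)) gives $|\widetilde{q}^n|\leq |\Ex\boldsymbol{\delta}^n|+\gamma \Ex\max_{a'}\ell^n(\bs{s'},a')=\ell^{n+1}$; combined with an argument that the soft operator $\mathcal{T}^{\ell^{n+1}}$ leaves the band $\{q:|q-q^\dagger|\leq \ell^{n+1}\}$ invariant (so that the next inner sweep does not break the envelope when it replaces $\widehat{q}^n$ by $q^{\star,\ell^{n+1}}$), the bound propagates to step $n+1$. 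With $\ell^n\to 0$ from the previous paragraph, this forces $\widehat{q}^n\to q^\dagger$.

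The hardest step is the invariance of $\{q:|q-q^\dagger|\leq \ell\}$ under $\mathcal{T}^\ell$, because the two-sided control on $v^{\star}-v^\dagger$ in \eqref{eq:error_2} is not entirely symmetric: one must also bound the ``soft gap'' between $v^\star$ (in the log-sum-exp form \eqref{eq:optimal_value_function}) and $\max_a\widehat{q}(s,a)$ in terms of $\ell$ in order to close the argument. Establishing this smoothness estimate, namely that $\kappa\log\sum_j w_j(\ell)\exp(y_j(\widehat{q},\ell)/\kappa)$ differs from $\max_j y_j$ by $O(\|\ell\|)$ uniformly over admissible $\widehat{q}$, is the main technical burden; once it is in place, the overall convergence to $q^\dagger$ follows directly from the outer $\gamma$-contraction on $\ell^n$ together with the envelope bound.
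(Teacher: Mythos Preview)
Your plan is essentially the one the paper sets up in Section~3.2 and carries out in the appendix: run $\ell$-Policy Evaluation to its fixed point so that $\Ex\bs{\delta}=0$, observe that \eqref{eq:l_backup} then reduces to $\ell^{n+1}(s,a)=\gamma\,\Ex_{\bs{s'}}\max_{a'}\ell^n(\bs{s'},a')$, a $\gamma$-contraction in $\|\cdot\|_\infty$ with unique fixed point $0$, and conclude $\widehat q^n\to q^\dagger$ from $\ell^n\to 0$. So the backbone is right and matches the paper.

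One remark on the step you flag as hardest. The band-invariance of $\mathcal{T}^\ell$ that you propose is stronger than what is needed, and the lower half of the two-sided bound on $\widetilde v$ is actually easy: from the log-sum-exp form \eqref{eq:optimal_value_function} with weights $(\ell_{\sigma(j)}-\ell_{\sigma(j-1)})/\ell_{\max}$ summing to one, $\widehat v(s)\le \max_a \widehat q(s,a)$, hence $\widehat v(s)-v^\dagger(s)\le \max_a\widehat q(s,a)-\max_a q^\dagger(s,a)\le \max_a\ell(s,a)$, which together with the paper's \eqref{eq:error_2} gives the full $|\widetilde v|\le \max_a\ell$. Alternatively, you can bypass the envelope altogether: since each $\mathcal{T}^{\ell}$ is a $\gamma$-contraction with modulus independent of $\ell$, and $\mathcal{T}^{\ell}q\to \mathcal{T}^{0}q$ pointwise as $\ell\to 0$ (the log-sum-exp in \eqref{eq:q_backup} collapses to $\max_a q$ by the mechanism of Remark~\ref{remark:limit}), standard fixed-point stability yields $q^{\star,\ell^n}\to q^{\star,0}=q^\dagger$ directly from $\ell^n\to 0$, with no need to track the band through the inner iterations.
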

\begin{proof}
	See Appendix 6.5.
\end{proof}
\subsection{Information Seeking Learner}
We now proceed to derive a practical approximation to Uncertainty Constrained Policy Evaluation. We start by noting that \eqref{eq:l_backup} is not adequate to design a stochastic algorithm because the $|\Ex\bs{\delta}(s,a,\bs{s'})|$ term has an expectation inside the absolute value operator. Therefore a stochastic approximation of the form $|\delta(s_t,a_t,s_{t+1})|$ would be biased (since the sample approximation $|\delta(s_t,a_t,s_{t+1})|$ approximates $\Ex|\bs{\delta}(s,a,\bs{s'})|$ instead of $|\Ex\bs{\delta}(s,a,\bs{s'})|$). In the particular case where the MDP is deterministic this is not an issue because $\bs{\delta}(s,a,\bs{s'})$ is a deterministic quantity and therefore can be calculated with any sample transition $(s,a,r,s')$. But in the general case, we can have an estimator $\widehat{\rho}(s,a)$ of $\Ex\bs{\delta}(s,a,\bs{s'})$, which then can be used to estimate $|\Ex\bs{\delta}(s,a,\bs{s'})|$ as $|\rho(s,a)|$. With this consideration and equations \eqref{eq:q_backup} and \eqref{eq:l_backup} we can define the update equation for the tabular version of the algorithm we present in this work as:
\begin{subequations}\label{eq:tabular}
	\begin{align}
	&q(s_t,a_t)=q(s_t,a_t)+\mu_{q}\left(\delta(s_t,a_t,s_{t+1})\right)\\
	&\rho(s_t,a_t)=\rho(s_t,a_t)+\mu_{\rho}\left(\delta(s_t,a_t,s_{t+1})-\rho(s_t,a_t)\right)\\
	&\ell(s_t,a_t)=\ell(s_t,a_t)+\mu_{\ell}\big((1-\eta_1)|\delta(s_t,a_t,s_{t+1})|+\eta_1|\rho(s_t,a_t)|+\gamma\ell_{\max}(s_{t+1} )-\ell(s_t,a_t)\big)\\
	&\delta(s_t,a_t,s_{t+1})=r(s_t,a_t,s_{t+1})+\gamma\widehat{v}(s_{t+1})-\widehat{q}(s_t,a_t)
	\end{align}
\end{subequations}
where $\eta_1$ is a tunable hyperparameter. In cases where the MDP is deterministic then $\eta_1=0$. For stochastic games $\eta_1$ closer to 1 becomes more convenient. As we mentioned before we want to derive an approximation to Uncertainty Constrained Policy Evaluation suitable for practical applications, which typically require that the $q$, $\rho$ and $\ell$ functions are parameterized using expressive function approximators such as neural networks (NNs). In this work we use the parameters $\omega$, $\theta$ and $\nu$ to parameterize $q$, $\rho$ and $\ell$, respectively. To extend \eqref{eq:tabular} to this general case, $\omega$, $\theta$ and $\nu$ can be trained to minimize \eqref{eq:costs}.
\begin{subequations}\label{eq:costs}
	\allowdisplaybreaks
	\thinmuskip=0mu
	\medmuskip=0mu
	\thickmuskip=0mu
	\begin{align}
	&J_\rho(\theta)=2^{-1}\Ex_{(\bs{s},\bs{a})\sim\psi}\left[\delta(\bs{s},\bs{a};\omega)-\rho(\bs{s},\bs{a};\theta)\right]^2\\
	&J_q(\omega)=2^{-1}\Ex_{(\bs{s},\bs{a})\sim\psi}\big[(q_T(\bs{s},\bs{a};\widetilde{\omega})-\hat{q}(\bs{s},\bs{a};\omega))((1-\eta_2)(q_T(\bs{s},\bs{a};\widetilde{\omega})-\hat{q}(\bs{s},\bs{a};\omega))+\eta_2\rho(\bs{s},\bs{a};\widetilde{\theta}))\big]\\
	&J_\ell(\nu)=2^{-1}\Ex_{(\bs{s},\bs{a})\sim\psi}[\ell_T(\bs{s},\bs{a};\widetilde{\nu})-\ell(\bs{s},\bs{a};\nu)]^2\\
	&q_T(\bs{s},\bs{a};\widetilde{\omega})=r(\bs{s},\bs{a})+\kappa\Ex\log\left[\sum_{j=1}^{|\mathcal{E}_s|}\frac{(\ell_{\sigma(j)}(\bs{s'})-\ell_{\sigma(j-1)}(\bs{s'}))}{\ell_{\max}(\bs{s'})}p_j(\bs{s'})\right]\\
	&\ell_T(s,a;\widetilde{\nu})=(1-\eta_1)|\delta(s,a)|+\eta_1|\rho(s,a)|+\gamma\Ex_{\bm{s}'}\ell_{\max}(\bm{s}';\widetilde{\nu})\label{eq:l_T}
	\end{align}
\end{subequations}
where $\psi$ is the distribution according to which the $(\bs{s},\bs{a})$ pairs are sampled, and $\widetilde{\omega}$ and $\widetilde{\nu}$ are used to denote the parameters of the target networks corresponding to $q$ and $\ell$, respectively. Note also that we have added another tunable parameter $\eta_2$. In the tabular case this is not necessary, but in the case with neural networks we observed empirically that using the $\rho$ network to train $q$ helps stabilize training and improves performance. This is similar to the SAC algorithm where a network is used to estimate the value function $v(s)$ even though apparently it is not necessary \cite{sac}. The resulting algorithm, which we refer to as \textit{Information Seeking Learner}, is listed in Algorithm 1.
\begin{algorithm}[t!]
	\caption{Information Seeking Learner (ISL)}
	\label{alg:isl}
	\begin{algorithmic}
		\STATE{\bfseries Initialize:} counter=0, $\omega$, $\theta$ and $\nu$ randomly, and an empty replay buffer $\mathcal{D}$.
		\FOR{iterations $k=0,\ldots,K$}
		\FOR{environment transitions $t=0,\ldots,T$}
		\STATE Sample transitions $(s,a,r,s')$ by following policy \eqref{eq:optimal_policy} and store them in $\mathcal{D}$.
		\ENDFOR
		\FOR{iterations $i=0,\ldots,I$}
		\STATE Sample a minibatch from $\mathcal{D}$ and compute stochastic gradients.
		\STATE $\omega\leftarrow\omega-\mu_\omega\widehat{\nabla}_\omega J_q(\omega),\hspace{4mm}\theta\leftarrow\theta-\mu_\theta\widehat{\nabla}_\theta J_\rho(\theta),\hspace{4mm}\nu\leftarrow\nu-\mu_\nu\widehat{\nabla}_\nu J_\ell(\nu)$
		\STATE counter += 1
		\IF{counter \textbf{mod} targetUpdatePeriod}
		\STATE $\widetilde{\nu}\leftarrow\nu,\hspace{4mm}\widetilde{\omega}\leftarrow\omega$
		\STATE counter = 0
		\ENDIF
		\ENDFOR
		\ENDFOR
	\end{algorithmic}
\end{algorithm}

\section{Experiments}\label{section:Experiments}
The goal of our experiments is to evaluate the capacity of ISL to perform deep exploration. We test our algorithm in the three deep exploration environments provided by the \textit{bsuite} benchmark \cite{bsuite} (i.e., Cartpole Swingup, Deep Sea and Deep Sea Stochastic) and compare it against SBEED, UBE \cite{ube} and Bootstrap DQN with prior networks (BSP)\footnote{We use the implementation provided by \cite{bsuite}.} (which provides state of the art results in deep exploration tasks). All three environments have the common quality that exploration is discouraged (due to negative rewards) and positive rewards are only obtained by the agent in states that are hard to reach. We used NNs as function approximators in all cases. Implementation details are included in  the Appendix. In appendix 6.9 we also include an ablation study for hyperparameters $\kappa$, $\eta_1$ and $\eta_2$.

\subsection{Sparse Cartpole Swingup}
This is the classical cartpole swingup task with the added difficulty that positive rewards are only provided when the pole is `almost stabilized'. The action space is \{\textit{left, stay, right}\} and the state space is continuous and given by $s_t=(\cos(\theta_t),\sin(\theta_t),\dot{\theta}_t,x_t,\dot{x}_t)$, where $\theta$ is the angle of the pole and $x$ is the position of the cart. The feature that makes this task a challenging exploration task is the reward structure; every move is penalized with a $-0.1$ reward and a $+1$ is only observed when the cart is `almost' centered and the pole is `almost' upright and stabilized. More specifically, a $+1$ reward is obtained when $cos(\theta)>N/20$, $|\dot{\theta}|<1$ and $|x|<1-N/20$, where $N$ parameterizes the difficulty of the environment. This is an episodic task where each episode ends when the cart moves too far away from the center ($|x|>3$) or at $10^3$ time-steps, whichever occurs first. We ran each algorithm for $10^3$ episodes for ten random seeds. In this benchmark the performance measure is the best return attained during training. The results for $N$ from 0 to 19 are shown in figures \ref{fig:cartpole_best}. As expected SBEED fails in this task for all values of $N$ due to the lack of a mechanism to encourage deep exploration. ISL and BSP find close to optimal policies for all values of $N$ up to 10 approximately. However, for higher values of $N$ ISL outperforms BSP by a significant margin. For $N=19$ all algorithms fail, however ISL is the only algorithm who can obtain some positive rewards.
\begin{figure*}[t]
	\begin{center}
		\centering
		\begin{subfigure}{0.32\textwidth}
			\includegraphics[width=\textwidth]{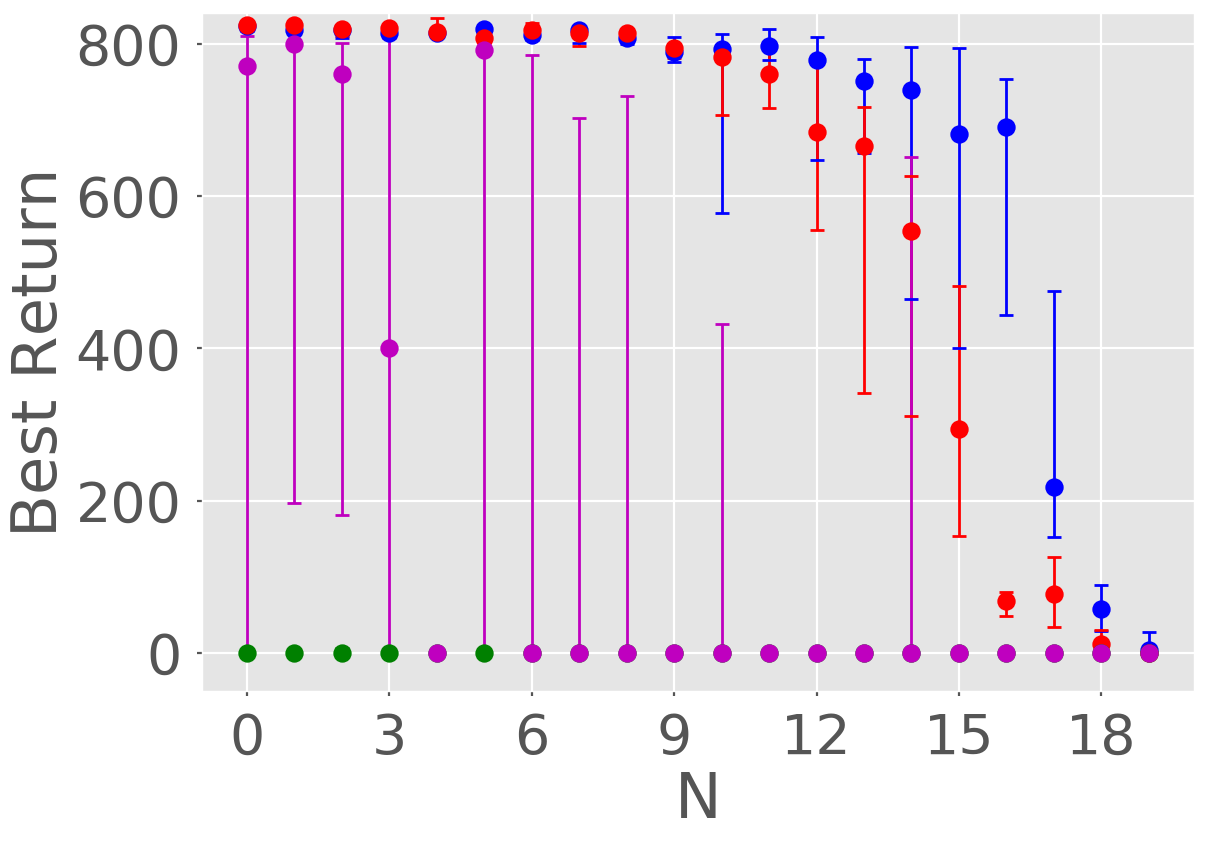}
			\subcaption{Cartpole swingup}
			\label{fig:cartpole_best}
		\end{subfigure}
		\begin{subfigure}{0.32\textwidth}
			\includegraphics[width=\textwidth]{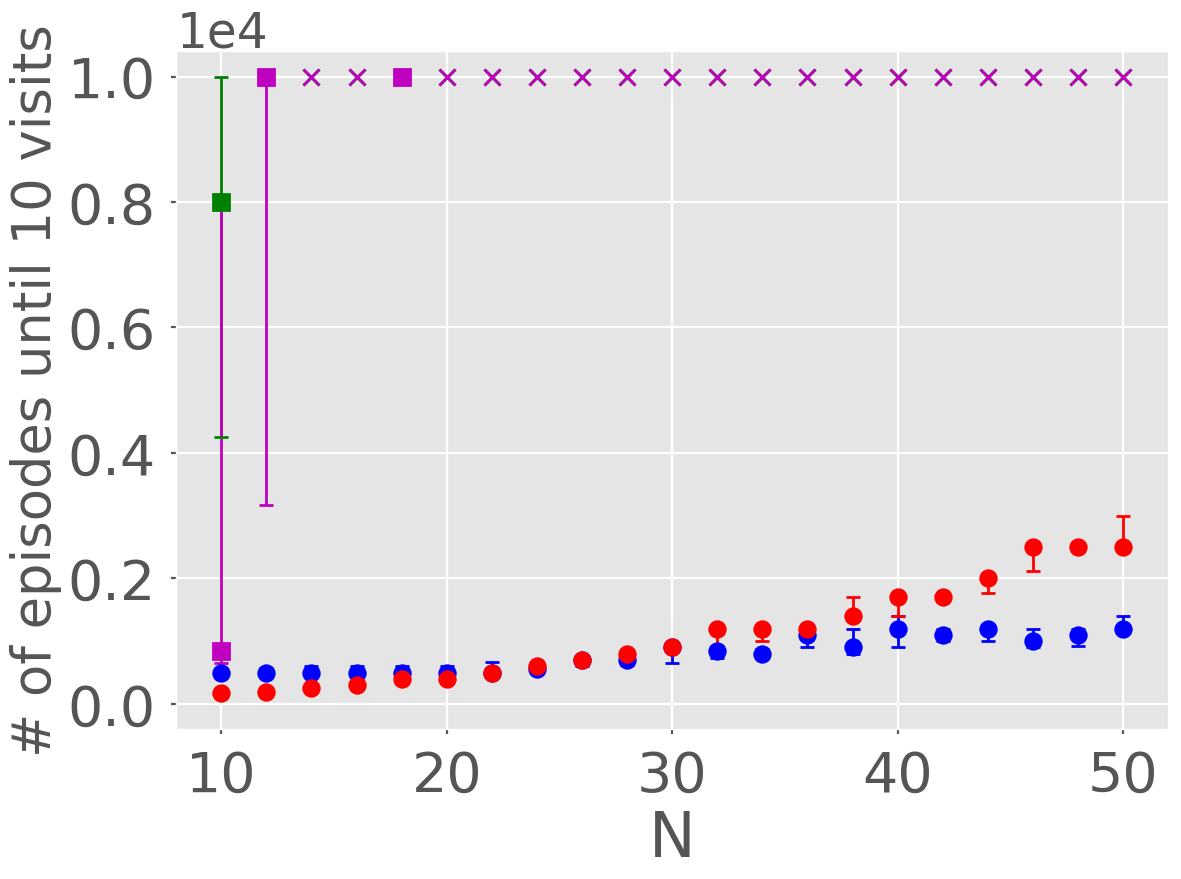}
			\subcaption{Deep sea}
			\label{fig:deep_sea_success}
		\end{subfigure}
		\begin{subfigure}{0.32\textwidth}
			\includegraphics[width=\textwidth]{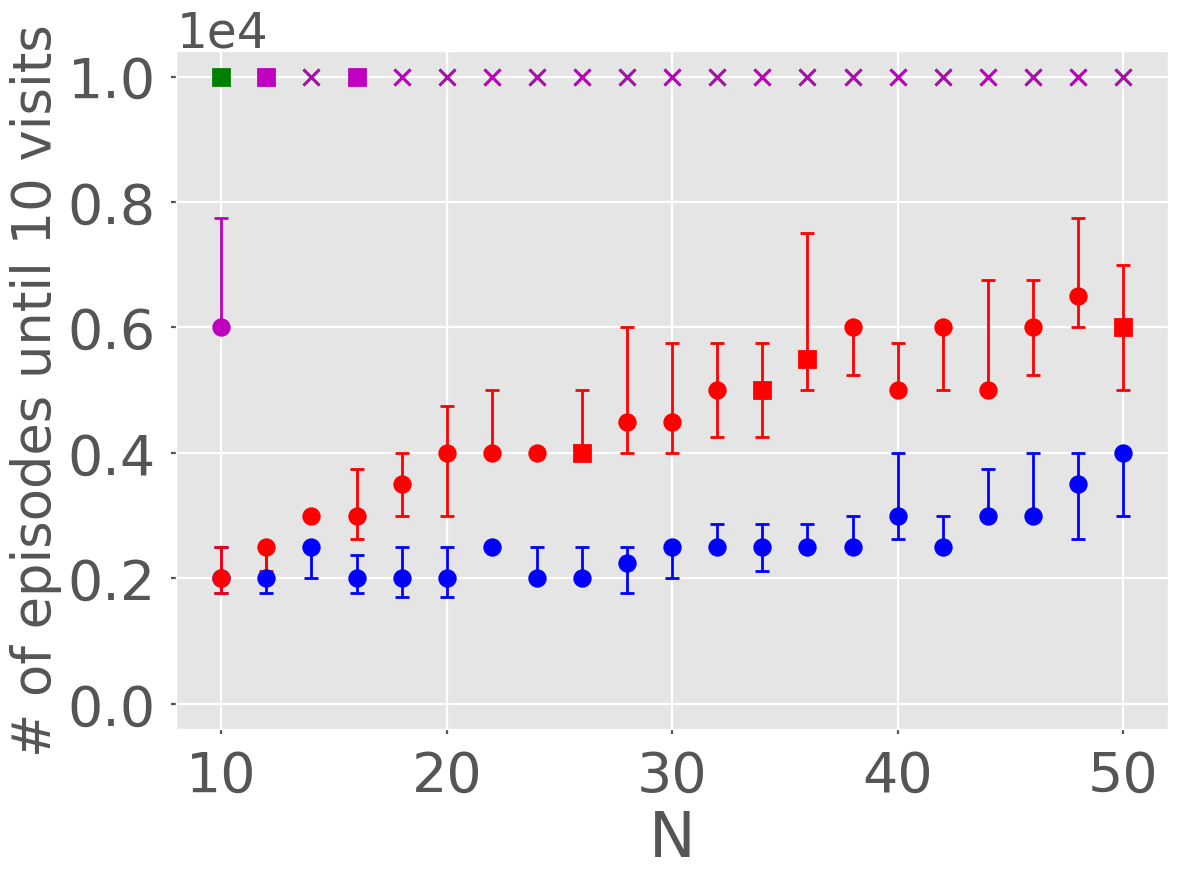}
			\subcaption{Deep sea stochastic}
			\label{fig:deep_sea_stoch_success}
		\end{subfigure}
		\caption{Blue, red, purple and green curves correspond to ISL, BSP, UBE and SBEED, respectively. In all cases we ran 10 experiments with different seeds, the plots show the median and first and third quartiles. In figures \ref{fig:deep_sea_success} and \ref{fig:deep_sea_stoch_success} we used dots are markers when the goal was accomplished (at least 10 visits where made to the desired state) for all seeds, square markers denote that the goal was accomplished for some seeds and the cross markers denote failure for all seeds.}
		\label{fig:figures}
	\end{center}
\end{figure*}

\subsection{Deep Sea and Deep Sea Stochastic}
Deep Sea is an episodic grid-world type game which consists of an $N\times N$ grid with $N^2$ states. The observation encodes the agent's row and column as a one-hot vector $s_t\in\{0,1\}^{N\times N}$. The environment has two possible actions $\{a_1,a_2\}$ and a mask $M\sim\text{Ber}(0.5)^{N\times N}$. The mask maps for every state each action to $\{\textit{left, right}\}$. The agent always starts at the top-left corner and at every step deterministically descends one level and further moves left or right (depending on the chosen action). Every time the agent moves right it gets a $-0.01/N$ reward, except for the bottom-right state in which case it gets a $+1$, while left moves always receive $0$ reward. The game ends after $N$ time steps and we ran each algorithm for $10^4$ episodes. The optimal strategy of the environment is to always move right in which case the total return would be $0.99$. Note that the parameter $N$ parameterizes the difficulty of the game. An important point about this environment is that policies that do not encourage deep exploration take $\mathcal{O}(2^N)$ episodes to learn the optimal policy \cite{osband2016generalization}, while for policies that explore optimally it takes at best $\mathcal{O}(N)$ episodes (because the state-action space is of size $\mathcal{O}(N^2)$ and at every episode $N$ state-action pairs are sampled). The Deep Sea Stochastic environment has the added complexity that transitions and rewards are stochastic. In particular, the reward of the last step of the episode is corrupted with additive Gaussian noise with zero mean and variance equal to 1 and further, agents transition to the right only with $1-1/N$ probability. In these benchmarks the performance measure is the amount of episodes played before the agent visits the goal state for the 10-th time. Hyperparameters were tuned for each of the cases, figures \ref{fig:deep_sea_success} and \ref{fig:deep_sea_stoch_success} show the results. Again SBEED fails at these tasks as expected. In the deterministic case ISL and BSP perform similarly. Note also that while ISL shows linear complexity for all values of $N$, BSP shows linear complexity only for low values of $N$. We clarify that the complexity of BSP could probably be improved for larger values of $N$ by enlarging the ensemble size, however note that this would come with an added computational cost since the computational cost of BSP scales linearly with the size of the ensemble. In the stochastic environment ISL performs similarly as it does in the deterministic environment and outperforms BSP for all values of $N$. Note further that ISL is the only algorithm that is able to solve the task for all values of $N$ for all seeds.

\section{Conclusion}
In this paper we provide a novel and original approach to address the problem of deep exploration. We also make an interesting connection with the literature on maximum entropy RL. In contrast to current RL algorithms and deep exploration strategies, where deriving the learning equations and the deep exploration strategies are treated separately, in our approach, both the learning equations and the deep exploration strategy are derived in tandem as the solution to a unique optimization problem. Furthermore, we have introduced a practical way of quantifying uncertainty over $q$ estimates that is usable with NNs. We hope this ideas might inspire novel and original research directions.

\section*{Broader Impact}

We believe the material we present does not introduce any societal or ethical considerations worth mentioning in this section.

\bibliography{isl2020}
\newpage

\onecolumn
\section{Appendix}
\thinmuskip=0mu
\medmuskip=0mu
\thickmuskip=0mu
\subsection{Proof of Lemma \ref{lemma:kl_divergence}}\label{appendix:kl_divergence}
In the proof we assume that the actions are ordered following the lemma's assumption. By definition, the KL divergence is given by:
\begin{align}
	D_{KL}(u_{s}^\pi(\widetilde{q}))&=\int_{\widetilde{q}}u_{s}^\pi(\widetilde{q})\log\left(\frac{u_{s}^\pi(\widetilde{q})}{u_{s}^\bullet(\widetilde{q})}\right)d\widetilde{q}\nonumber\\
	&\stackrel{(b)}{=}\sum_{a}\pi(a|s)\int_{\widetilde{q}}d_{(s,a)}(\widetilde{q})\log\left(\sum_{a'}\pi(a'|s)d_{(s,a')}(\widetilde{q})\right)d\widetilde{q}+\log(\ell_A)\label{eq:Dkl}
\end{align}
where in $(b)$ we used \eqref{eq:bellman_error_dist}. Note that since $\delta^\pi(s)$ is a piecewise constant distribution, the integral in \eqref{eq:Dkl} has the following closed form expression:
\begin{align}
	&\int_{\widetilde{q}}d_{(s,a_j)}(\widetilde{q})\log\left(\sum_{a'}\pi(a'|s)d_{(s,a')}(\widetilde{q})\right)d\widetilde{q}=\int_{-\ell_j}^{\ell_j}(2\ell_j)^{-1}\log\left(\sum_{a'}\pi(a'|s)d_{(s,a')}(\widetilde{q})\right)d\widetilde{q}\nonumber\\
	&=\ell_j^{-1}\int_{{\ell_{j-1}}}^{\ell_j}\log\left(\sum_{a'}\pi(a'|s)d_{(s,a')}(\widetilde{q})\right)d\widetilde{q}+\ell_j^{-1}\int_{0}^{\ell_{j-1}}\log\left(\sum_{a'}\pi(a'|s)d_{(s,a')}(\widetilde{q})\right)d\widetilde{q}\nonumber\\
	&=\frac{\ell_{j}-\ell_{j-1}}{\ell_{j}}\log\left(\sum_{b=0}^{A-j}\pi(A-b|s)\ell_{A-b}^{-1}\right)+\ell_j^{-1}\int_{0}^{\ell_{j-1}}\hspace{-5mm}\log\left(\sum_{a'}\pi(a'|s)d_{(s,a')}(b)\right)d\widetilde{q}\nonumber\\
	&=\sum_{n=1}^{j}\frac{\ell_{n}-\ell_{n-1}}{\ell_{j}}\log\left(\sum_{b=n}^{A}\frac{\pi(b|s)}{\ell_{b}}\right)\label{eq:integral}
\end{align}
Combining \eqref{eq:Dkl} and \eqref{eq:integral} we get:
\begin{align}\label{eq:dkl3}
&\sum_{k=1}^A\frac{\pi(a_k|s)}{\ell_k}\sum_{n=1}^{k}(\ell_{n}-\ell_{n-1})\log\left[\sum_{b=n}^{A}\frac{\pi(b|s)\ell_A}{\ell_{b}}\right]
\end{align}
Rearranging the terms in \eqref{eq:dkl3} we get:
\begin{align}\label{eq:app_dkl2}
&D_{KL}(u_s^\pi(\widetilde{q}))=\sum_{n=1}^{A}(\ell_{n}-\ell_{n-1})\left(\sum_{k=n}^A\frac{\pi(a_k|s)}{\ell_k}\right)\log\left[\sum_{b=n}^{A}\frac{\pi(b|s)\ell_A}{\ell_{b}}\right]
\end{align}
\subsection{Proof Lemma \ref{lemma:Pareto_dominated}}\label{appendix:proof_pareto_dominated}
We start proving that for any action $a_j$ that is Pareto dominated by another action $a_i$ it must be the case that $\pi^\star(a_j|s)=0$. We now present an assumption to make notation simpler.
\begin{assumption}
	In this section we assume without loss of generality that actions are ordered such that $\ell_i>\ell_j\iff i>j$.
\end{assumption}
We prove the lemma by contradiction. Assume that there is a $uc$-optimal policy $\pi_1$ for which $\pi_1(a_j|s_1)>0$. We now define policy $\pi_2$ as:
\begin{align}
&\pi_2(a|s)=\begin{cases}
\pi_1(a|s)-\alpha\hspace{5mm}\textrm{if $(s,a)=(s_1,a_j)$}\\
\pi_1(a|s)+\alpha\hspace{5mm}\textrm{if $(s,a)=(s_1,a_i)$}\\
\pi_1(a|s)\hspace{10mm}\textrm{else}
\end{cases}
\end{align}
where $0<\alpha<\pi_1(a|s)$. We show that $\frac{\partial v^{\pi_2}(s_1)}{\partial\alpha}\big|_{\alpha=0}>0$ and hence $v^{\pi_2}(s_1)>v^{\pi_1}(s_1)$ for a small enough $\alpha>0$, which contradicts the claim that $\pi_1$ is a $uc$-optimal policy.
\begin{align}
	\frac{\partial v^{\pi_2}(s_1)}{\partial\alpha}\bigg|_{\alpha=0}&=\frac{\partial v^{\pi_2}(s_1)}{\partial\pi_1(a_j|s_1)}\frac{\partial \pi_1(a_j|s_1)}{\partial\alpha}\bigg|_{\alpha=0}+\frac{\partial v^{\pi_2}(s_1)}{\partial\pi_1(a_i|s_1)}\frac{\partial \pi_1(a_i|s_1)}{\partial\alpha}\bigg|_{\alpha=0}\nonumber\\
	&=\frac{\partial v^{\pi_2}(s_1)}{\partial\pi_1(a_i|s_1)}-\frac{\partial v^{\pi_2}(s_1)}{\partial\pi_1(a_j|s_1)}\nonumber\\
	&=\underbrace{\widehat{q}^{\pi_2}(s,a_i)-\widehat{q}^{\pi_2}(s,a_j)}_{>0\textrm{ (due to Pareto assumption)}}+\kappa\left(\frac{\partial D_{KL}(u_{s_1}^{\pi_2}(\widetilde{q}))}{\partial\pi_1(a_j|s_1)}-\frac{\partial D_{KL}(u_{s_1}^{\pi_2}(\widetilde{q}))}{\partial\pi_1(a_i|s_1)}\right)\label{eq:diff_kls}
\end{align}
Using \eqref{eq:app_dkl2} we get the following expression for the gradient of the KL term:
\begin{align}\label{eq:grad_kl}
\frac{\partial D_{KL}(u_s^\pi(\widetilde{q}))}{\partial\pi(a_j|s)}&=\sum_{b=1}^j\frac{(\ell_b-\ell_{b-1})}{\ell_j}\log\left(\sum_{c=b}^{A}\frac{\pi(c|s)}{\ell_c}\right)+\ell_j^{-1}\sum_{b=1}^j(\ell_b-\ell_{b-1})\nonumber\\
&=\sum_{b=1}^j\frac{(\ell_b-\ell_{b-1})}{\ell_j}\log\left(\sum_{c=b}^{A}\frac{\pi(c|s)}{\ell_c}\right)+1
\end{align}
Combining \eqref{eq:grad_kl} and \eqref{eq:diff_kls} we get:
\begin{align}
&\frac{\partial D_{KL}(u_{s_1}^{\pi_2}(\widetilde{q}))}{\partial\pi_1(a_j|s_1)}-\frac{\partial D_{KL}(u_{s_1}^{\pi_2}(\widetilde{q}))}{\partial\pi_1(a_i|s_1)}=\sum_{b=1}^j\frac{(\ell_b-\ell_{b-1})}{\ell_j}\log\left(\sum_{c=b}^{A}\frac{\pi(c|s)}{\ell_c}\right)\nonumber\\
&\hspace{50mm}-\sum_{b=1}^i\frac{(\ell_b-\ell_{b-1})}{\ell_i}\log\left(\sum_{c=b}^{A}\frac{\pi(c|s)}{\ell_c}\right)\nonumber\\
&=\sum_{b=1}^j\frac{(\ell_b-\ell_{b-1})(\ell_i-\ell_j)}{\ell_j\ell_i}\log\left(\sum_{c=b}^{A}\frac{\pi(c|s)}{\ell_c}\right)-\sum_{b=j+1}^i\frac{(\ell_b-\ell_{b-1})}{\ell_i}\log\left(\sum_{c=b}^{A}\frac{\pi(c|s)}{\ell_c}\right)\nonumber\\
&\stackrel{(a)}{>}\sum_{b=1}^j\frac{(\ell_b-\ell_{b-1})(\ell_i-\ell_j)}{\ell_j\ell_i}\log\left(\sum_{c=j}^{A}\frac{\pi(c|s)}{\ell_c}\right)-\sum_{b=j+1}^i\frac{(\ell_b-\ell_{b-1})}{\ell_i}\log\left(\sum_{c=j}^{A}\frac{\pi(c|s)}{\ell_c}\right)\nonumber\\
&=\frac{(\ell_i-\ell_j)}{\ell_i}\log\left(\sum_{c=j}^{A}\frac{\pi(c|s)}{\ell_c}\right)-\frac{(\ell_i-\ell_{j})}{\ell_i}\log\left(\sum_{c=j}^{A}\frac{\pi(c|s)}{\ell_c}\right)\nonumber\\
&=0\label{eq:dif_dif_kls}
\end{align}
where $(a)$ is due to the fact that all terms in $\sum_{c=i}^{A}\frac{\pi(c|s)}{\ell_c}$ are non-negative and $log$ is a monotone increasing function. Combining \eqref{eq:dif_dif_kls} with \eqref{eq:diff_kls} we get:
\begin{align}
	\frac{\partial v^{\pi_2}(s_1)}{\partial\alpha}\bigg|_{\alpha=0}&>\widehat{q}^{\pi_2}(s,a_i)-\widehat{q}^{\pi_2}(s,a_j)>0
\end{align}
which completes the proof.

The proof for the Mixed Pareto case follows similarly. We assume that there is a $uc$-optimal policy $\pi_1$ that assigns non-zero probability to an action $a_k$ Mixed Pareto dominated by $a_i$ and $a_j$, $\pi_1(a_k|s_1)>0$. Since $a_k$ is assumed to be Mixed Pareto dominated, equations \eqref{eq:required_optimality} are satisfied. Similarly, as before, we define a new policy $\pi_2$ as:
\begin{align}
&\pi_2(a|s)=\begin{cases}
\pi_1(a|s)-\alpha\hspace{5mm}\textrm{if $(s,a)=(s_1,a_k)$}\\
\pi_1(a|s)+\alpha\frac{\left(\ell_k-\ell_j\right)\ell_i}{\left(\ell_i-\ell_j\right)\ell_k}\hspace{5mm}\textrm{if $(s,a)=(s_1,a_i)$}\\
\pi_1(a|s)+\alpha\frac{\left(\ell_i-\ell_k\right)\ell_j}{\left(\ell_i-\ell_j\right)\ell_k}\hspace{5mm}\textrm{if $(s,a)=(s_1,a_j)$}\\
\pi_1(a|s)\hspace{10mm}\textrm{else}
\end{cases}
\end{align}
The gradient of the value function becomes:
\begin{align}
\frac{\partial v^{\pi_2}(s_1)}{\partial\alpha}\bigg|_{\alpha=0}&=\frac{\partial v^{\pi_2}(s_1)}{\partial\pi_1(a_k|s_1)}\frac{\partial \pi_1(a_k|s_1)}{\partial\alpha}\bigg|_{\alpha=0}+\frac{\partial v^{\pi_2}(s_1)}{\partial\pi_1(a_j|s_1)}\frac{\partial \pi_1(a_j|s_1)}{\partial\alpha}\bigg|_{\alpha=0}\nonumber\\
&\hspace{3mm}+\frac{\partial v^{\pi_2}(s_1)}{\partial\pi_1(a_i|s_1)}\frac{\partial \pi_1(a_i|s_1)}{\partial\alpha}\bigg|_{\alpha=0}\nonumber\\
&=\frac{\partial v^{\pi_2}(s_1)}{\partial\pi_1(a_i|s_1)}\frac{\left(\ell_k-\ell_j\right)\ell_i}{\left(\ell_i-\ell_j\right)\ell_k}+\frac{\partial v^{\pi_2}(s_1)}{\partial\pi_1(a_j|s_1)}\frac{\left(\ell_i-\ell_k\right)\ell_j}{\left(\ell_i-\ell_j\right)\ell_k}-\frac{\partial v^{\pi_2}(s_1)}{\partial\pi_1(a_k|s_1)}\nonumber\\
&=\underbrace{\widehat{q}^{\pi_2}(s,a_i)\frac{\left(\ell_k-\ell_j\right)\ell_i}{\left(\ell_i-\ell_j\right)\ell_k}+\widehat{q}^{\pi_2}(s,a_j)\frac{\left(\ell_i-\ell_k\right)\ell_j}{\left(\ell_i-\ell_j\right)\ell_k}-\widehat{q}^{\pi_2}(s,a_k)}_{>0\textrm{ (due to Pareto assumption)}}\nonumber\\
&+\kappa\left(\frac{\partial D_{KL}(u_{s_1}^{\pi_2}(\widetilde{q}))}{\partial\pi_1(a_k|s_1)}-\frac{\partial D_{KL}(u_{s_1}^{\pi_2}(\widetilde{q}))}{\partial\pi_1(a_i|s_1)}\frac{\left(\ell_k-\ell_j\right)\ell_i}{\left(\ell_i-\ell_j\right)\ell_k}-\frac{\partial D_{KL}(u_{s_1}^{\pi_2}(\widetilde{q}))}{\partial\pi_1(a_j|s_1)}\frac{\left(\ell_i-\ell_k\right)\ell_j}{\left(\ell_i-\ell_j\right)\ell_k}\right)\label{eq:diff_kls_mix}
\end{align}
And finally:
\begin{align}
	&\frac{\partial D_{KL}(u_{s_1}^{\pi_2}(\widetilde{q}))}{\partial\pi_1(a_k|s_1)}-\frac{\partial D_{KL}(u_{s_1}^{\pi_2}(\widetilde{q}))}{\partial\pi_1(a_i|s_1)}\frac{\left(\ell_k-\ell_j\right)\ell_i}{\left(\ell_i-\ell_j\right)\ell_k}-\frac{\partial D_{KL}(u_{s_1}^{\pi_2}(\widetilde{q}))}{\partial\pi_1(a_j|s_1)}\frac{\left(\ell_i-\ell_k\right)\ell_j}{\left(\ell_i-\ell_j\right)\ell_k}\nonumber\\
	&=\sum_{b=1}^k\frac{(\ell_b-\ell_{b-1})}{\ell_k}\log\left(\sum_{c=b}^{A}\frac{\pi(c|s)}{\ell_c}\right)-\frac{\left(\ell_k-\ell_j\right)}{\left(\ell_i-\ell_j\right)\ell_k}\sum_{b=1}^i(\ell_b-\ell_{b-1})\log\left(\sum_{c=b}^{A}\frac{\pi(c|s)}{\ell_c}\right)\nonumber\\
	&\hspace{25mm}-\frac{\left(\ell_i-\ell_k\right)}{\left(\ell_i-\ell_j\right)\ell_k}\sum_{b=1}^j(\ell_b-\ell_{b-1})\log\left(\sum_{c=b}^{A}\frac{\pi(c|s)}{\ell_c}\right)\nonumber\\
	&=\sum_{b=j+1}^k\frac{(\ell_b-\ell_{b-1})}{\ell_k}\log\left(\sum_{c=b}^{A}\frac{\pi(c|s)}{\ell_c}\right)-\frac{\left(\ell_k-\ell_j\right)}{\left(\ell_i-\ell_j\right)\ell_k}\sum_{b=j+1}^i(\ell_b-\ell_{b-1})\log\left(\sum_{c=b}^{A}\frac{\pi(c|s)}{\ell_c}\right)\nonumber\\
	&=\sum_{b=j+1}^k\frac{(\ell_b-\ell_{b-1})(\ell_i-\ell_k)}{(\ell_i-\ell_j)\ell_k}\log\left(\sum_{c=b}^{A}\frac{\pi(c|s)}{\ell_c}\right)-\frac{\left(\ell_k-\ell_j\right)}{\left(\ell_i-\ell_j\right)\ell_k}\sum_{b=k+1}^i(\ell_b-\ell_{b-1})\log\left(\sum_{c=b}^{A}\frac{\pi(c|s)}{\ell_c}\right)\nonumber\\
	&>\sum_{b=j+1}^k\frac{(\ell_b-\ell_{b-1})(\ell_i-\ell_k)}{(\ell_i-\ell_j)\ell_k}\log\left(\sum_{c=k}^{A}\frac{\pi(c|s)}{\ell_c}\right)-\frac{\left(\ell_k-\ell_j\right)}{\left(\ell_i-\ell_j\right)\ell_k}\sum_{b=k+1}^i(\ell_b-\ell_{b-1})\log\left(\sum_{c=k}^{A}\frac{\pi(c|s)}{\ell_c}\right)\nonumber\\
	&=\frac{(\ell_k-\ell_{j})(\ell_i-\ell_k)}{(\ell_i-\ell_j)\ell_k}\log\left(\sum_{c=k}^{A}\frac{\pi(c|s)}{\ell_c}\right)-\frac{\left(\ell_k-\ell_j\right)(\ell_i-\ell_{k})}{\left(\ell_i-\ell_j\right)\ell_k}\log\left(\sum_{c=k}^{A}\frac{\pi(c|s)}{\ell_c}\right)=0\label{eq:dif_dif_kls2}
\end{align}
 Combining \eqref{eq:diff_kls_mix} with \eqref{eq:dif_dif_kls2} we get $\frac{\partial v^{\pi_2}(s_1)}{\partial\alpha}\big|_{\alpha=0}>0$, which completes the proof.
 
\subsection{Proof Theorem \ref{lemma:opt_policy}} \label{appendix:theorem}
Due to Lemma \ref{lemma:Pareto_dominated} we already know that for Pareto dominated actions $\pi^\star(a|s)=0$. Therefore, without loss of generality, we assume that all actions are Pareto optimal. Furthermore, to simplify notation in the proof we assume that $\sigma_s(a_j)=j$ and hence we will not use the ordering function $\sigma_s(a)$. We start differentiating \eqref{eq:optimization_problem_nice} with respect to $\pi(a|s)$ and equating to zero:
\begin{align}\label{eq:diff_rel}
\kappa^{-1}\widehat{q}(s,a)-\frac{\partial D_{KL}(u_s^\star(\widetilde{q}))}{\partial\pi^\star(a|s)}=0
\end{align}
Using \eqref{eq:dkl} we get the following expression for the gradient of the KL term:
\begin{align}
\frac{\partial D_{KL}(u_s^\star(\widetilde{q}))}{\partial\pi^\star(a_j|s)}&=\sum_{b=1}^j\frac{(\ell_b-\ell_{b-1})}{\ell_j}\log\left(\sum_{c=b}^{A}\frac{\pi^\star(c|s)}{\ell_c}\right)+\ell_j^{-1}\sum_{b=1}^j(\ell_b-\ell_{b-1})\nonumber\\
&=\sum_{b=1}^j\frac{(\ell_b-\ell_{b-1})}{\ell_j}\log\left(\sum_{c=b}^{A}\frac{\pi^\star(c|s)}{\ell_c}\right)+1\nonumber\\
&=\ell_j^{-1}(\ell_j-\ell_{j-1})\log\left(\sum_{c=j}^{A}\pi^\star(c|s)\ell_c^{-1}\right)+\frac{\ell_{j-1}}{\ell_{j}}\left(\frac{\partial D_{KL}(u_s^\star(\widetilde{q}))}{\partial\pi(a_{j-1}|s)}-1\right)+1\nonumber\\
&=\ell_j^{-1}(\ell_j-\ell_{j-1})\log\left(\sum_{c=j}^{A}\pi^\star(c|s)\ell_c^{-1}\right)+\frac{\ell_{j-1}}{\ell_{j}}\frac{\partial D_{KL}(u_s^\star(\widetilde{q}))}{\partial\pi^\star(a_{j-1}|s)}+\frac{\ell_{j}-\ell_{j-1}}{\ell_{j}}\label{eq:rec_gradient}
\end{align}
Now we can solve for each action combining the recursive form given in \eqref{eq:rec_gradient} with \eqref{eq:diff_rel}. Recall that due to the specific numbering of actions we assumed, $a_A$ is the action who has the greatest uncertainty $\ell_A$. Hence, we can start solving for $a_A$ as follows:
\begin{align}
\allowdisplaybreaks
0&=\kappa^{-1}\widehat{q}(s,a_A)-\ell_A^{-1}(\ell_{A}-\ell_{A-1})\log\left(\pi^\star(a_A|s)\ell_{A}^{-1}\right)-\frac{\ell_{A-1}}{\ell_{A}}\frac{\partial D_{KL}(u_s^\star(\widetilde{q})||u_s^\bullet(\widetilde{q}))}{\partial\pi^\star(a_{A-1}|s)}-\frac{\ell_{A}-\ell_{A-1}}{\ell_{A}}\nonumber\\
&=\kappa^{-1}\widehat{q}(s,a_A)-\ell_A^{-1}(\ell_{A}-\ell_{A-1})\log\left(\pi^\star(a_A|s)\ell_{A}^{-1}\right)-\kappa^{-1}\frac{\ell_{A-1}}{\ell_{A}}\widehat{q}(s,a_{A-1})-\frac{\ell_{A}-\ell_{A-1}}{\ell_{A}}\nonumber\\
&\rightarrow\pi^\star(a_A|s)\propto \ell_Ap_A(s)\label{eq:pi_A}
\end{align}
where we defined:
\begin{align}\label{eq:p}
p_j(s)=\mathrm{exp}\left[\frac{\ell_{j}(s)\widehat{q}(s,a_j)-\ell_{j-1}(s)\widehat{q}(s,a_{j-1})}{\kappa\left(\ell_{j}(s)-\ell_{j-1}(s)\right)}\hspace{-0.4mm}\right]
\end{align}
Following the same procedure as in \eqref{eq:pi_A} we can solve for $\pi^\star(a_j|s)$.
\begin{align}
0&=\kappa^{-1}\widehat{q}(j,s)-\frac{(\ell_{j}-\ell_{j-1})}{\ell_j}\log\left(\sum_{c=j}^{A}\frac{\pi^\star(c|s)}{\ell_c}\right)-\frac{\ell_{j-1}}{\ell_{j}}\frac{\partial D_{KL}(u_s^\star(\widetilde{q}))}{\partial\pi^\star(a_{j-1}|s)}-\frac{\ell_{j}-\ell_{j-1}}{\ell_{j}}\nonumber\\
&\stackrel{(a)}{=}\kappa^{-1}\widehat{q}(s,a_j)-\frac{(\ell_{j}-\ell_{j-1})}{\ell_j}\log\left(\sum_{c=j}^{A}\frac{\pi^\star(c|s)}{\ell_c}\right)-\kappa^{-1}\frac{\ell_{j-1}}{\ell_{j}}\widehat{q}(s,a_{j-1})-\frac{\ell_{j}-\ell_{j-1}}{\ell_{j}}\nonumber\\
&\rightarrow\pi^\star(a_j|s)\propto\ell_jp_j(s)e^{-1}-\ell_j\sum_{c=j+1}^{A}\pi^\star(c|s)\ell_{c}^{-1}\label{eq:policy_rec}
\end{align}
where in $(a)$ we used \eqref{eq:diff_rel}. Starting with $j=A-1$, unwinding \eqref{eq:policy_rec} one step of the recursion at a time, and normalizing we get:
\begin{align}
\pi^\star(a_j|s)&=\frac{\ell_j(p_j(s)-p_{j+1}(s))}{\sum_{j=1}^A(\ell_j-\ell_{j-1})p_j(s)}
\end{align}
which completes the proof.

\subsection{Proof Lemma \ref{lemma:policy_eval}}
\label{appendix:lemma_policy_eval}
We start by showing that $\mathcal{T^\ell}$ is a contraction mapping. For this we define two mappings $q_1,q_2:\mathcal{S}\times\mathcal{A}:\rightarrow\mathbb{R}$. We start assuming $\mathcal{T^\ell}q_1(s,a)\geq\mathcal{T^\ell}q_2(s,a)$, then:
\begin{align}
0&\leq\mathcal{T^\ell}q_1(s,a)-\mathcal{T^\ell}q_2(s,a)=r(s,a)+\gamma\Ex_{\bs{s'}}v_1^\star(\bs{s'})-r(s,a)-\gamma\Ex_{\bs{s'}}v_2^\star(\bs{s'})=\gamma\Ex_{\bs{s'}}\left(v_1^\star(\bs{s'})-v_2^\star(\bs{s'})\right)\nonumber\\
&\stackrel{(a)}{=}\gamma\Ex_{\bs{s'}}\left(\max_{\pi}\Ex_{\bs{a}\sim\pi}\left(q_1(\bs{s'},\bs{a})-\kappa D_{KL}(u_{\bs{s'}}^{\pi}(\widetilde{q}))\right)-\max_{\pi}\Ex_{\bs{a}\sim\pi}\left(q_2(\bs{s'},\bs{a})-\kappa D_{KL}(u_{\bs{s'}}^{\pi}(\widetilde{q}))\right)\right)\nonumber\\
&\leq\gamma\Ex_{\bs{s'}}\max_{\pi}\Ex_{\bs{a}\sim\pi}\left(q_1(\bs{s'},\bs{a})-\kappa D_{KL}(u_{\bs{s'}}^{\pi}(\widetilde{q}))-q_2(\bs{s'},\bs{a})+\kappa D_{KL}(u_{\bs{s'}}^{\pi}(\widetilde{q}))\right)\nonumber\\
&=\gamma\Ex_{\bs{s'}}\max_{\pi}\Ex_{\bs{a}\sim\pi}\left(q_1(\bs{s'},\bs{a})-q_2(\bs{s'},\bs{a})\right)\nonumber\\
&\leq\gamma\max_{s,a}\left(q_1(s,a)-q_2(s,a)\right)\\
&\rightarrow 0\leq\mathcal{T^\ell}q_1(s,a)-\mathcal{T^\ell}q_2(s,a)\leq\gamma\max_{s,a}\left(q_1(s,a)-q_2(s,a)\right)
\end{align}
where in $(a)$ we used $v^\star(s)=\max_{\pi}\Ex_{\bs{a}\sim\pi}\left(q^\star(s,\bs{a})-\kappa D_{KL}(u_s^\pi(\widetilde{q}))\right)$. Noting that if $\mathcal{T^\ell}q_1(s,a)<\mathcal{T^\ell}q_2(s,a)$ the same argument applies. Exchanging the roles of $q_1$ and $q_2$ we can conclude that for any $(s,a)$ pair it holds:
\begin{align}
&0\leq|\mathcal{T^\ell}q_1(s,a)-\mathcal{T^\ell}q_2(s,a)|\leq\gamma\max_{s,a}\left|q_1(s,a)-q_2(s,a)\right|
\end{align}
which concludes the proof that $\mathcal{T^\ell}$ is a contraction mapping. Applying Banach's Fixed-Point Theorem concludes the proof (see Theorem 6.2.3 \cite{puterman}).

\subsection{Proof Lemma \ref{lemma:l_eval}}
\label{appendix:l_eval}
The proof follows by noting that due to Lemma 4 after application of $\ell$-Policy Evaluation, it will hold $\delta(s,a)=0$ for any $(s,a)$ pair. Therefore,
\begin{align}
\ell(s,a)\leftarrow|\delta(s,a)|+\gamma\Ex_{\bm{s}'}\max_{a}\ell(\bm{s}',a)\hspace{0.6mm}=\hspace{0.6mm}\gamma\Ex_{\bm{s}'}\max_{a}\ell(\bm{s}',a)
\end{align}
and hence $\ell(s,a)$ for any $(s,a)$ pair decays $\gamma$-linearly to 0. Combining this result with Remark \ref{remark:limit} concludes the proof.

\subsection{Cartpole Swingup Implementation Details}\label{appendix:cartpole}
The implementation details are as follows. All implementations used TensorFlow. We used neural networks as function approximators in all cases. All NN's are composed of two hidden layers with fifty units per layer. ReLu's are used in all hidden layers. All output layers are linear, except for the outputs of the networks that approximate the $\ell$ values whose outputs pass through sigmoid functions with limits $[1e-12,100]$. We used the ADAM optimizer in all cases. To approximate the $\ell$ values we used one network with only one output per action instead of one network with $A$ outputs, empirically this provides better performance without making any difference in terms of computation requirements. All hyperparameters were set by iterating through them and performing individual per-hyperparameter grid-searches; resulting values are shown in table \ref{table:Hyperparameters_cartpole}.
\begin{table}[H]
	\caption{Hyperparameters for Cartpole Swingup.}
	\label{table:Hyperparameters_cartpole}
	\begin{center}
		\begin{small}
			\begin{tabular}{ccccr}
				\toprule
				ISL & BDQN & UBE & SBEED \\
				\midrule
				$\gamma=0.99$ & $\gamma=0.99$ & $\gamma=0.99$ & $\gamma=0.99$\\
				replay buffer size = $1e5$ & replay buffer size = $1e5$ & replay buffer size = $1e5$ & replay buffer size = $1e5$\\
				batch size $=64$    & batch size $= 128$ & batch size $=128$ & batch size $=256$ \\
				learning rate ($q$) = $2e-4$  & learning rate ($q$) = $5e-4$ & learning rate ($q$) = $5e-4$ & learning rate ($\rho$) = $1e-3$ \\
				learning rate ($\rho$) = $5e-6$      & $\epsilon=0$ & learning rate (u) = $1e-4$ & learning rate ($v$) = $1e-3$ \\
				learning rate ($\ell$) = $2e-5$      & mask probability $=0.5$ & $\mu$ = $20$  &learning rate ($\pi$) = $1e-3$ \\
				target update period $= 4$     & target update period $= 4$ & target update period $= 4$ & $\kappa=0.5$\\
				$T=1$    &  sgd period$=1$ & $T=1$ & $T=1$\\
				$I=3$    &      & $I=1$  & $I=1$\\
				$\eta_1=0.8$      & & $\beta$=$1$ & $\eta=0$\\
				$\eta_2=0.7$      &  \\
				$\kappa=13$   &  \\
				\bottomrule
			\end{tabular}
		\end{small}
	\end{center}
\end{table}

\subsection{Deep Sea Implementation Details}\label{appendix:deep_sea}
The architecture of the implementation is the same as the one used for the Cartpole Swingup task. All hyperparameters were set by iterating through them and performing individual per-hyperparameter grid-searches; resulting values are shown in table \ref{table:Hyperparameters_deep_sea}.
\begin{table}[H]
	\caption{Hyperparameters for Deep Sea.}
	\label{table:Hyperparameters_deep_sea}
	\begin{center}
		\begin{small}
			\begin{tabular}{ccccr}
				\toprule
				ISL & BDQN & UBE & SBEED \\
				\midrule
				$\gamma=0.99$ & $\gamma=0.99$ & $\gamma=0.99$ & $\gamma=0.99$\\
				replay buffer size = $1e5$ & replay buffer size = $1e5$ & replay buffer size = $1e5$ & replay buffer size = $1e5$\\
				batch size $=256$    & batch size $= 128$ & batch size $=128$ & batch size $=256$ \\
				learning rate ($q$) = $2e-4$      & learning rate ($q$) = $5e-4$ & learning rate ($q$) = $5e-4$ & learning rate ($\rho$) = $1e-2$ \\
				learning rate ($\rho$) = $1e-4$      & $\epsilon=0$ & learning rate (u) = $1e-4$  & learning rate ($v$) = $1e-2$ \\
				learning rate ($\ell$) = $5e-5$      & mask probability $=0.5$ & $\mu$ = $10$  & learning rate ($\pi$) = $1e-2$ \\
				target update period $=2$     & target update period $= 5$ & target update period $= 4$ & $\kappa=0.5$\\
				$T=2$    &  sgd period$=1$ & $T=1$& $T=1$\\
				$I=1$    &         & $I=1$& $I=1$\\
				$\eta_1=0.9$      &  & $\beta=2$ & $\eta=0$\\
				$\eta_2=0.1$      &  \\
				$\kappa=1$   &  \\
				\bottomrule
			\end{tabular}
		\end{small}
	\end{center}
\end{table}

\subsection{Deep Sea Stochastic Implementation Details}\label{appendix:deep_sea_sto}
The implementation architecture for the Deep Sea Stochastic game is the same as for Deep Sea, only some hyperparameters change (see table \ref{table:Hyperparameters_deep_sea_sto}).
\begin{table}[H]
	\caption{Hyperparameters for Deep Sea Stochastic.}
	\label{table:Hyperparameters_deep_sea_sto}
	\begin{center}
		\begin{small}
			\begin{tabular}{ccccr}
				\toprule
				ISL & BDQN & UBE & SBEED \\
				\midrule
				$\gamma=0.99$ & $\gamma=0.99$ & $\gamma=0.99$ & $\gamma=0.99$ \\
				replay buffer size = $1e5$ & replay buffer size = $1e5$ & replay buffer size = $1e5$ & replay buffer size = $1e5$ \\
				batch size $=256$    & batch size $= 128$ & batch size $= 128$ & batch size $=256$ \\
				learning rate ($q$) = $1e-4$      & learning rate ($q$) = $1e-5$ & learning rate ($q$) = $2e-5$ & learning rate ($\rho$) = $1e-3$ \\
				learning rate ($\rho$) = $1e-4$      & $\epsilon=0$ & learning rate (u) = $1e-5$ & learning rate ($v$) = $1e-3$ \\
				learning rate ($\ell$) = $5e-5$      & mask probability $=0.5$ &$\mu$=$10$& learning rate ($\pi$) = $1e-3$ \\
				target update period $=2$     & target update period $= 5$ & target update period $= 4$ & $\kappa=0.5$\\
				$T=10$    &  sgd period$=1$ & $T=1$& $T=1$\\
				$I=1$    &         & $I=1$& $I=1$\\
				$\eta_1=1.0$      &  &$\beta=2$& $\eta=0.01$\\
				$\eta_2=0.5$      &  \\
				$\kappa=1$   &  \\
				\bottomrule
			\end{tabular}
		\end{small}
	\end{center}
\end{table}

\subsection{Ablation Study}\label{appendix:ablation}
In this section we include an ablation study for the hyperparameters $\kappa$, $\eta_1$ and $\eta_2$ using the cartpole task. We sweep $\kappa$ through $20\times2^{[-4,-3,-2,-1,0,1,2,3,4]}$ and $\eta_1$ and $\eta_2$ through $[0,0.1,\cdots,0.9,1]$. Results are shown in figure \ref{fig:ablation_figures}; to make these plots more visually clear we only show three of the curves in each plot; we note though that the effects of the hyperparameters on the performance of ISL are clear with these three curves. The rest of the curves can be found figures \ref{fig:ablation_k_all}, \ref{fig:ablation_eta1_all} and \ref{fig:ablation_eta2_all}. The exploration-exploitation trade-off managed by $\kappa$ is clear in figure \ref{fig:ablation_k}; increasing $\kappa$ improves results for high values of $N$ (since more exploration is required in these cases) but does so at the expense of exploiting less and therefore the best return diminishes for low values of $N$ (where less exploration is necessary). Figures \ref{fig:ablation_e1} and \ref{fig:ablation_e2} indicate that the effect of hyperparameters $\eta_1$ and $\eta_2$ on the performance of ISL are less drastic than that of $\kappa$, however in both cases the best performance is obtained for intermediate values of $\eta_1$ and $\eta_2$ as expected.
\begin{figure}[H]
	\begin{center}
		\centering
		\begin{subfigure}{0.32\textwidth}
			\includegraphics[width=\textwidth]{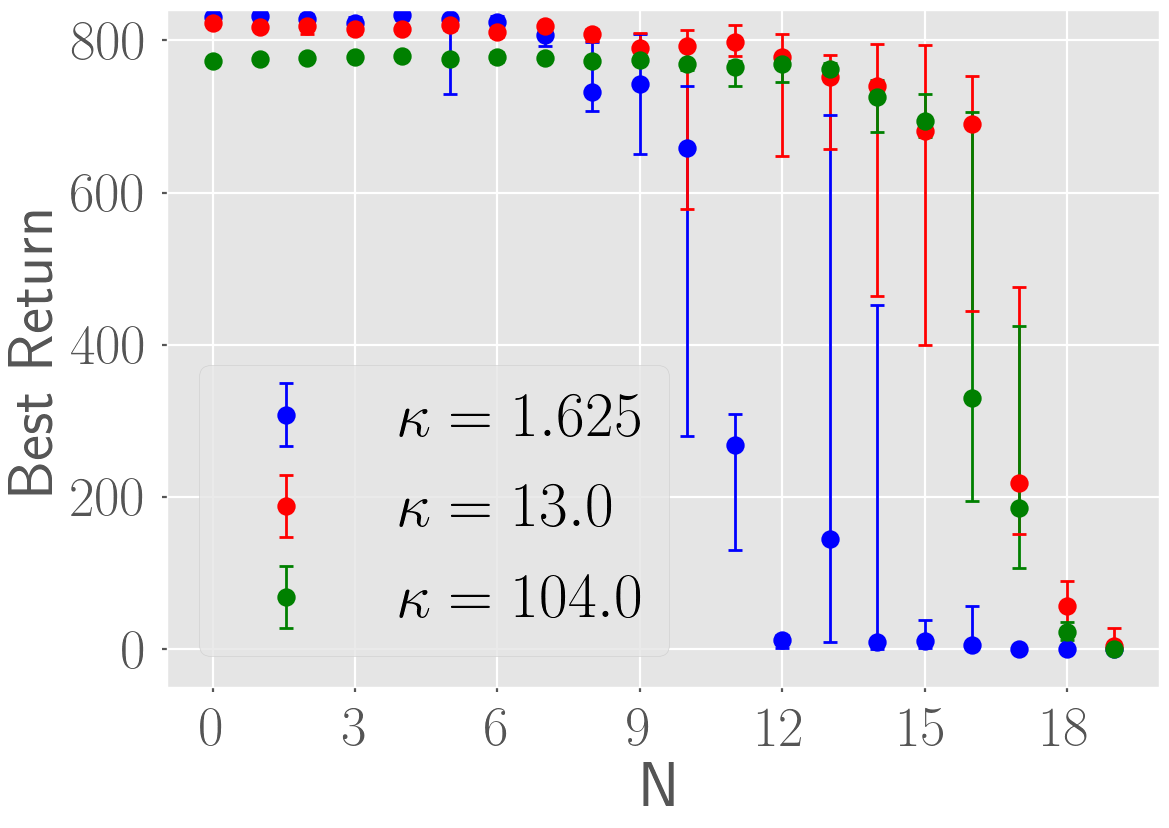}
			\subcaption{$\kappa$}
			\label{fig:ablation_k}
		\end{subfigure}
		\begin{subfigure}{0.32\textwidth}
			\includegraphics[width=\textwidth]{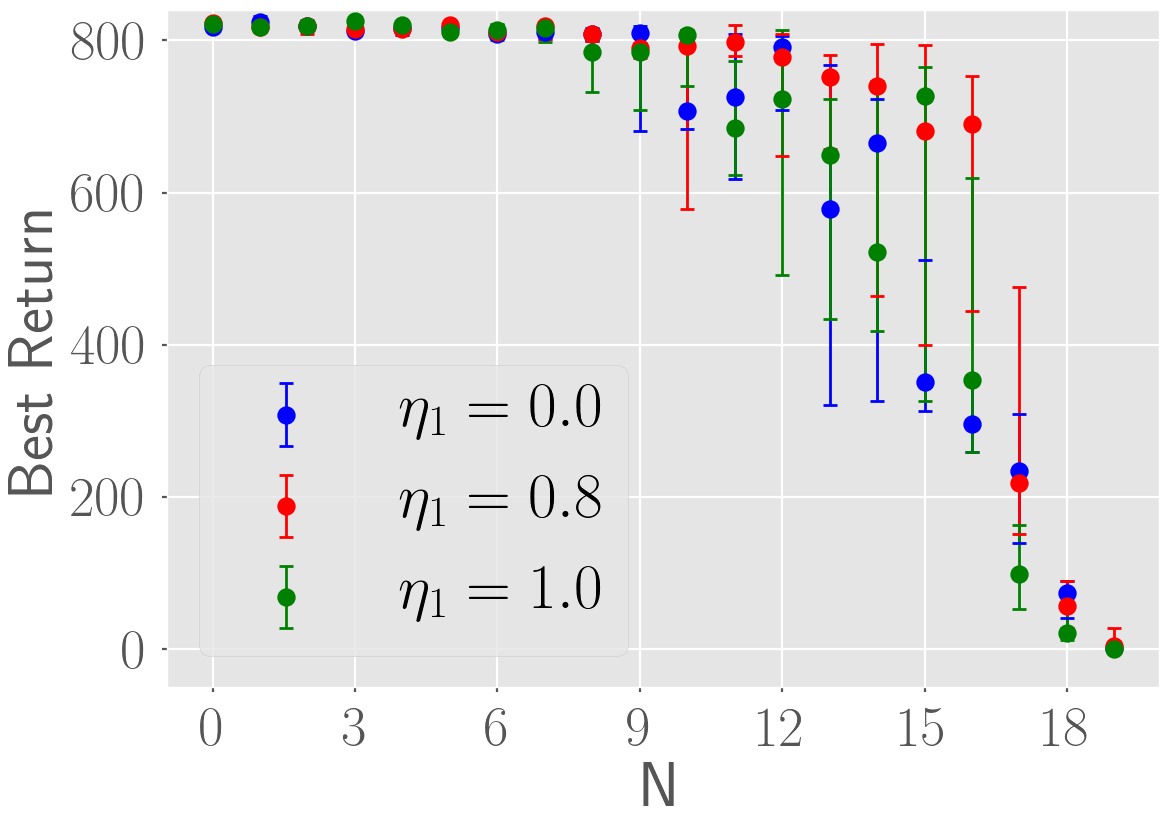}
			\subcaption{$\eta_1$}
			\label{fig:ablation_e1}
		\end{subfigure}
		\begin{subfigure}{0.32\textwidth}
			\includegraphics[width=\textwidth]{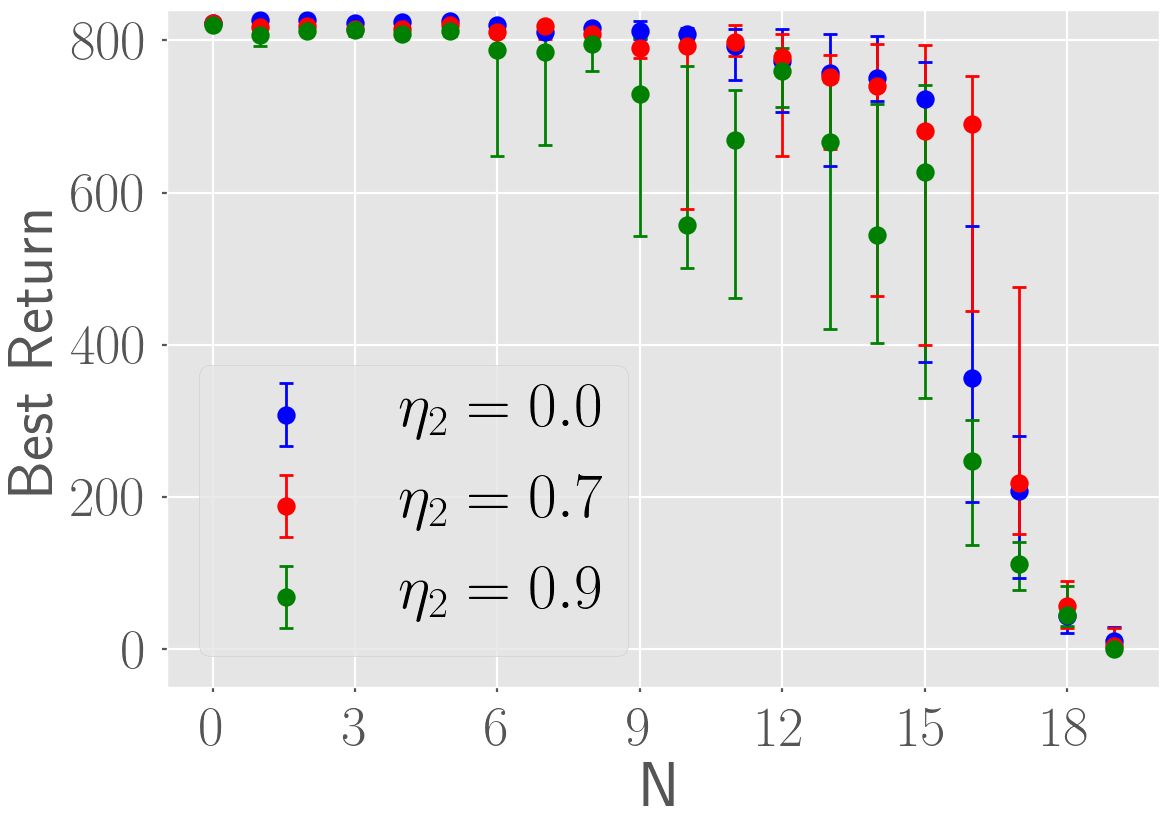}
			\subcaption{$\eta_2$}
			\label{fig:ablation_e2}
		\end{subfigure}
		\caption{In all cases we ran 10 experiments with different seeds, the plots show the median and first and third quartiles.}
		\label{fig:ablation_figures}
	\end{center}
\end{figure}

\subsubsection{Ablation study for $\kappa$}

\begin{figure}[H]
	\centering
	\begin{subfigure}{.3\textwidth}
		\includegraphics[width=\textwidth]{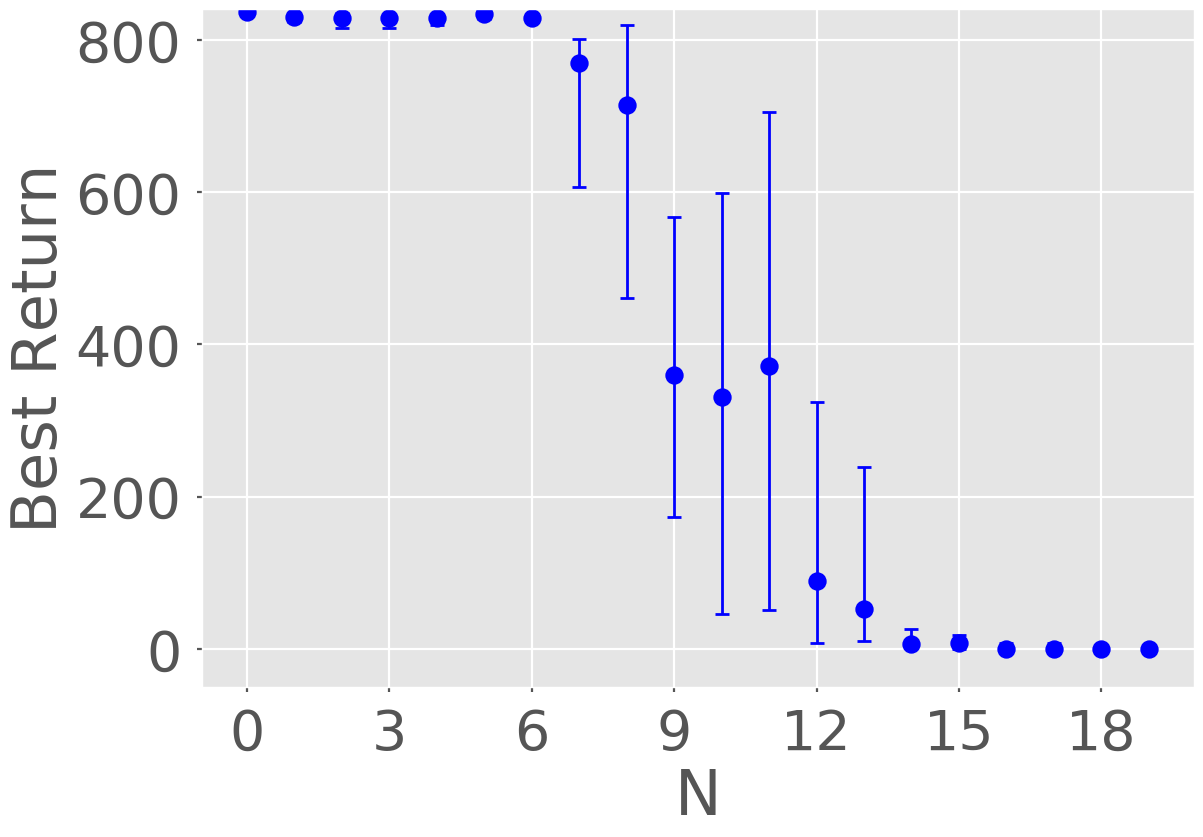}
		\subcaption{$\kappa=0.8125$}
	\end{subfigure}
	\begin{subfigure}{.3\textwidth}
		\includegraphics[width=\textwidth]{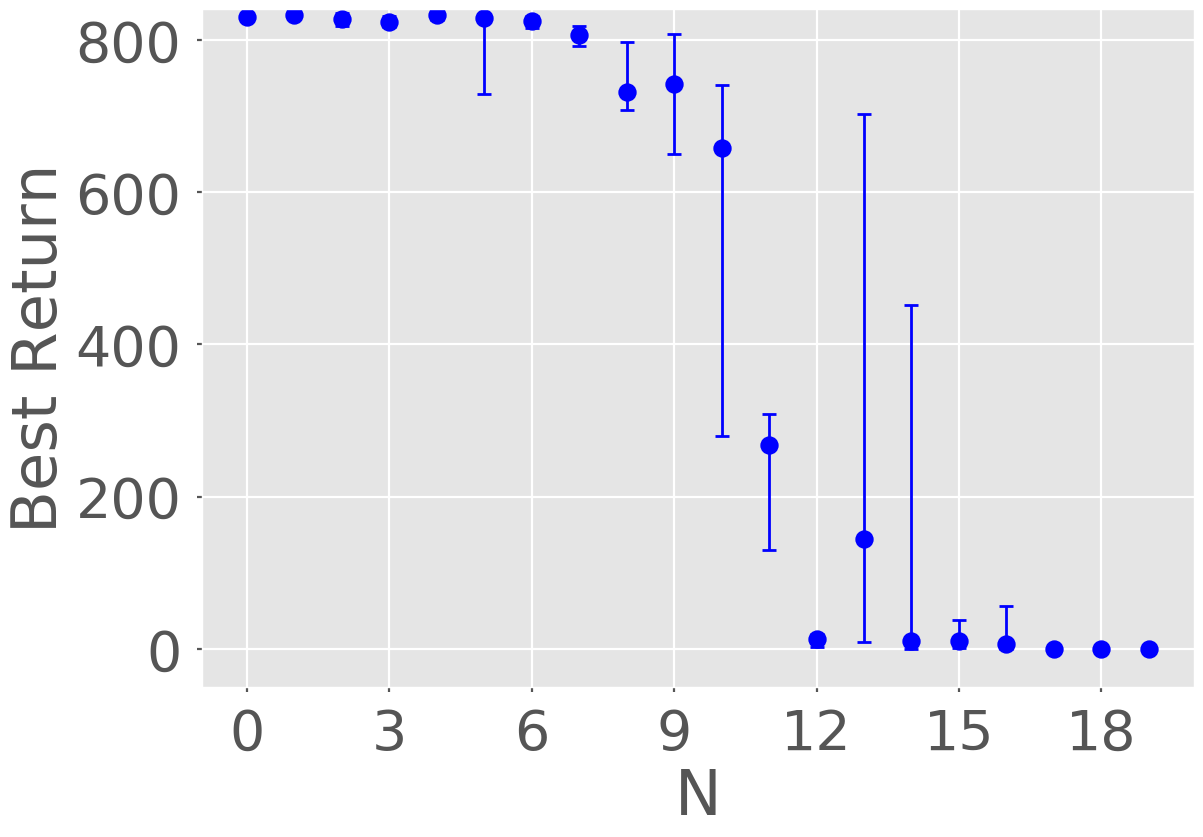}
		\subcaption{$\kappa=1.625$}
	\end{subfigure}
	\begin{subfigure}{.3\textwidth}
		\includegraphics[width=\textwidth]{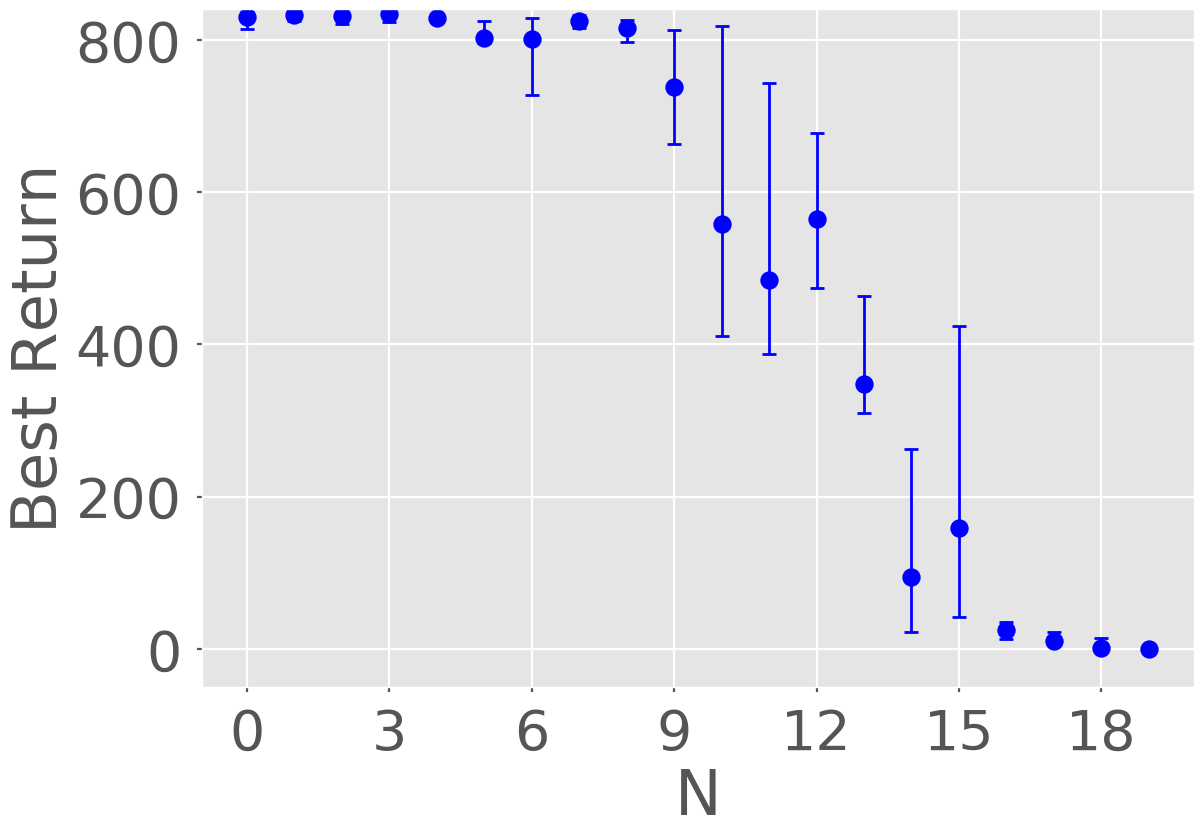}
		\subcaption{$\kappa=3.25$}
	\end{subfigure}
	\begin{subfigure}{.3\textwidth}
		\includegraphics[width=\textwidth]{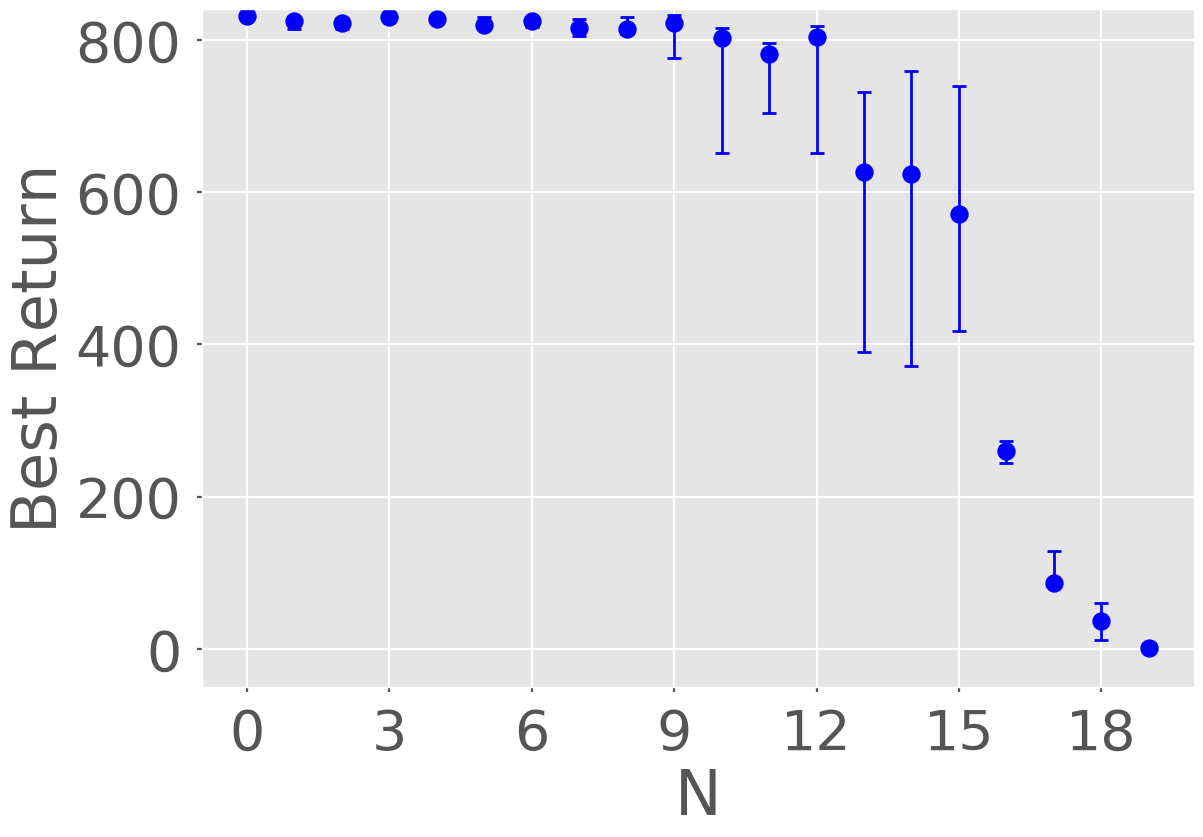}
		\subcaption{$\kappa=6.5$}
	\end{subfigure}
	\begin{subfigure}{.3\textwidth}
		\includegraphics[width=\textwidth]{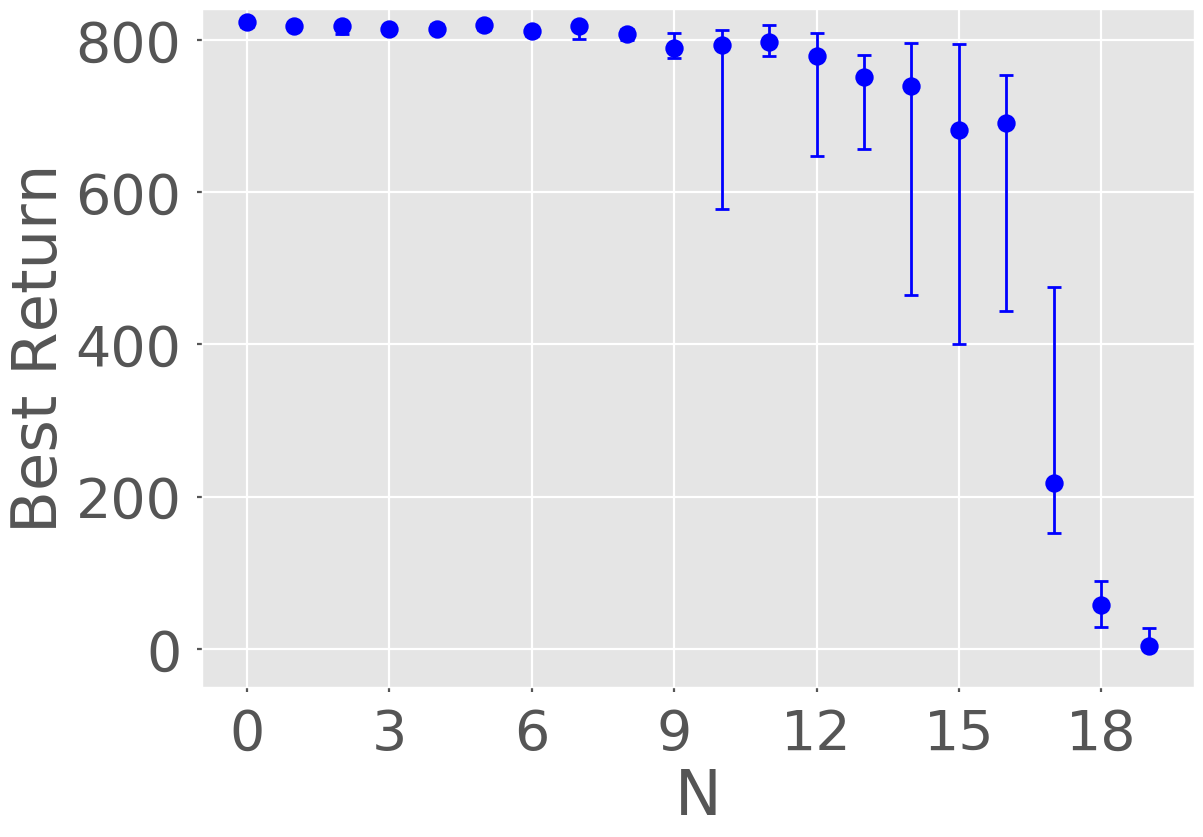}
		\subcaption{$\kappa=13$}
	\end{subfigure}
	\begin{subfigure}{.3\textwidth}
		\includegraphics[width=\textwidth]{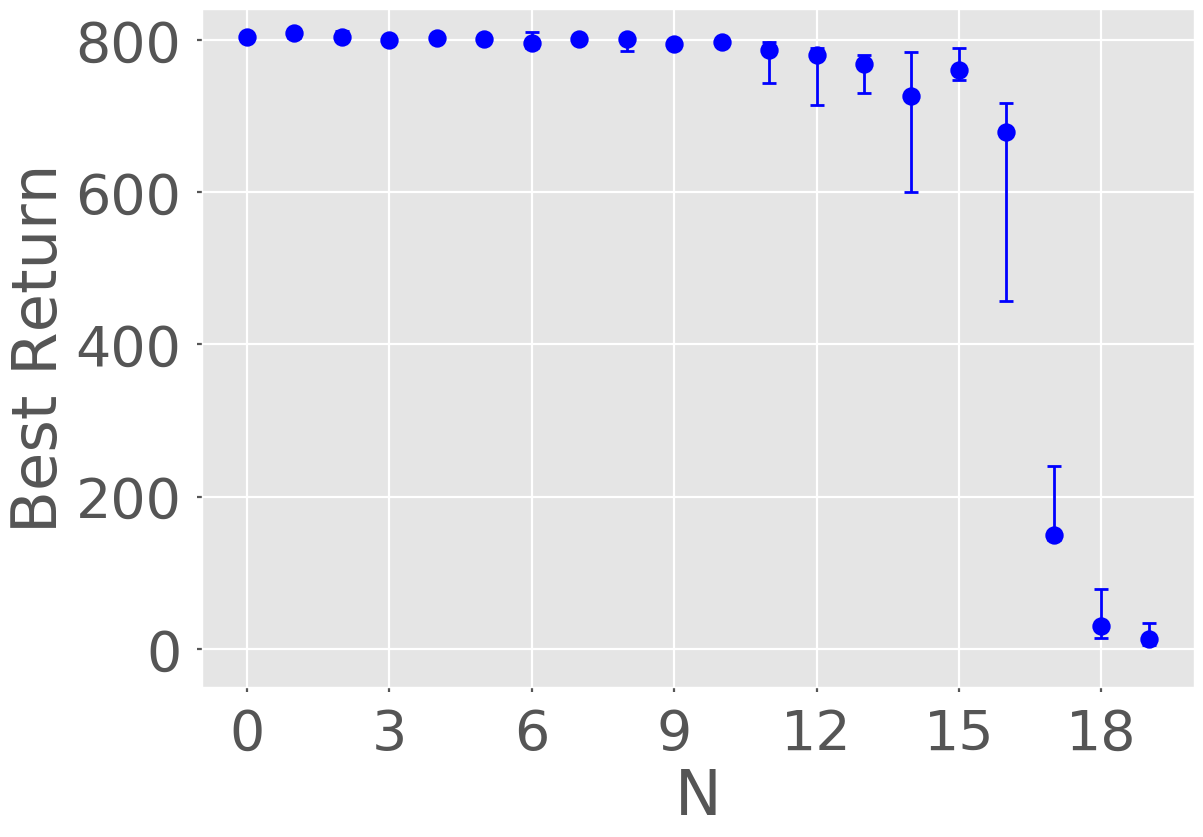}
		\subcaption{$\kappa=26$}
	\end{subfigure}
	\begin{subfigure}{.3\textwidth}
		\includegraphics[width=\textwidth]{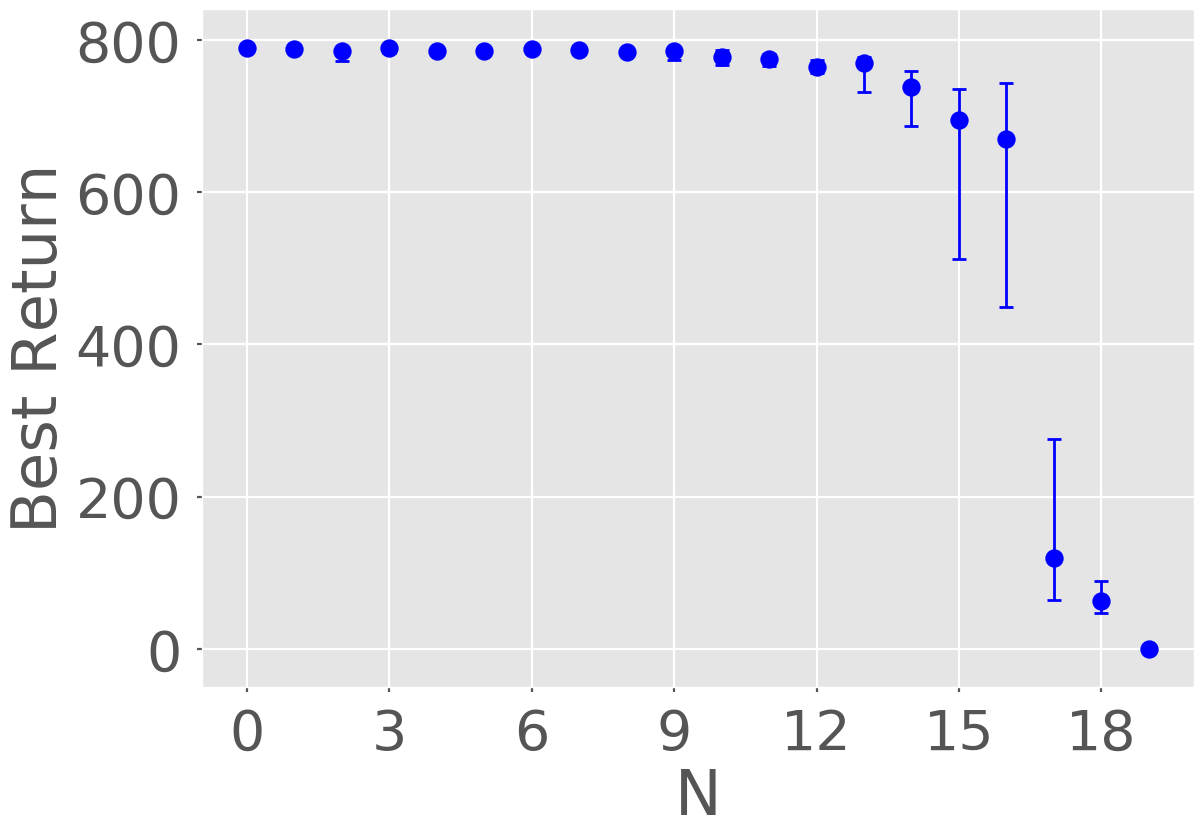}
		\subcaption{$\kappa=52$}
	\end{subfigure}
	\begin{subfigure}{.3\textwidth}
		\includegraphics[width=\textwidth]{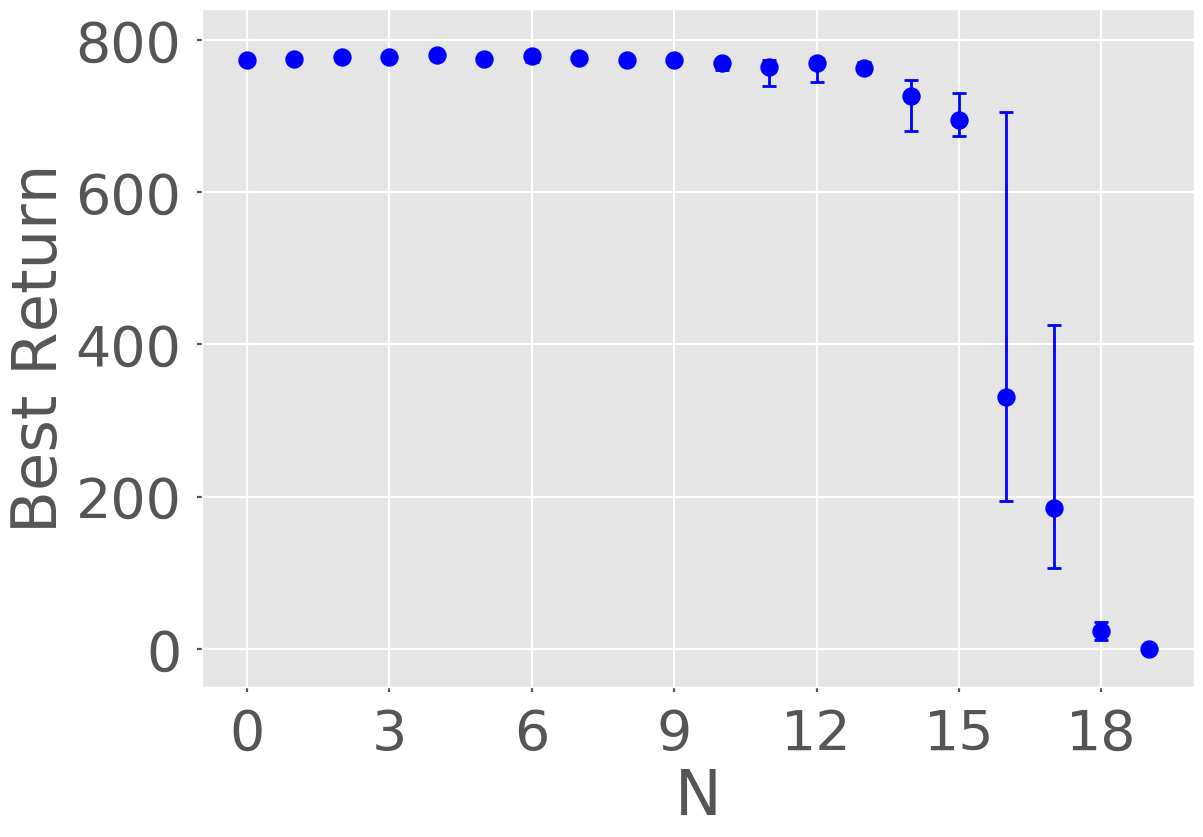}
		\subcaption{$\kappa=104$}
	\end{subfigure}
	\begin{subfigure}{.3\textwidth}
		\includegraphics[width=\textwidth]{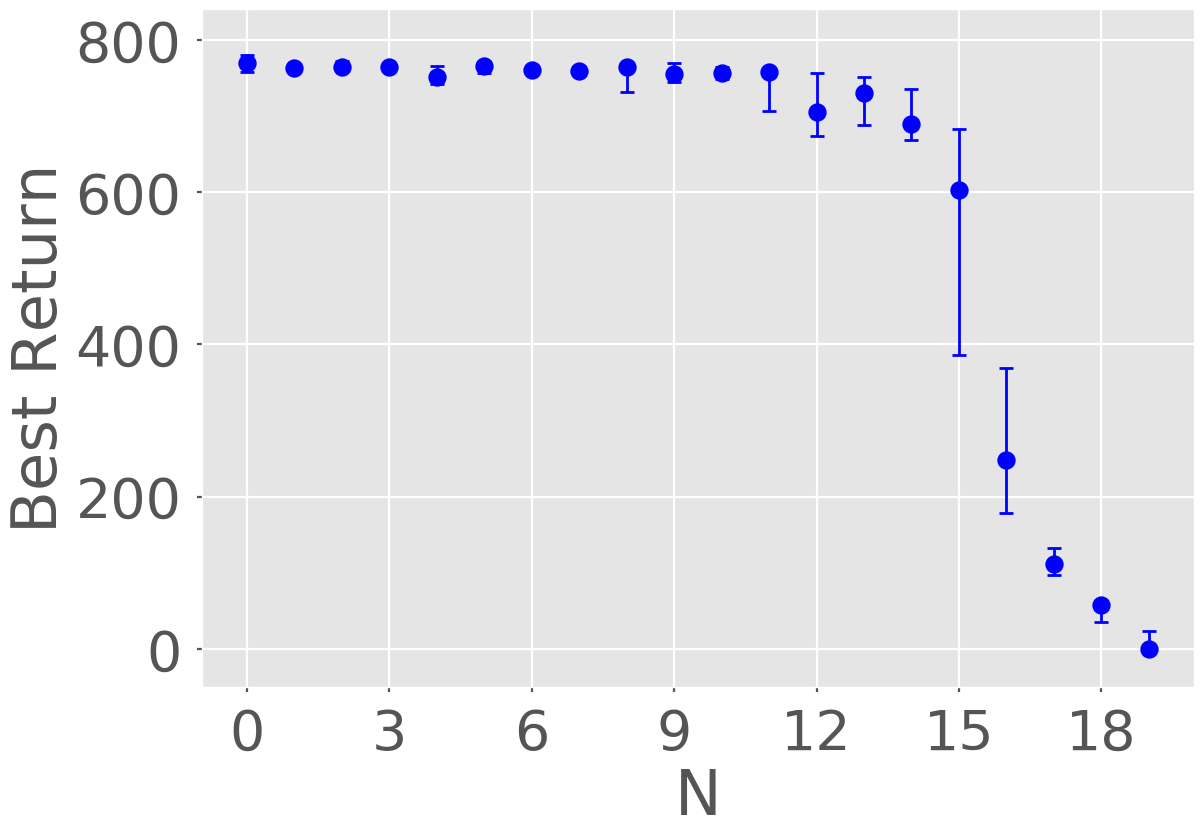}
		\subcaption{$\kappa=208$}
	\end{subfigure}
	\caption{Ablation study for $\kappa$.}
	\label{fig:ablation_k_all}
\end{figure}
\newpage

\subsubsection{Ablation study for $\eta_1$}
\begin{figure}[H]
	\centering
	\begin{subfigure}{.32\textwidth}
		\includegraphics[width=\textwidth]{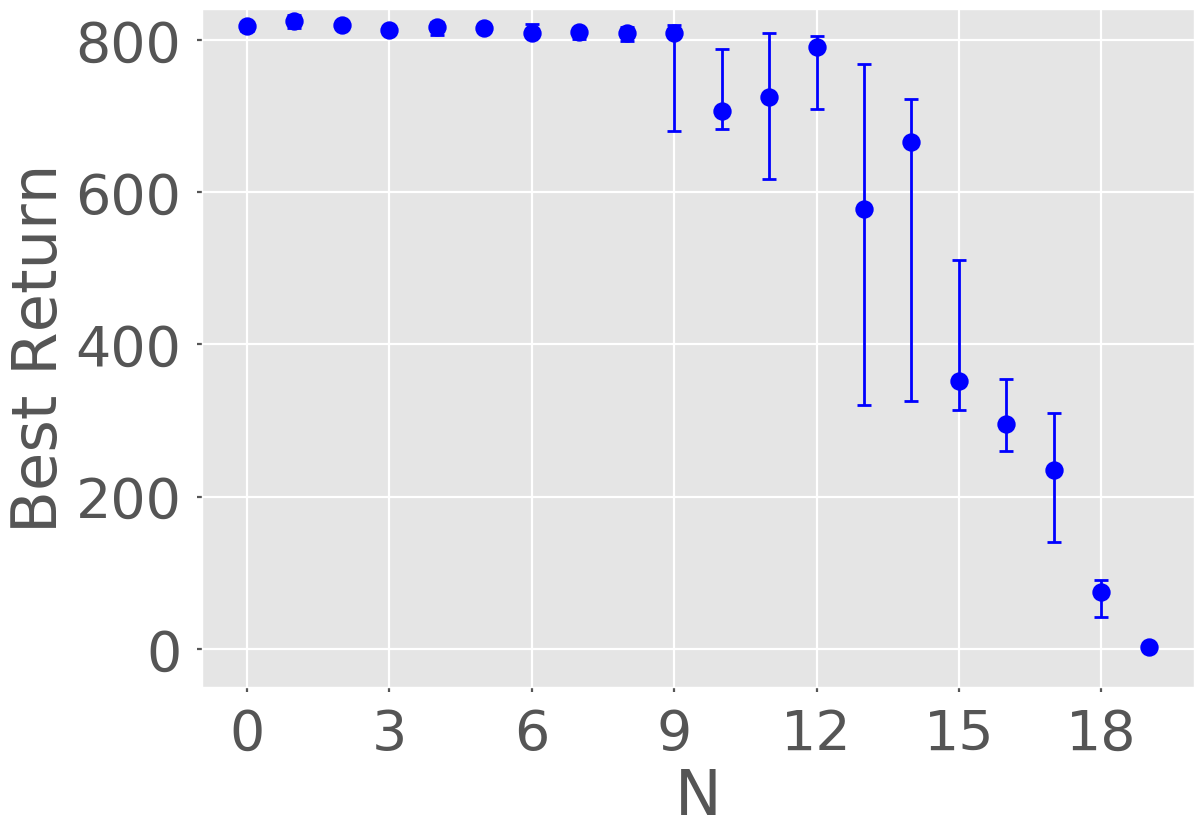}
		\subcaption{$\eta_1=0$}
	\end{subfigure}
	\begin{subfigure}{.32\textwidth}
		\includegraphics[width=\textwidth]{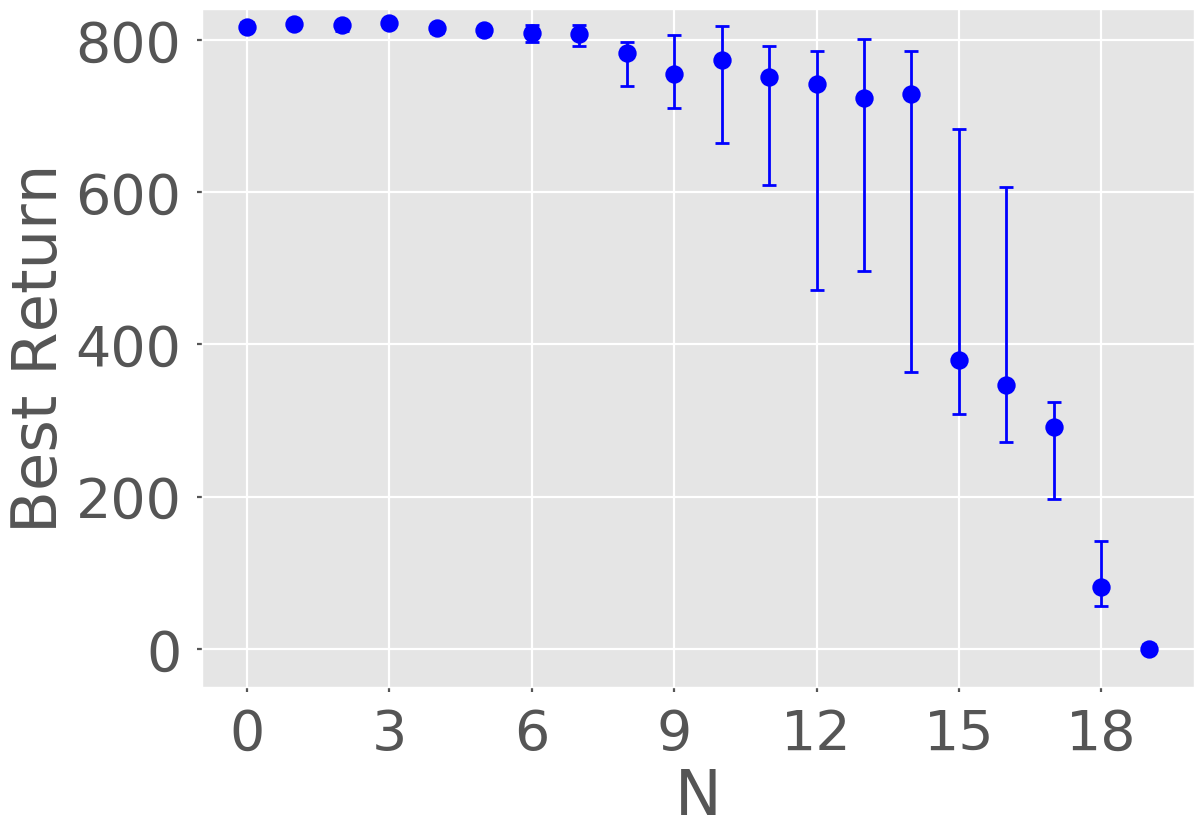}
		\subcaption{$\eta_1=0.1$}
	\end{subfigure}
	\begin{subfigure}{.32\textwidth}
		\includegraphics[width=\textwidth]{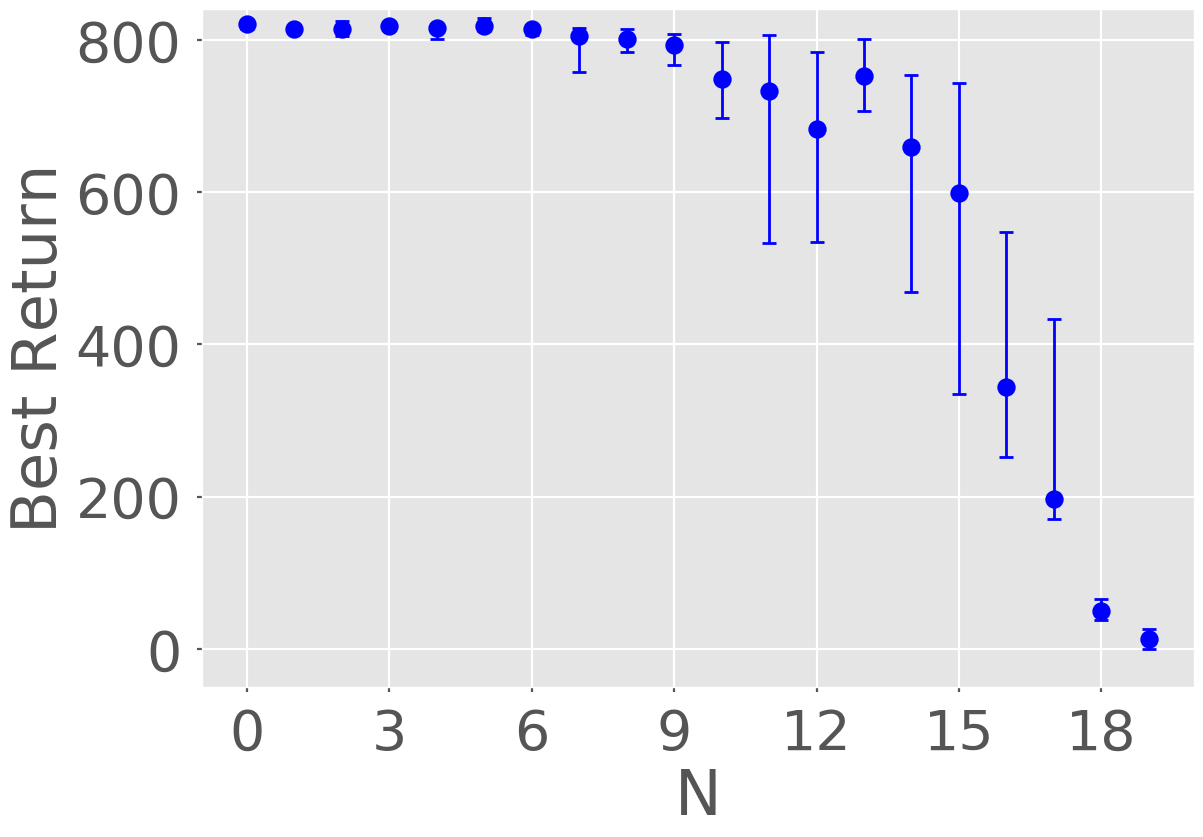}
		\subcaption{$\eta_1=0.2$}
	\end{subfigure}
	\begin{subfigure}{.32\textwidth}
		\includegraphics[width=\textwidth]{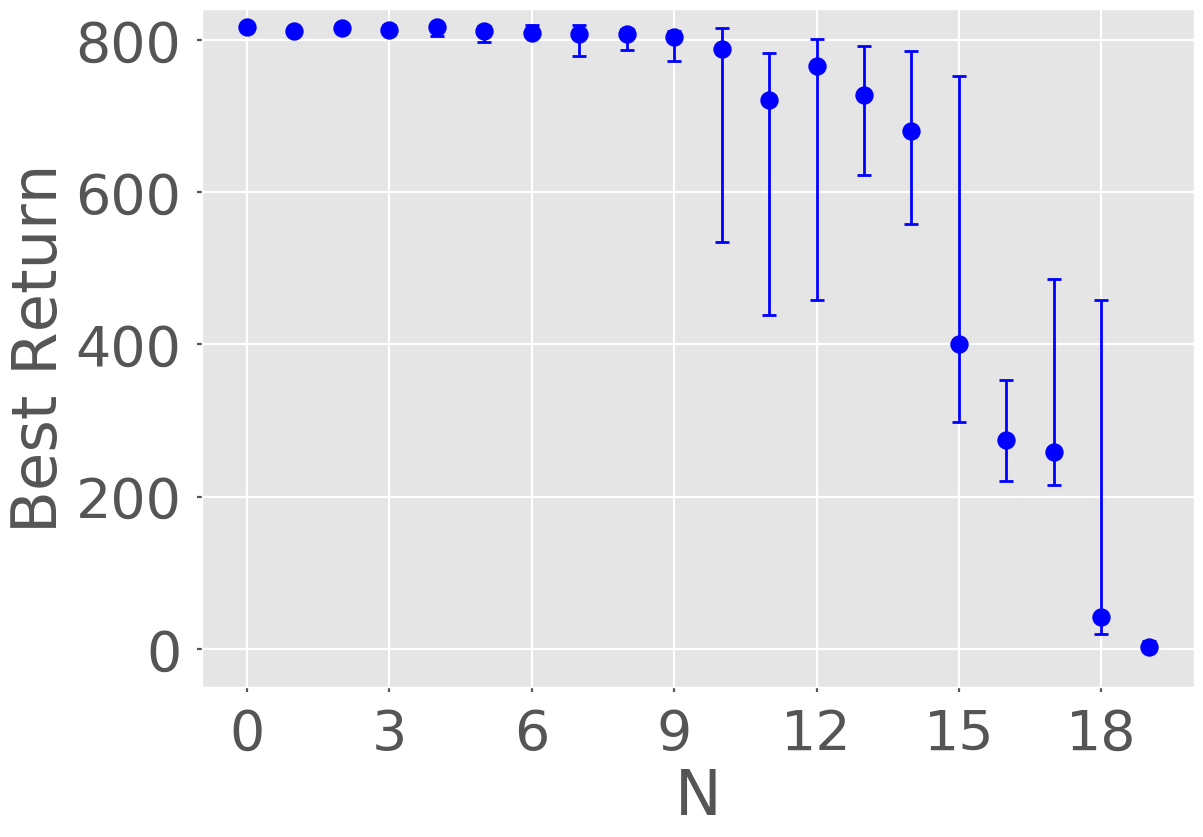}
		\subcaption{$\eta_1=0.3$}
	\end{subfigure}
	\begin{subfigure}{.32\textwidth}
		\includegraphics[width=\textwidth]{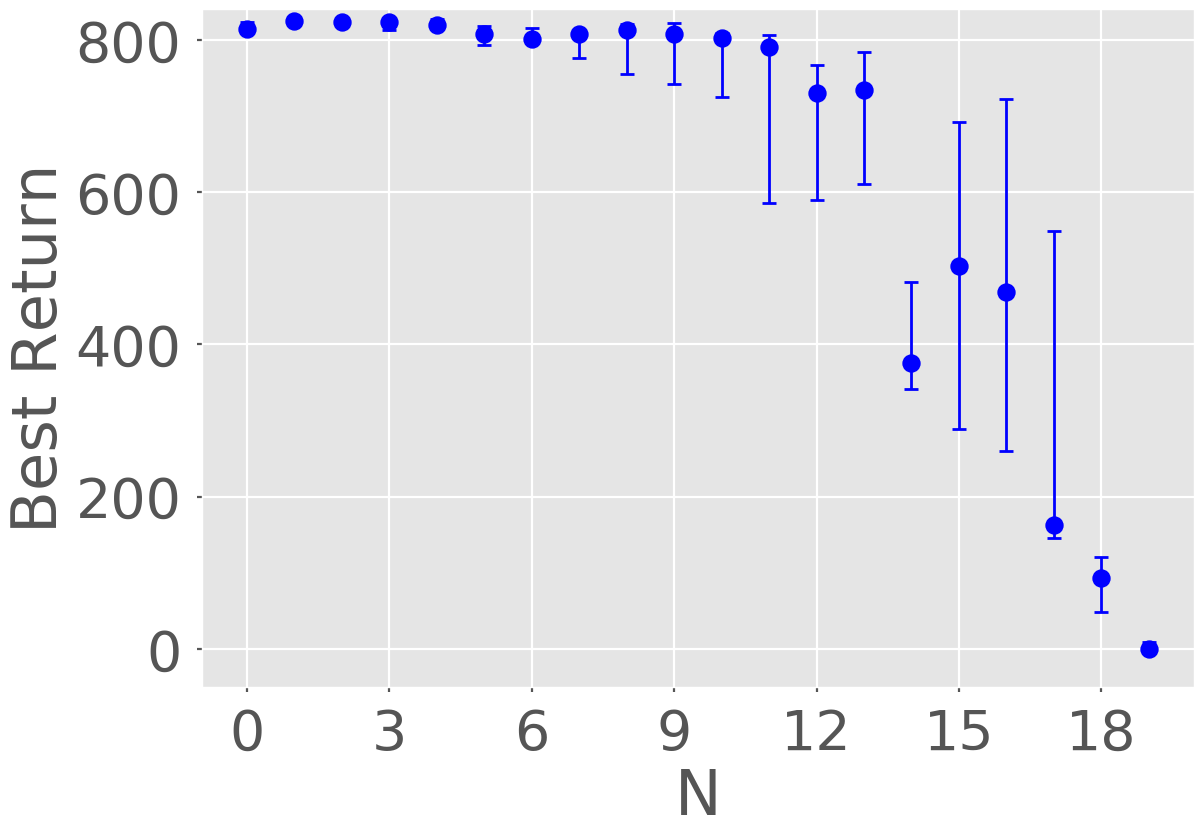}
		\subcaption{$\eta_1=0.4$}
	\end{subfigure}
	\begin{subfigure}{.32\textwidth}
		\includegraphics[width=\textwidth]{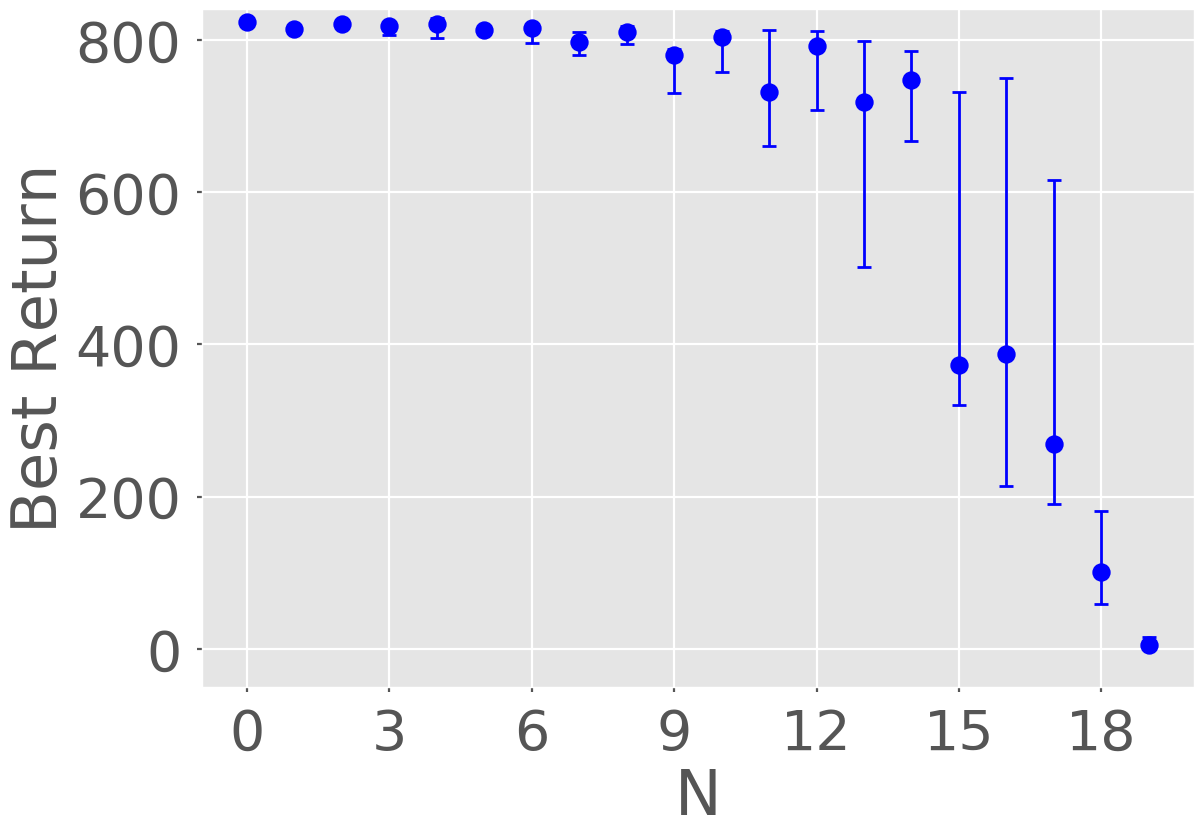}
		\subcaption{$\eta_1=0.5$}
	\end{subfigure}
	\begin{subfigure}{.32\textwidth}
		\includegraphics[width=\textwidth]{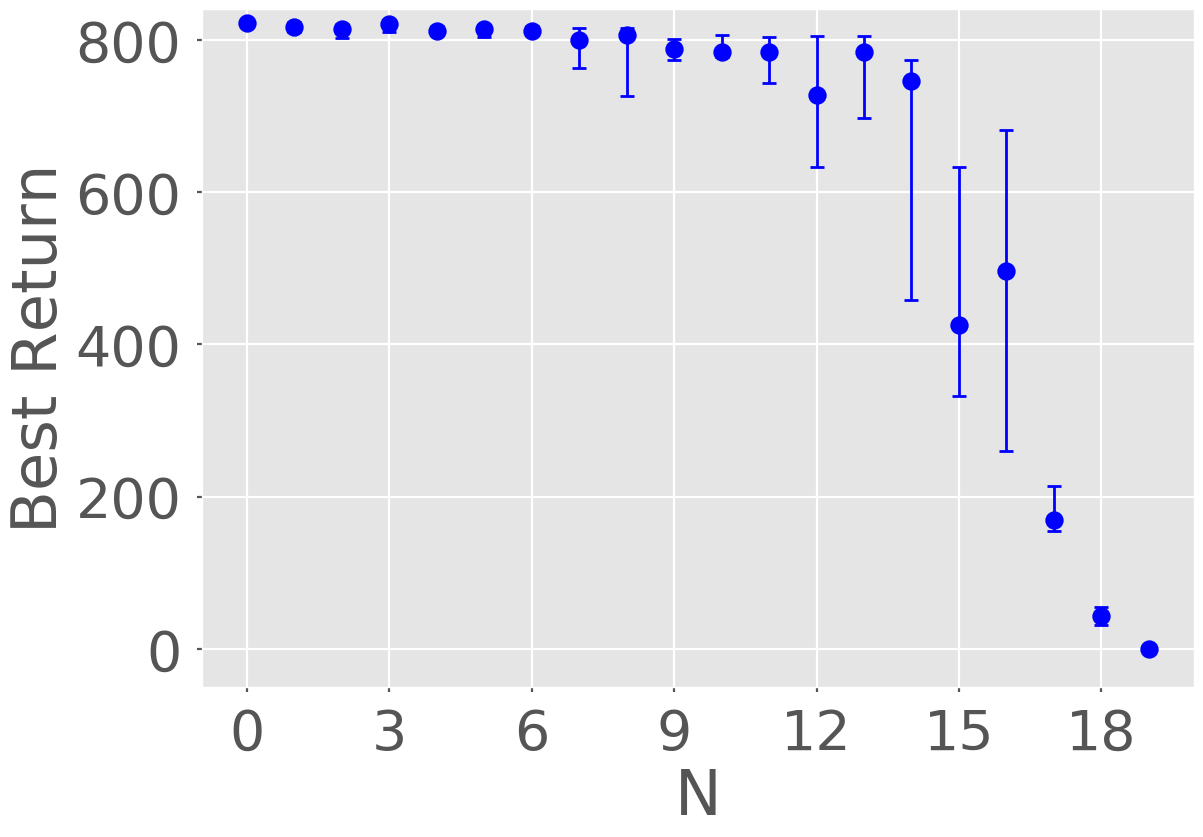}
		\subcaption{$\eta_1=0.6$}
	\end{subfigure}
	\begin{subfigure}{.32\textwidth}
		\includegraphics[width=\textwidth]{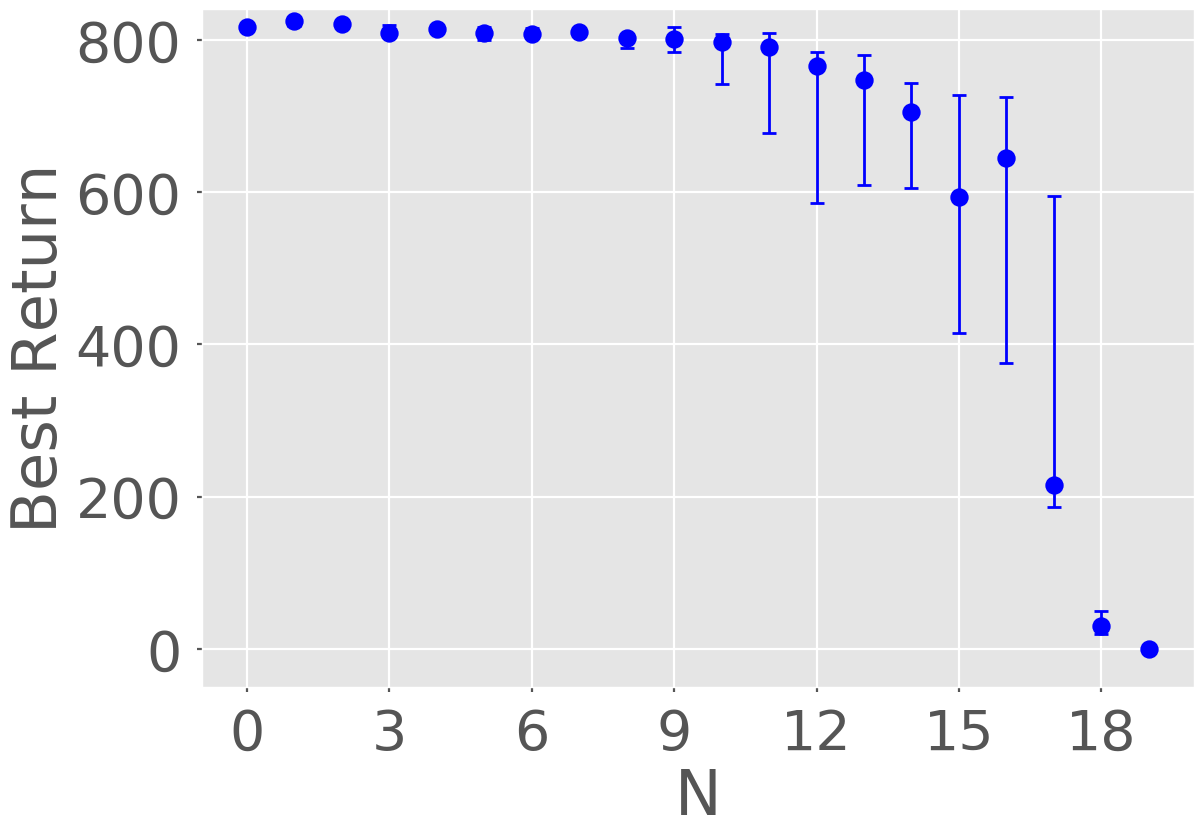}
		\subcaption{$\eta_1=0.7$}
	\end{subfigure}
	\begin{subfigure}{.32\textwidth}
		\includegraphics[width=\textwidth]{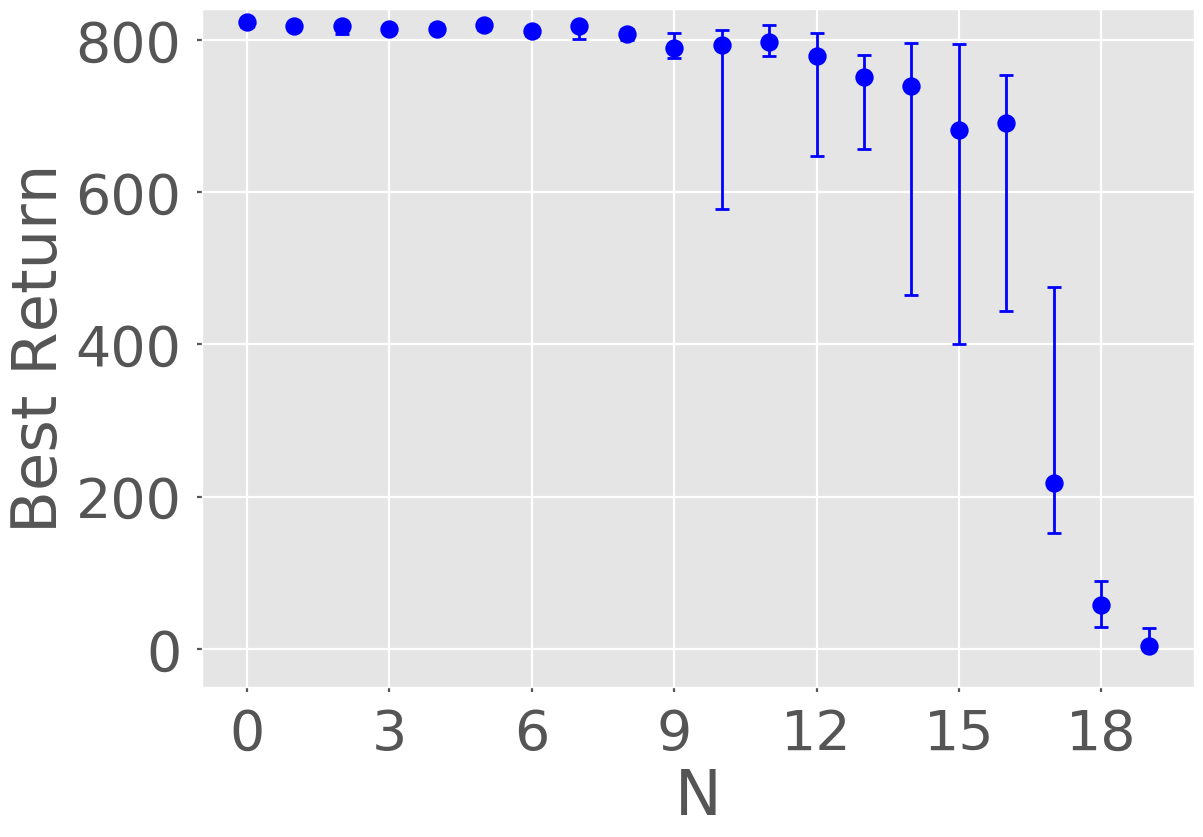}
		\subcaption{$\eta_1=0.8$}
	\end{subfigure}
	\begin{subfigure}{.32\textwidth}
		\includegraphics[width=\textwidth]{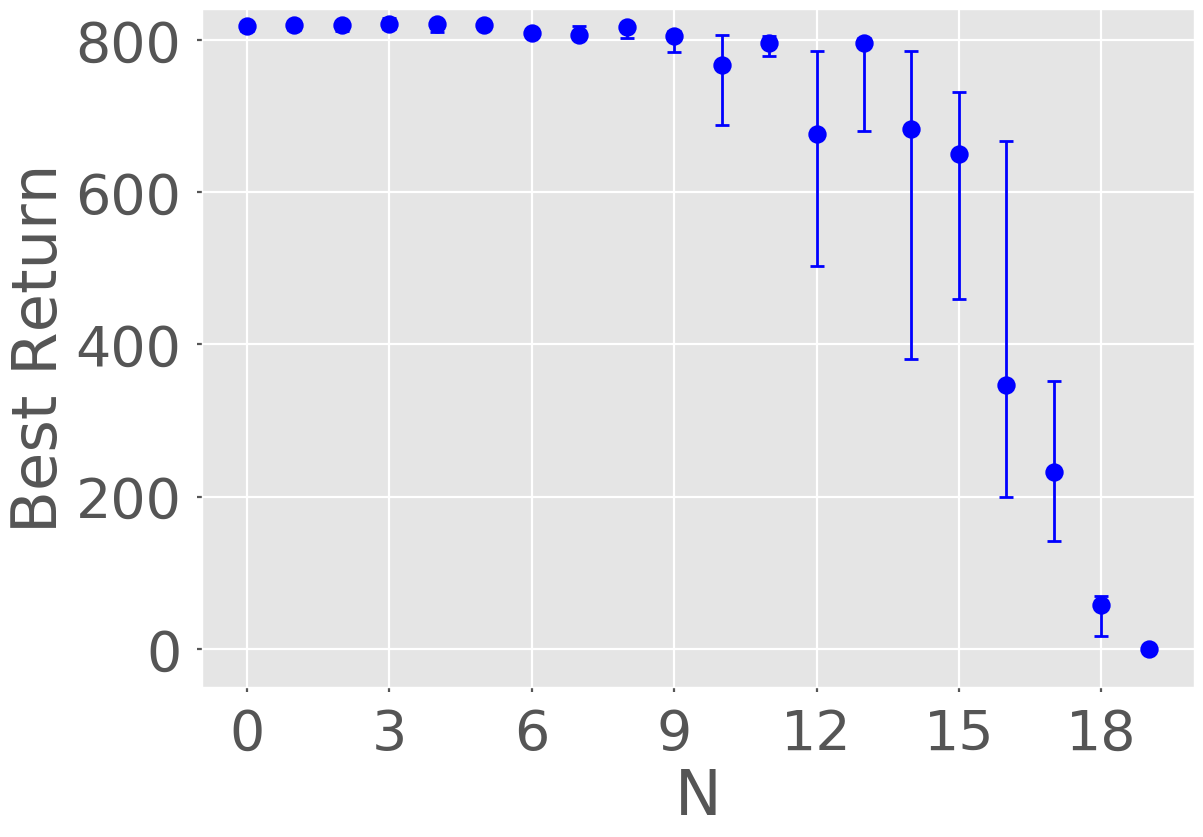}
		\subcaption{$\eta_1=0.9$}
	\end{subfigure}
	\begin{subfigure}{.32\textwidth}
		\includegraphics[width=\textwidth]{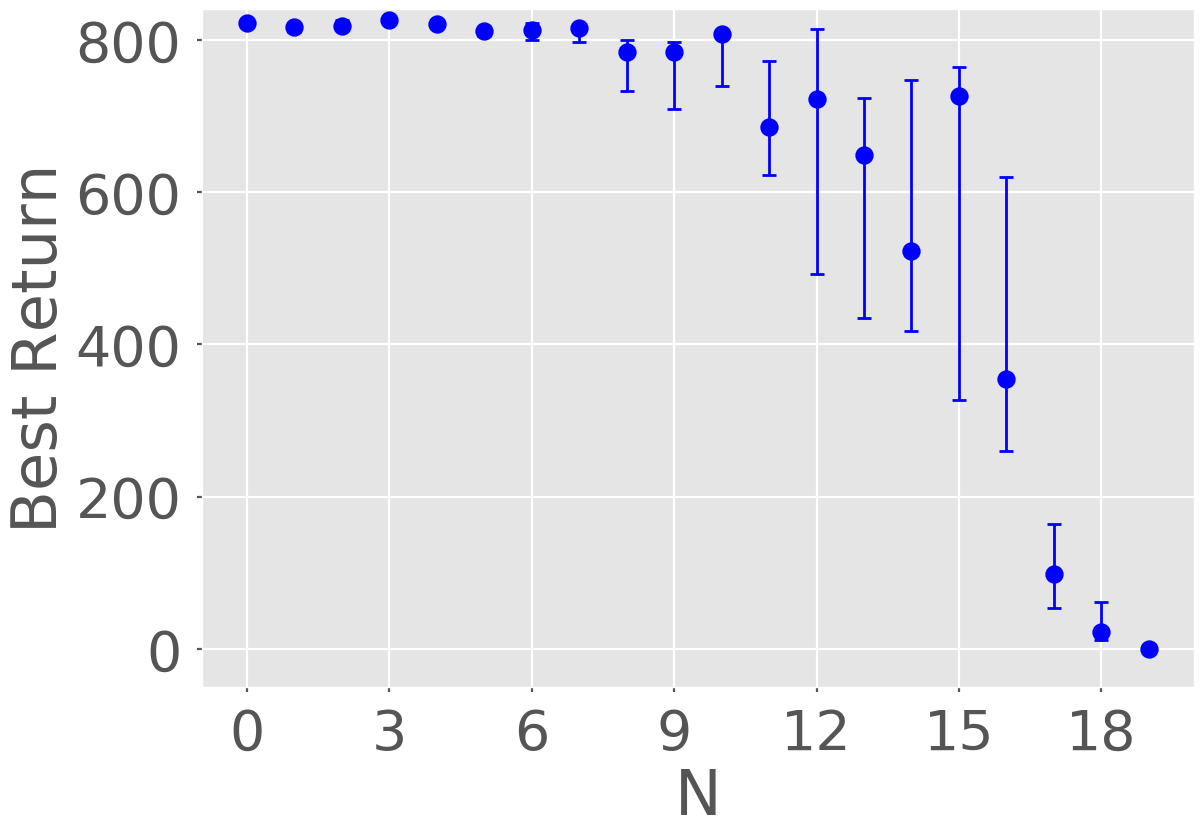}
		\subcaption{$\eta_1=1.0$}
	\end{subfigure}
	\caption{Ablation study for $\eta_1$.}
	\label{fig:ablation_eta1_all}
\end{figure}

\newpage
\subsubsection{Ablation study for $\eta_2$}
\begin{figure}[H]
	\centering
	\begin{subfigure}{.32\textwidth}
		\includegraphics[width=\textwidth]{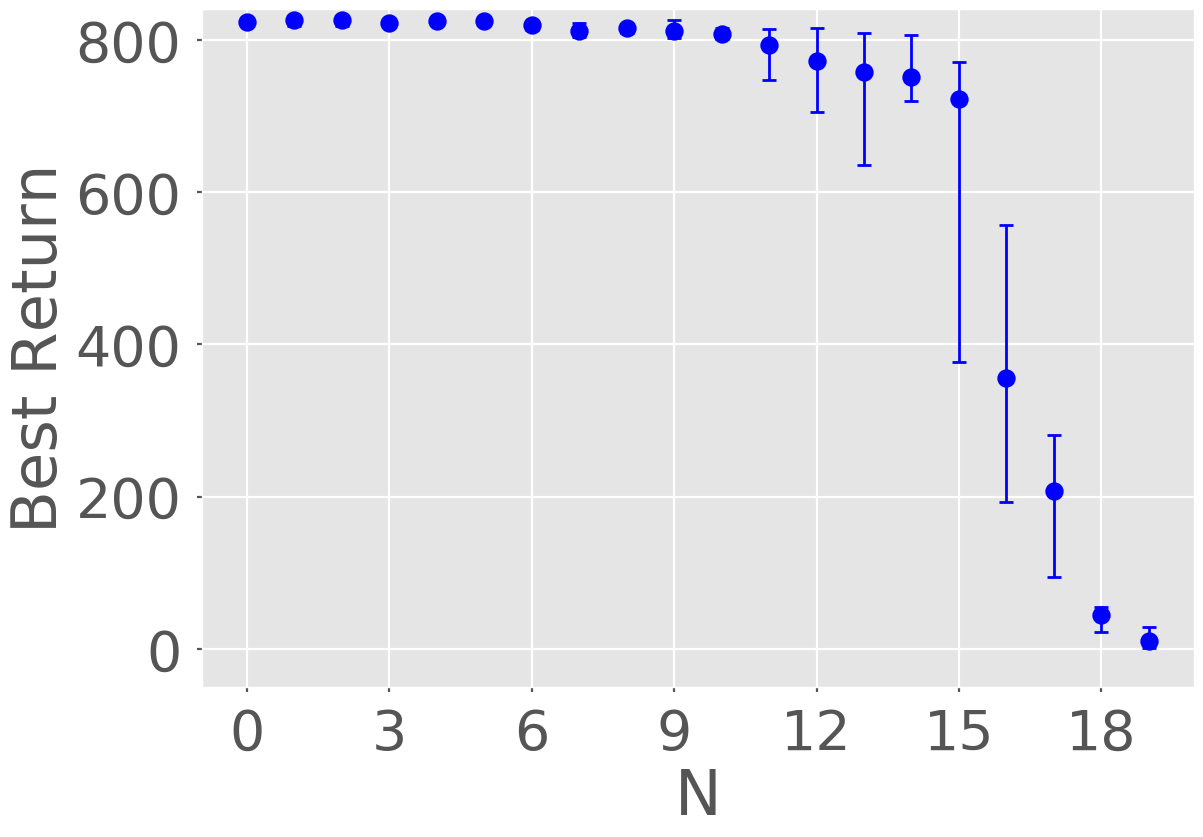}
		\subcaption{$\eta_2=0$}
		\label{fig:deep_sea_sto_N_0}
	\end{subfigure}
	\begin{subfigure}{.32\textwidth}
		\includegraphics[width=\textwidth]{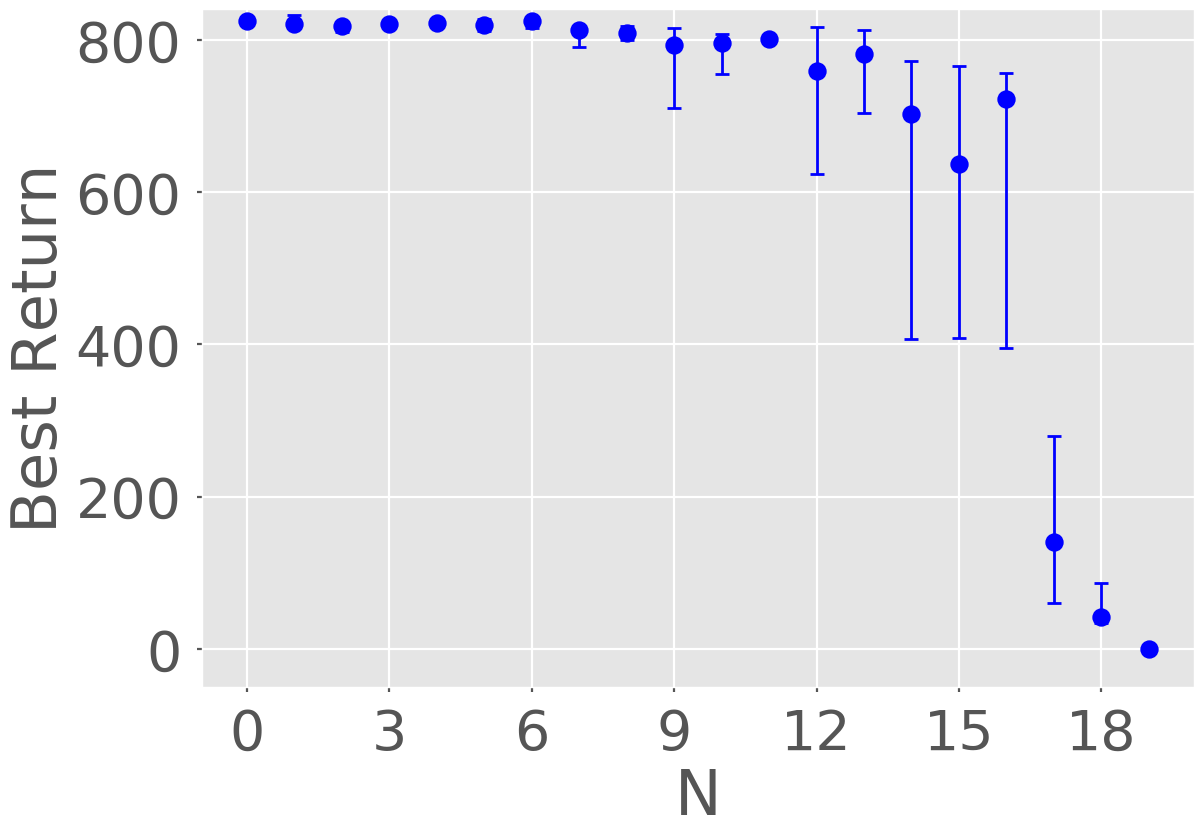}
		\subcaption{$\eta_2=0.1$}
		\label{fig:deep_sea_sto_N_1}
	\end{subfigure}
	\begin{subfigure}{.32\textwidth}
		\includegraphics[width=\textwidth]{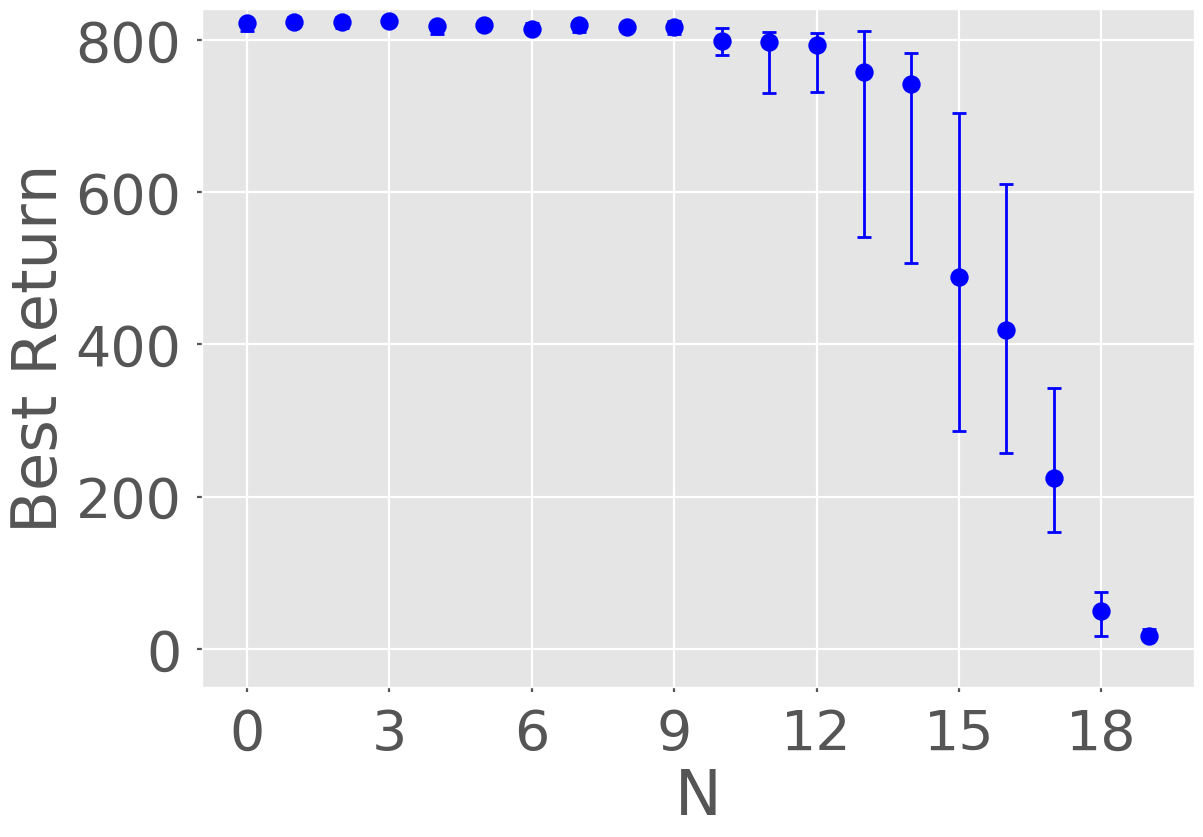}
		\subcaption{$\eta_2=0.2$}
		\label{fig:deep_sea_sto_N_2}
	\end{subfigure}
	\begin{subfigure}{.32\textwidth}
		\includegraphics[width=\textwidth]{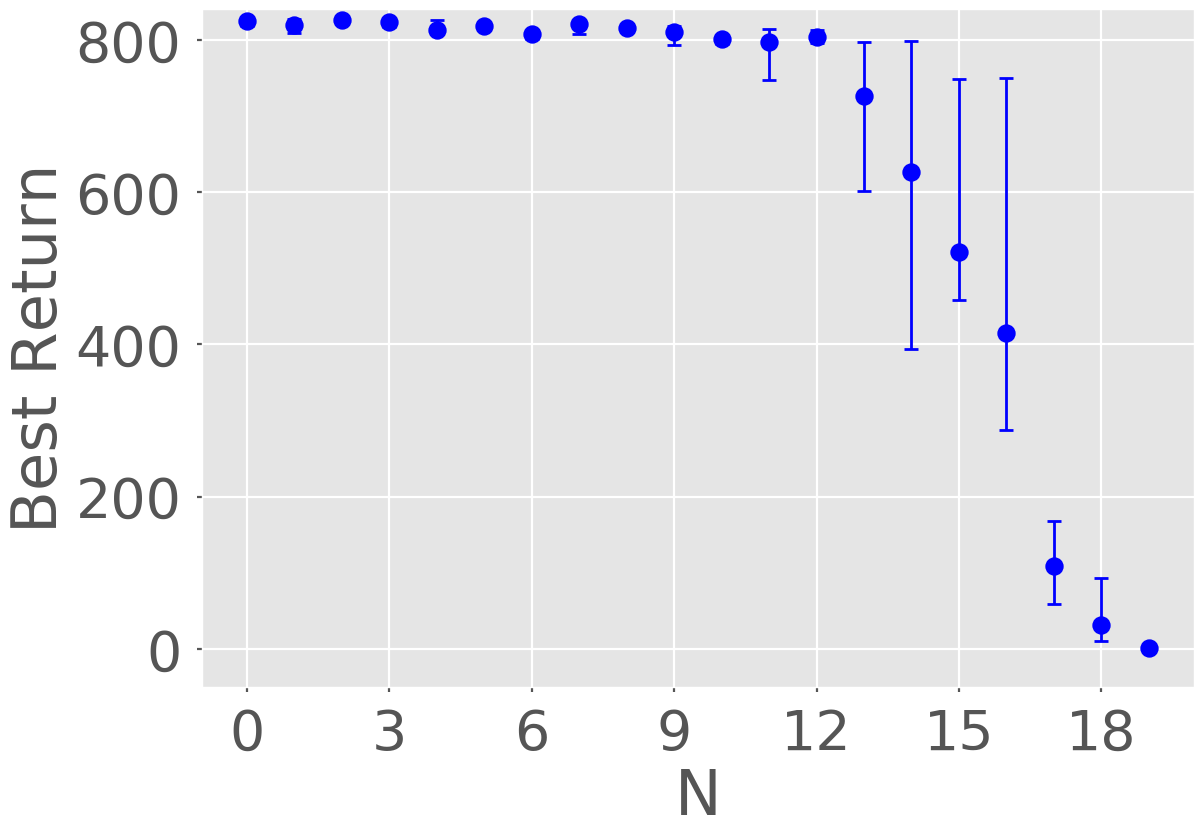}
		\subcaption{$\eta_2=0.3$}
		\label{fig:deep_sea_sto_N_3}
	\end{subfigure}
	\begin{subfigure}{.32\textwidth}
		\includegraphics[width=\textwidth]{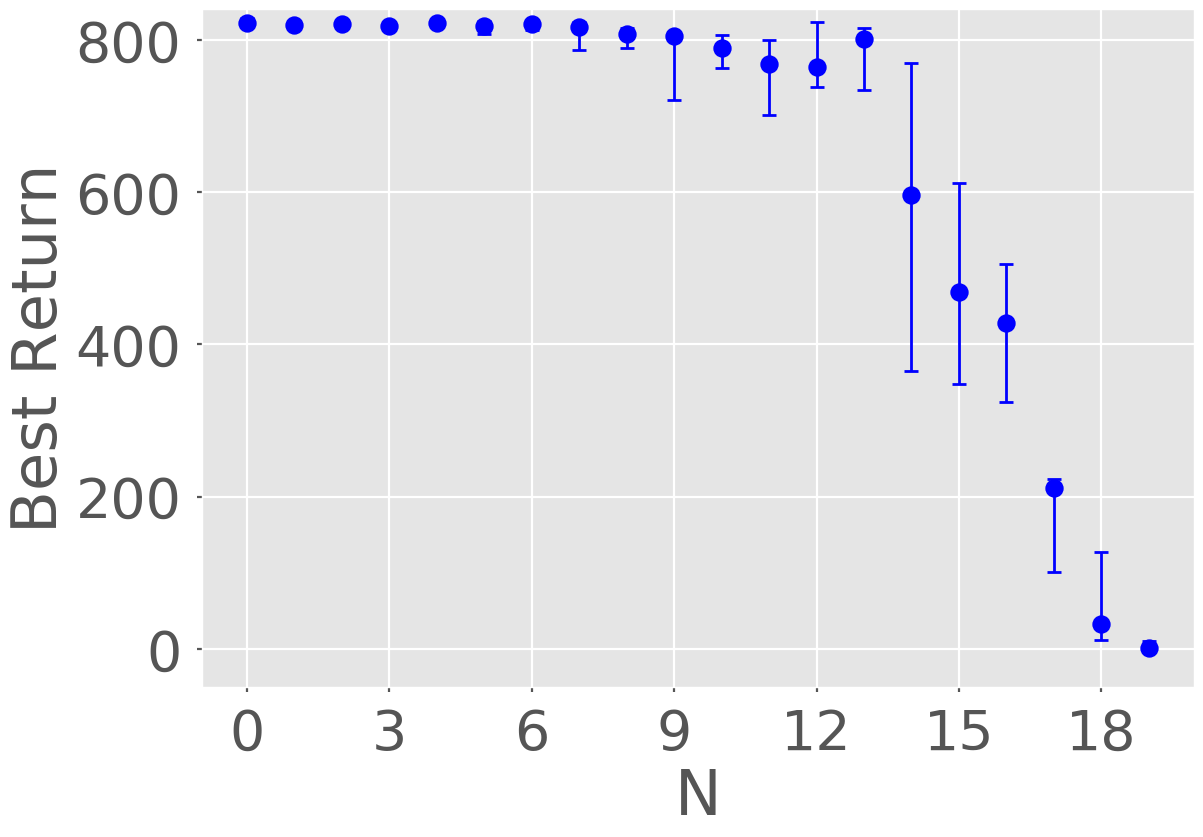}
		\subcaption{$\eta_2=0.4$}
		\label{fig:deep_sea_sto_N_4}
	\end{subfigure}
	\begin{subfigure}{.32\textwidth}
		\includegraphics[width=\textwidth]{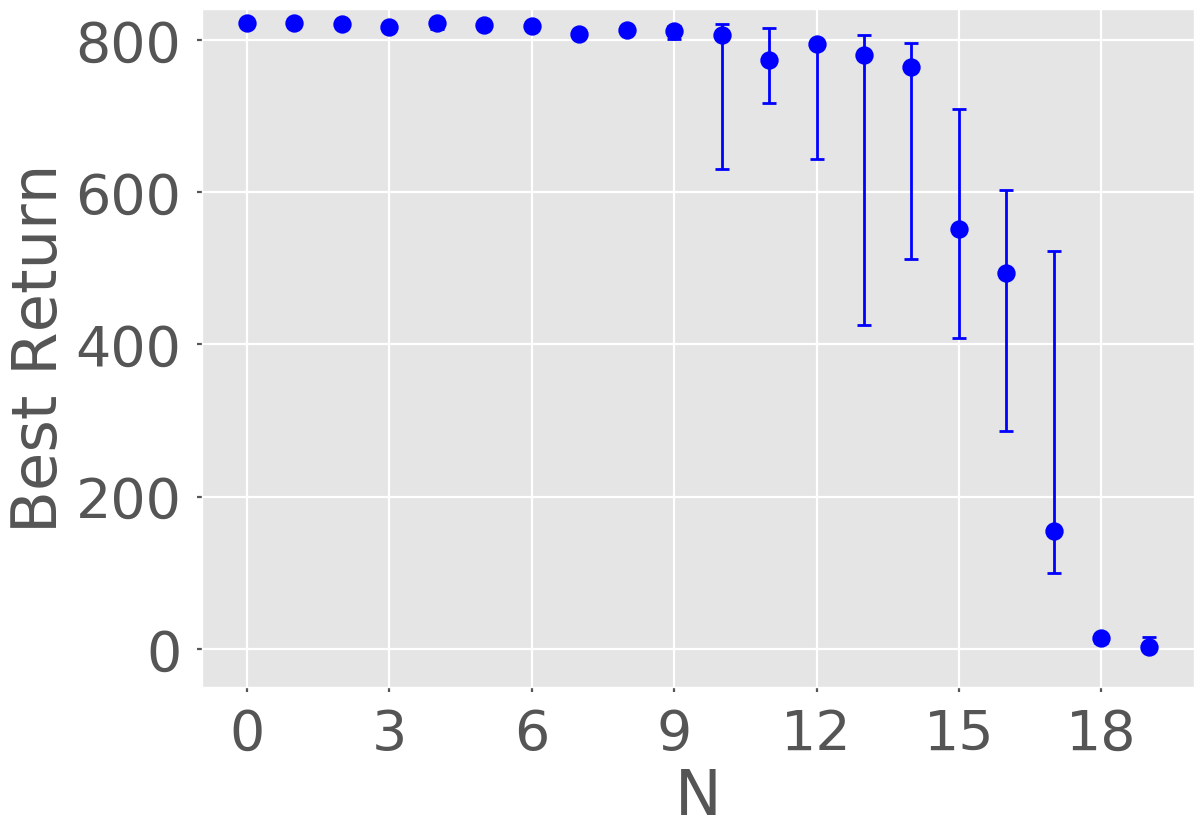}
		\subcaption{$\eta_2=0.5$}
		\label{fig:deep_sea_sto_N_5}
	\end{subfigure}
	\begin{subfigure}{.32\textwidth}
		\includegraphics[width=\textwidth]{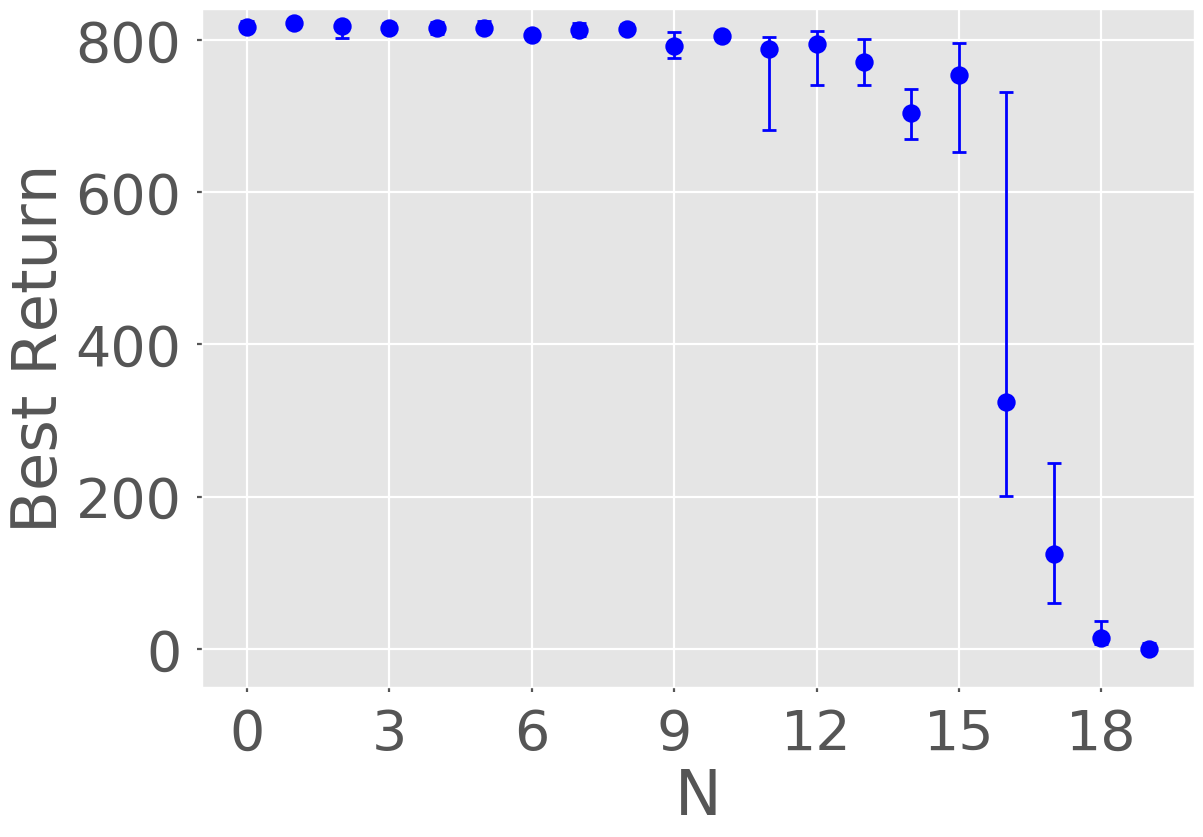}
		\subcaption{$\eta_2=0.6$}
		\label{fig:deep_sea_sto_N_6}
	\end{subfigure}
	\begin{subfigure}{.32\textwidth}
		\includegraphics[width=\textwidth]{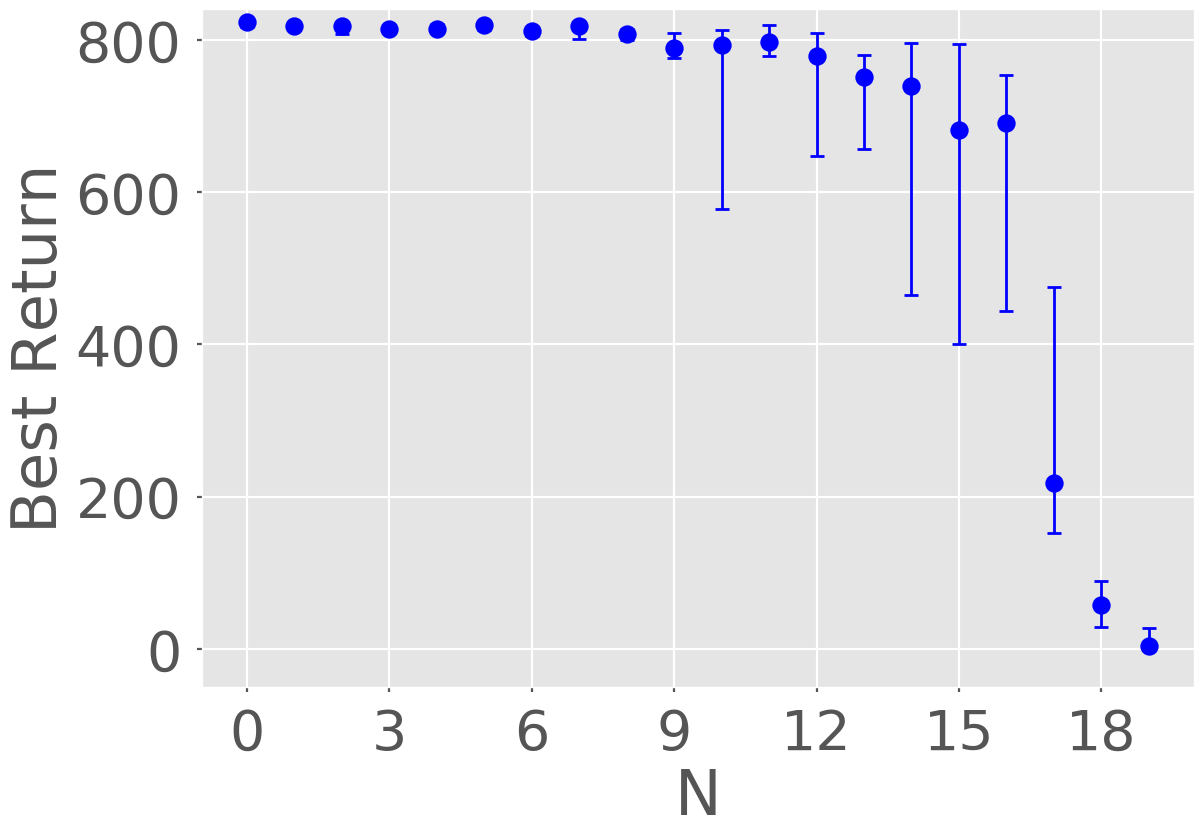}
		\subcaption{$\eta_2=0.7$}
		\label{fig:deep_sea_sto_N_7}
	\end{subfigure}
	\begin{subfigure}{.32\textwidth}
		\includegraphics[width=\textwidth]{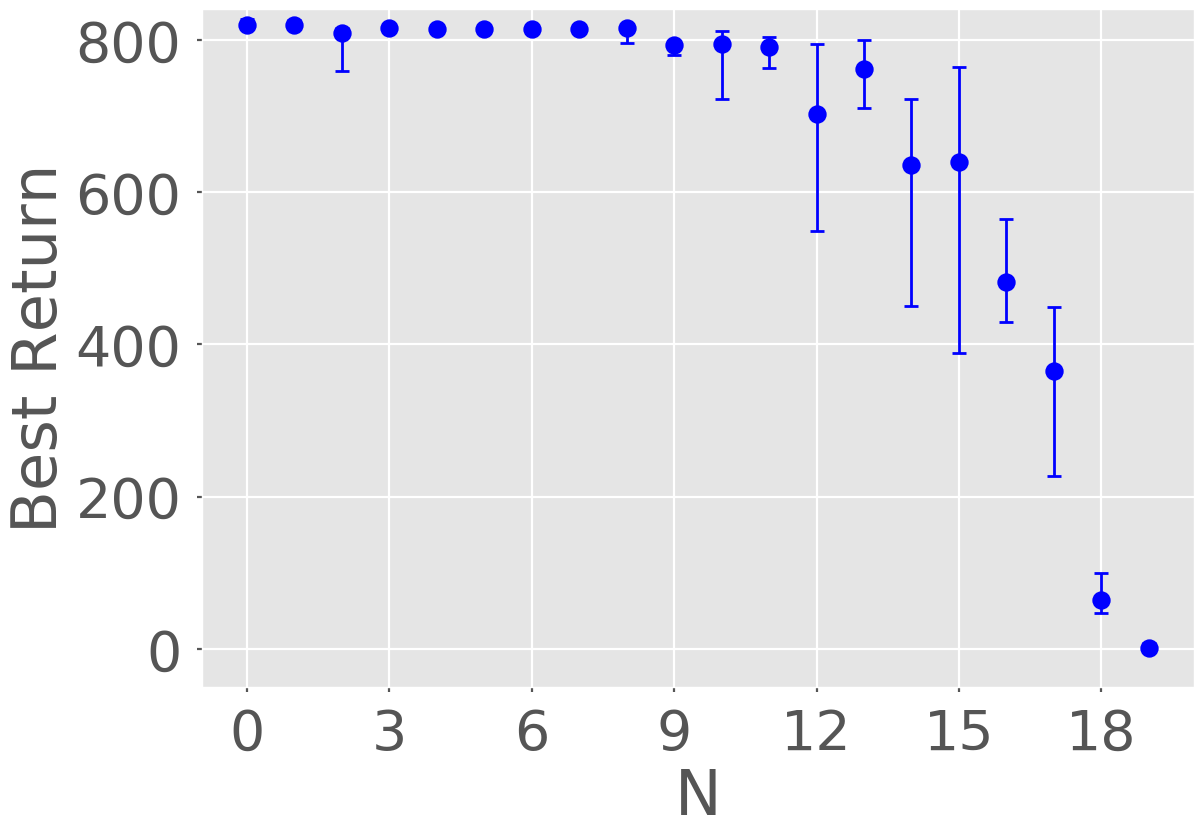}
		\subcaption{$\eta_2=0.8$}
		\label{fig:deep_sea_sto_N_8}
	\end{subfigure}
	\begin{subfigure}{.32\textwidth}
		\includegraphics[width=\textwidth]{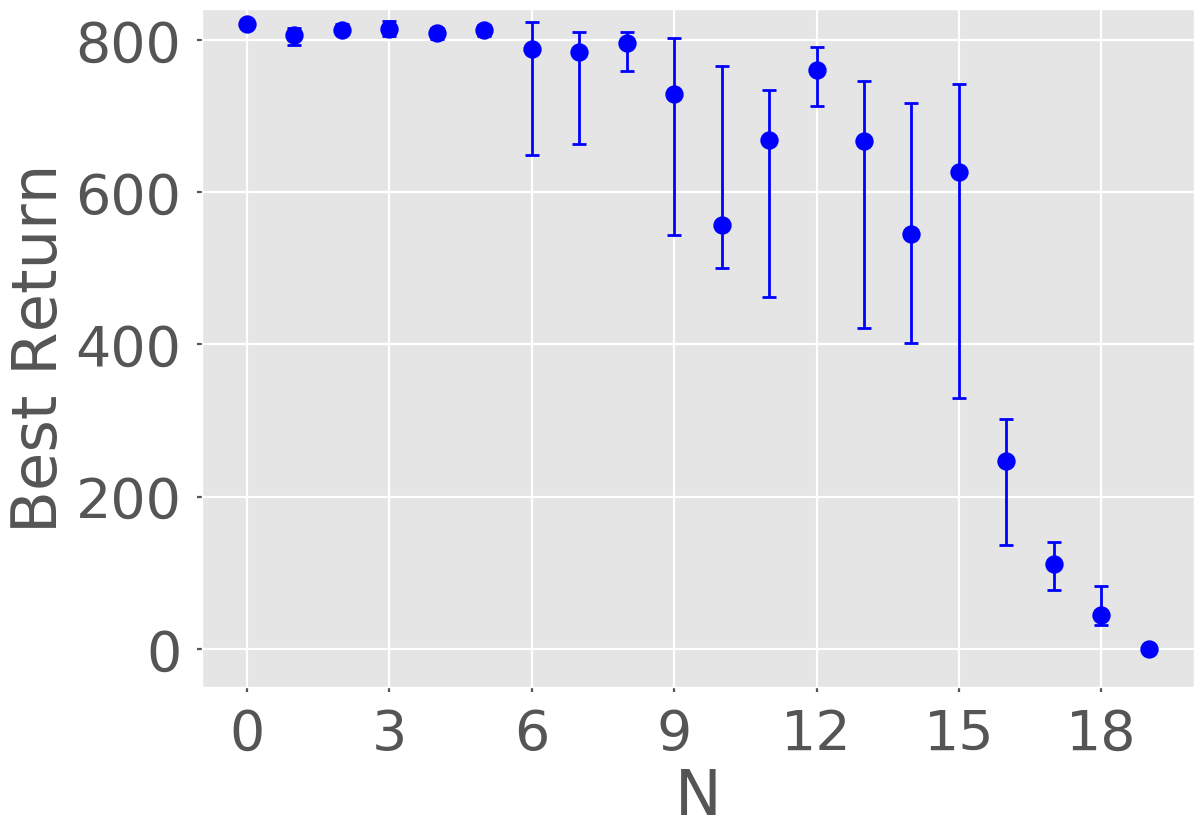}
		\subcaption{$\eta_2=0.9$}
		\label{fig:deep_sea_sto_N_9}
	\end{subfigure}
	\begin{subfigure}{.32\textwidth}
		\includegraphics[width=\textwidth]{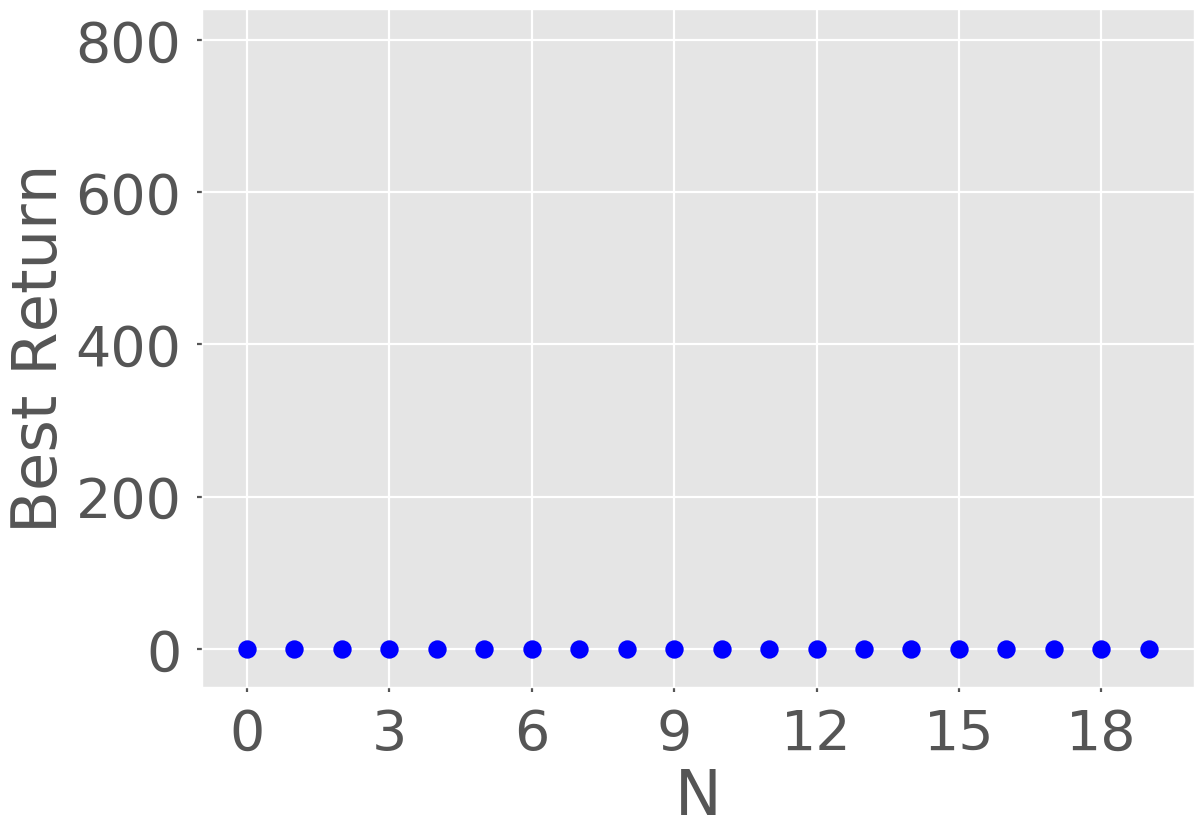}
		\subcaption{$\eta_2=1.0$}
		\label{fig:deep_sea_sto_N_10}
	\end{subfigure}
	\caption{Ablation study for $\eta_2$.}
	\label{fig:ablation_eta2_all}
\end{figure}
We clarify that figure \ref{fig:deep_sea_sto_N_10} shows no progress because the algorithm diverged for the chosen step-sizes in this ablation study.

\end{document}